\setlist[enumerate]{itemsep=0mm}
\DeclareMathOperator*{\argmin}{arg\,min}
\newtheorem{theorem}{Theorem}
\newtheorem{lemma}{Lemma}
\newtheorem{definition}{Definition}
\begin{document}

\onecolumn
\icmltitle{Adversarial Regression with Multiple Learners}



\icmlsetsymbol{equal}{*}

\begin{icmlauthorlist}
\icmlauthor{Liang Tong}{equal,to} 
\icmlauthor{Sixie Yu}{equal,to}
\icmlauthor{Scott Alfeld}{goo}
\icmlauthor{Yevgeniy Vorobeychik}{to}
\end{icmlauthorlist}

\icmlaffiliation{to}{Department of EECS, Vanderbilt University, Nashville, TN, USA}
\icmlaffiliation{goo}{Computer Science Department, Amherst College, Amherst, MA, USA}

\icmlcorrespondingauthor{Yevgeniy Vorobeychik}{yevgeniy.vorobeychik@vanderbilt.edu}

\icmlkeywords{Machine Learning, ICML}

\vskip 0.3in


\printAffiliationsAndNotice{\icmlEqualContribution} 

\begin{abstract}
Despite the considerable success enjoyed by machine learning
techniques in practice, numerous studies demonstrated that many
approaches are vulnerable to attacks.
An important class of such attacks involves
adversaries changing features at test time to cause incorrect
predictions.
Previous investigations of this problem pit a single learner against
an adversary.
However, in many situations an adversary's decision is aimed at a
collection of learners, rather than specifically targeted at each
independently.
We study the problem of adversarial linear regression with multiple
learners.
We approximate the resulting game by exhibiting an upper bound on learner
loss functions, and show that the resulting game has a unique symmetric
equilibrium.
We present an algorithm for computing this equilibrium, and show
through extensive experiments that equilibrium models are
significantly more robust than conventional regularized linear regression.
\end{abstract}

\section{Introduction}
\label{sec:introduction}

Increasing use of machine learning in adversarial settings has
motivated a series of efforts investigating the extent to which
learning approaches can be subverted by malicious parties.
An important class of such attacks involves adversaries changing their
behaviors, or features of the environment, to effect an incorrect
prediction.
Most previous efforts study this problem as an interaction between a
single learner and a single
attacker~\citep{Bruckner11,Dalvi04,li2014feature,kdd2012}.
However, in reality attackers often target a broad array of potential
victim organizations.
For example, they craft generic spam templates and generic malware, and then disseminate
these widely to maximize impact.
The resulting ecology of attack targets reflects not a single learner,
but many such learners, all making autonomous decisions about how to
detect malicious content, although these decisions often rely on
similar training datasets.

We model the resulting game as an interaction between multiple
learners, who simultaneously learn linear regression models, and an
attacker, who observes the learned models (as in white-box
attacks~\cite{laskov2014practical}), and modifies the original feature
vectors at test time in order to induce incorrect predictions.
Crucially, rather than customizing the attack to each learner (as in
typical models), the attacker chooses a single attack for \emph{all}
learners.
We term the resulting game a \emph{Multi-Learner Stackelberg Game}, to
allude to its two stages, with learners jointly acting as Stackelberg
leaders, and the attacker being the follower.
Our first contribution is the formal model of this game.
Our second contribution is to approximate this game by deriving upper
bounds on the learner loss functions.
The resulting approximation yields a game in which there always exists
a symmetric equilibrium, and this equilibrium is unique.
In addition, we prove that this unique equilibrium can be computed
by solving a convex optimization problem.
Our third contribution is to show that the equilibrium of the
approximate game is robust, both theoretically (by showing it to be
equivalent to a particular robust optimization problem), and through
extensive experiments, which demonstrate it to be much more robust to
attacks than standard regularization approaches.

\noindent{\bf Related Work } 
Both attacks on and defenses of machine learning approaches have been studied within the literature on \emph{adversarial machine learning}~\citep{Bruckner11,Dalvi04,li2014feature,kdd2012,lowd2005adversarial}.
These approaches commonly assume a single learner, and consider either the problem of finding evasions against a fixed model~\citep{Dalvi04,lowd2005adversarial,laskov2014practical}, or algorithmic approaches for making learning more robust to attacks~\citep{aisec2016,Bruckner11,Dalvi04,li2014feature,Li15b}.
Most of these efforts deal specifically with classification learning, but several consider adversarial tampering with regression models~\citep{Alfeld16,Grosshans13}, although still within a single-learner and single-attacker framework.
\citet{Stevens13} study the algorithmic problem of attacking multiple linear classifiers, but did not consider the associated game among classifiers.

Our work also has a connection to the literature on security games with multiple defenders~\citep{Laszka16,Smith17,Vorobeychik11}.
The key distinction with our paper is that in multi-learner games, the learner strategy space is the space of possible models in a given model class, whereas prior research has focused on significantly simpler strategies (such as protecting a finite collection of attack targets).


\section{Model}
\label{sec:model}

We investigate the interactions between a collection of learners
$\mathcal{N}=\{1,2,...,n\}$ and an attacker in regression problems,
modeled as a \emph{Multi-Learner Stackelberg Game (MLSG)}.
At the high level, this game involves two stages: first, all learners
choose (train) their models from data, and second, the attacker
transforms test data (such as features of the environment, at
prediction time) to achieve malicious goals.
Below, we first formalize the model of the learners and the attacker,
and then formally describe the full game.

\subsection{Modeling the Players}
\label{subsec:players}

At training time, a set of training data $(\mathbf{X}, \mathbf{y})$ is drawn from an unknown distribution $\mathcal{D}$.
$\mathbf{X} \in \mathbb{R}^{m \times d}$ is the training sample and $\mathbf{y} \in \mathbb{R}^{m \times 1}$ is a vector of values of each data in $\mathbf{X}$.
We let $\mathbf{x}_j \in \mathbb{R}^{d \times 1}$ denote the $j$th instance in the training sample, associated with a corresponding value $y_j \in \mathbb{R}$ from $\mathbf{y}$.
Hence, $\mathbf{X} = [\mathbf{x}_1,...,\mathbf{x}_m]^\top$ and $\mathbf{y} = [y_1, y_2, ... , y_m]^\top$. 
On the other hand, test data can be generated either from
$\mathcal{D}$, the same distribution as the training data, or from
$\mathcal{D}^{'}$, a modification of $\mathcal{D}$ generated by an
attacker. 
The nature of such malicious modifications is described below.
We let $\beta\ (0 \leq \beta \leq 1)$ represent the probability that a
test instance is drawn from $\mathcal{D}^{'}$ (i.e., the malicious distribution), and $1-\beta$ be the probability that it is generated from $\mathcal{D}$.

The action of the $i$th learner is to select a $d \times 1$ vector $\bm{\theta}_i$ as the parameter of the linear regression function $\hat{\mathbf{y}}_i = \mathbf{X}\bm{\theta}_i$, where $\hat{\mathbf{y}}_i$ is the predicted values for data $\mathbf{X}$. 
The expected cost function of the $i$th learner at test time is then
\begin{equation}
\label{eq:learner_true_cost}
c_i(\bm{\theta}_i,\mathcal{D}^{'}) = \beta\mathbb{E}_{(\mathbf{X}^{'},\mathbf{y})\sim\mathcal{D}^{'}}[\ell(\mathbf{X}^{'}\bm{\theta}_i,\mathbf{y})] + (1-\beta)\mathbb{E}_{(\mathbf{X},\mathbf{y})\sim\mathcal{D}}[\ell(\mathbf{X}\bm{\theta}_i,\mathbf{y})].
\end{equation}
where $\ell(\hat{\mathbf{y}},\mathbf{y}) = ||\hat{\mathbf{y}}-\mathbf{y}||_2^2$.
That is, the cost function of a learner $i$ is a combination of its expected cost from both the attacker and the honest source. 

Every instance $(\mathbf{x}, y)$ generated according
to $\mathcal{D}$ is, with probability $\beta$, maliciously modified by the attacker into another,
$(\mathbf{x}', y)$, as follows.
We assume that the attacker has an instance-specific target
$z(\mathbf{x})$, and wishes that the prediction made by each learner
$i$ on the modified instance, $\hat{y} = \bm{\theta}_i^\top\mathbf{x}^{'}$, is close to this
target.
We measure this objective for the attacker by
$\ell(\hat{\mathbf{y}}',\mathbf{z}) =
||\hat{\mathbf{y}}'-\mathbf{z}||_2^2$ for a vector of predicted and
target values $\hat{\mathbf{y}}'$ and $\hat{\mathbf{z}}$,
respectively.
In addition, the attacker incurs a cost of transforming a distribution
$\mathcal{D}$ into $\mathcal{D}^{'}$, denoted by $R(\mathcal{D}^{'},\mathcal{D})$. 


After a dataset $(\mathbf{X}^{'}, \mathbf{y})$ is generated in this way by the
attacker, it is used simultaneously against all the learners.
This is natural in most real attacks: for example, spam templates are
commonly generated to be used broadly, against many individuals and
organizations, and, similarly, malware executables are often produced to be
generally effective, rather than custom made for each target.
The expected cost function of the attacker is then a sum of its total
expected cost for all learners plus the cost of transforming $\mathcal{D}$ into $\mathcal{D}^{'}$ with coefficient $\lambda>0$:
\begin{equation}
\label{eq:attacker_true_cost}
c_a(\{\bm{\theta}_i\}_{i=1}^n, \mathcal{D}^{'}) = \sum_{i=1}^n \mathbb{E}_{(\mathbf{X}^{'}, \mathbf{y})\sim\mathcal{D}^{'}}[\ell(\mathbf{X}^{'}\bm{\theta}_i,\mathbf{z})] + \lambda R(\mathcal{D}^{'},\mathcal{D}).
\end{equation}

As is typical, we estimate the cost functions of the learners and
the attacker using training data $(\mathbf{X}, \mathbf{y})$, which is
also used to simulate attacks.
Consequently, the cost functions of each learner and the attacker
are estimated by
\begin{equation}
\label{eq:learner_cost}
c_i(\bm{\theta}_i, \mathbf{X}^{'}) = \beta\ell(\mathbf{X}^{'}\bm{\theta}_i, \mathbf{y})+(1-\beta)\ell(\mathbf{X}\bm{\theta}_i, \mathbf{y})
\end{equation}
and
\begin{equation}
\label{eq:attacker_cost}
c_a(\{\bm{\theta}_i\}_{i=1}^n, \mathbf{X}^{'}) = \sum_{i=1}^n \ell(\mathbf{X}^{'}\bm{\theta}_i, \mathbf{z}) +\lambda R(\mathbf{X}^{'},\mathbf{X})
\end{equation}
where the attacker's modification cost is measured by
$R(\mathbf{X}^{'},\mathbf{X}) = ||\mathbf{X}^{'}-\mathbf{X}||_F^2$,
the squared Frobenius norm.

\subsection{The Multi-Learner Stackerlberg Game}
\label{subsec:game}
We are now ready to formally define the game between the $n$ learners
and the attacker.
The MLSG has two stages:
in the first stage, learners simultaneously select their model
parameters $\bm{\theta}_i$, and
in the second stage, the attacker makes its decision (manipulating
$\mathbf{X}^{'}$) after observing the learners' model choices $\{\bm{\theta}\}_{i=1}^n$.
We assume that the proposed game satisfies the following assumptions:

\begin{enumerate}
\item Players have complete information about parameters $\beta$ (common to all learners) and $\lambda$. This is a strong assumption, and we relax it in our experimental evaluation (Section \ref{sec:experiments}), providing guidance on how to deal with uncertainty about these parameters.
\item Each learner has the same action (model parameter) space $\bm{\Theta} \subseteq
  \mathbb{R}^{d\times 1}$ which is nonempty, compact and convex. The action space of the attacker is $\mathbb{R}^{m \times d}$.
\item The columns of the training data $\mathbf{X}$ are linearly independent.
\end{enumerate}


We use \emph{Multi-Learner Stackelberg Equilibrium} (MLSE) as the
solution for the MLSG, defined as follows.
\begin{definition}[Multi-Learner Stackelberg Equilibrium (MLSE)] An action profile $(\{\bm{\theta}_i^{*}\}_{i=1}^n, \mathbf{X}^{*})$ is an MLSE if it satisfies 
\begin{equation}
\label{eq:mlse}
\begin{split}
  \bm{\theta}_i^{*} &= \argmin_{\bm{\theta}_i \in \bm{\Theta}} c_i(\bm{\theta}_i, \mathbf{X}^{*}(\bm{\theta})), \forall i \in \mathcal{N} \\
  &\texttt{s.t. } \mathbf{X}^{*}(\bm{\theta}) = \argmin_{\mathbf{X}^{'} \in \mathbb{R}^{m \times d}} c_a(\{\bm{\theta}_i\}_{i=1}^n, \mathbf{X}^{'}).
\end{split}
\end{equation}
where $\bm{\theta}=\{\bm{\theta}_i\}_{i=1}^n$ constitutes the joint actions of the learners.
\label{def:mlse}
\end{definition}
At the high level, the MLSE is a blend between a Nash equilibrium
(among all learners) and a Stackelberg equilibrium (between the learners
and the attacker), in which the
attacker plays a best response to the \emph{observed} models
$\bm{\theta}$ chosen by the learners, and given this behavior by the
attacker, all learners' models $\bm{\theta}_i$ are mutually optimal.

The following lemma characterizes the best response of the attacker to
arbitrary model choices $\{\bm{\theta}_i\}_{i=1}^n$ by the learners.
\begin{lemma}[Best Response of the Attacker] Given $\{\bm{\theta}_i\}_{i=1}^n$, the best response of the attacker is 
\begin{equation}
\label{eq:best_response}
\mathbf{X}^{*} = (\lambda\mathbf{X}+\mathbf{z}\sum_{i=1}^n \bm{\theta}_i^\top)(\lambda \mathbf{I}+\sum_{i=1}^n\bm{\theta}_i\bm{\theta}_i^\top)^{-1}.
\end{equation} 
\label{lemma:best_response}
\end{lemma}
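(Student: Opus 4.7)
The plan is to observe that the attacker's estimated cost
$c_a(\{\bm{\theta}_i\}_{i=1}^n, \mathbf{X}')=\sum_{i=1}^n \|\mathbf{X}'\bm{\theta}_i-\mathbf{z}\|_2^2 + \lambda\|\mathbf{X}'-\mathbf{X}\|_F^2$
is a convex quadratic in $\mathbf{X}'\in\mathbb{R}^{m\times d}$, since each summand is a (squared) norm of an affine function of $\mathbf{X}'$ and $\lambda>0$. Hence a minimizer is characterized by the first-order condition, and any matrix satisfying it is globally optimal.

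Next I would compute the matrix gradient with respect to $\mathbf{X}'$. Using $\nabla_{\mathbf{X}'}\|\mathbf{X}'\bm{\theta}_i-\mathbf{z}\|_2^2 = 2(\mathbf{X}'\bm{\theta}_i-\mathbf{z})\bm{\theta}_i^\top$ and $\nabla_{\mathbf{X}'}\|\mathbf{X}'-\mathbf{X}\|_F^2 = 2(\mathbf{X}'-\mathbf{X})$, the stationarity condition becomes
\begin{equation*}
\sum_{i=1}^n (\mathbf{X}'\bm{\theta}_i-\mathbf{z})\bm{\theta}_i^\top + \lambda(\mathbf{X}'-\mathbf{X}) = 0,
\end{equation*}
which rearranges to
\begin{equation*}
\mathbf{X}'\Bigl(\lambda\mathbf{I}+\sum_{i=1}^n\bm{\theta}_i\bm{\theta}_i^\top\Bigr) = \lambda\mathbf{X} + \mathbf{z}\sum_{i=1}^n\bm{\theta}_i^\top.
\end{equation*}

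To finish, I need to argue that the matrix $\mathbf{M}:=\lambda\mathbf{I}+\sum_{i=1}^n\bm{\theta}_i\bm{\theta}_i^\top$ is invertible so that we can isolate $\mathbf{X}'$. Each $\bm{\theta}_i\bm{\theta}_i^\top$ is positive semidefinite, so their sum is too, and since $\lambda>0$ the matrix $\mathbf{M}$ is positive definite and thus invertible. Right-multiplying by $\mathbf{M}^{-1}$ produces the claimed formula for $\mathbf{X}^{*}$. Because the objective is strictly convex in $\mathbf{X}'$ (the Hessian with respect to $\mathrm{vec}(\mathbf{X}')$ is $2\,\mathbf{I}_m\otimes\mathbf{M}\succ 0$), this stationary point is the unique global minimizer, hence the attacker's best response. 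The only subtle step is the matrix calculus in the gradient computation; everything else is routine linear algebra.
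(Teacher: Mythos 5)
Your proposal is correct and follows essentially the same route as the paper: compute the matrix gradient of $c_a$ with respect to $\mathbf{X}'$, set it to zero by convexity, and solve for $\mathbf{X}'$. You are somewhat more careful than the paper in explicitly justifying the invertibility of $\lambda\mathbf{I}+\sum_{i=1}^n\bm{\theta}_i\bm{\theta}_i^\top$ (the paper defers this to its Lemma 2) and in noting strict convexity, but these are refinements of the same argument rather than a different approach.
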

\begin{proof}
We derive the best response of the attacker by using the first order condition. 
Let $\nabla_{{X}^{'}} c_a(\{\bm{\theta}_i\}_{i=1}^n, \mathbf{X}^{'})$
denote the gradient of $c_a$ with respect to $\mathbf{X}^{'}$.
Then
\[ \nabla_{{X}^{'}} c_a = 2\sum_{i=1}^n(\mathbf{X}^{'}\bm{\theta}_i-\mathbf{z})\bm{\theta}_i^\top+2\lambda(\mathbf{X}^{'}-X).\]
Due to convexity of $c_a$, let $\nabla_{{X}^{'}} c_a = \mathbf{0}$, we have
\[ \mathbf{X}^{*} = (\lambda\mathbf{X}+\mathbf{z}\sum_{i=1}^n \bm{\theta}_i^\top)(\lambda \mathbf{I}+\sum_{i=1}^n\bm{\theta}_i\bm{\theta}_i^\top)^{-1}. \]
\end{proof}
Lemma \ref{lemma:best_response} shows that the best response of the
attacker, $\mathbf{X}^{*}$, has a closed form solution, as a function
of learner model parameters $\{\bm{\theta}_i\}_{i=1}^n$.
Let $\bm{\theta}_{-i} = \{\bm{\theta}_j\}_{j \neq i}$, then $c_i(\bm{\theta}_i, \mathbf{X}^{*})$ in Eq.~\eqref{eq:mlse} can be rewritten as
\begin{equation}
\label{eq:c_i}
c_i(\bm{\theta}_i, \bm{\theta}_{-i}) =  \beta\ell(\mathbf{X}^{*}(\bm{\theta}_i, \bm{\theta}_{-i})\bm{\theta}_i, \mathbf{y})+(1-\beta)\ell(\mathbf{X}\bm{\theta}_i, \mathbf{y}).
\end{equation}
Using Eq.~\eqref{eq:c_i}, we can then define a \emph{Multi-Learner
  Nash Game (MLNG)}:
\begin{definition}[Multi-Learner Nash Game (MLNG)]
A static game, denoted as $\langle \mathcal{N}, \bm{\Theta}, (c_i)
\rangle$ is a \emph{Multi-Learner Nash Game} if
\begin{enumerate}
\item The set of players is the set of learners $\mathcal{N}$,
\item the cost function of each learner $i$ is $c_i(\bm{\theta}_i, \bm{\theta}_{-i})$ defined in Eq.~\eqref{eq:c_i},
\item all learners simultaneously select $\bm{\theta}_i \in \bm{\Theta}$.
\end{enumerate}
\label{def:mlng}
\end{definition}
We can then define \emph{Multi-Learner Nash Equilibrium (MLNE)} of the
game $\langle \mathcal{N}, \bm{\Theta}, (c_i) \rangle$:
\begin{definition}[Multi-Learner Nash Equilibrium (MLNE)]
An action profile $\bm{\theta}^{*}=\{\bm{\theta}_i^{*}\}_{i=1}^n$ is a \emph{Multi-Learner Nash Equilibrium} of the MLNG $\langle \mathcal{N}, \bm{\Theta}, (c_i) \rangle$ if it is the solution of the following set of coupled optimization problem:
\begin{equation}
\label{eq:mlne}
\min_{\bm{\theta}_i \in \bm{\Theta}} c_i(\bm{\theta}_i, \bm{\theta}_{-i}), \forall i \in \mathcal{N}.
\end{equation}
\label{def:mlne}
\end{definition}
Combining the results above, the following result is immediate.
\begin{theorem}
An action profile $(\{\bm{\theta}_i^{*}\}_{i=1}^n, \mathbf{X}^{*})$ is
an MLSE of the multi-learner Stackelberg game if and only if
$\{\bm{\theta}_i^{*}\}_{i=1}^n$ is a MLNE of the multi-learner Nash
game $\langle \mathcal{N}, \bm{\Theta}, (c_i) \rangle$, with
$\mathbf{X}^{*}$ defined in Eq.~\eqref{eq:best_response} for
$\bm{\theta}_i = \bm{\theta}_i^{*}, \forall i \in \mathcal{N}$. 
\label{thm:equivalence_games}
\end{theorem}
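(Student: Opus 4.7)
The plan is to prove the equivalence by unpacking the two definitions and exploiting the closed-form best response from Lemma~\ref{lemma:best_response}. The crux is that the reduced cost $c_i(\bm{\theta}_i, \bm{\theta}_{-i})$ in Eq.~\eqref{eq:c_i} is nothing more than $c_i(\bm{\theta}_i, \mathbf{X}')$ from Eq.~\eqref{eq:learner_cost} with $\mathbf{X}'$ replaced by the attacker's best response $\mathbf{X}^{*}(\bm{\theta})$, so the two equilibrium conditions coincide after this substitution.

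First, I would record the following observation that does the heavy lifting: Lemma~\ref{lemma:best_response} together with strict convexity of $c_a$ in $\mathbf{X}'$ (the Hessian is $2(\lambda \mathbf{I}+\sum_i \bm{\theta}_i\bm{\theta}_i^\top)\succ \mathbf{0}$ since $\lambda>0$) implies that for every profile $\bm{\theta}$, the attacker's best response is unique and equals the closed-form expression in Eq.~\eqref{eq:best_response}. Consequently, for any $\bm{\theta}$,
\begin{equation*}
c_i\bigl(\bm{\theta}_i, \mathbf{X}^{*}(\bm{\theta})\bigr) \;=\; \beta\,\ell\bigl(\mathbf{X}^{*}(\bm{\theta}_i,\bm{\theta}_{-i})\bm{\theta}_i, \mathbf{y}\bigr) + (1-\beta)\,\ell(\mathbf{X}\bm{\theta}_i, \mathbf{y}) \;=\; c_i(\bm{\theta}_i, \bm{\theta}_{-i}).
\end{equation*}

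Next, for the ($\Rightarrow$) direction, suppose $(\{\bm{\theta}_i^{*}\}_{i=1}^n,\mathbf{X}^{*})$ is an MLSE. The inner constraint in Eq.~\eqref{eq:mlse} forces $\mathbf{X}^{*}$ to coincide with the formula in Eq.~\eqref{eq:best_response} evaluated at $\bm{\theta}^{*}$. The outer condition then reads $\bm{\theta}_i^{*}\in\argmin_{\bm{\theta}_i\in\bm{\Theta}} c_i(\bm{\theta}_i,\mathbf{X}^{*}(\bm{\theta}_i,\bm{\theta}_{-i}^{*}))$, which by the identity above is exactly $\bm{\theta}_i^{*}\in\argmin_{\bm{\theta}_i\in\bm{\Theta}} c_i(\bm{\theta}_i,\bm{\theta}_{-i}^{*})$ for every $i$, i.e., the MLNE condition in Eq.~\eqref{eq:mlne}. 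For the ($\Leftarrow$) direction, suppose $\bm{\theta}^{*}$ is an MLNE and define $\mathbf{X}^{*}$ by Eq.~\eqref{eq:best_response} at $\bm{\theta}^{*}$. By the uniqueness of the attacker's best response, $\mathbf{X}^{*}=\argmin_{\mathbf{X}'}c_a(\{\bm{\theta}_i^{*}\}_{i=1}^n,\mathbf{X}')$, satisfying the inner constraint of Eq.~\eqref{eq:mlse}. The MLNE condition, rewritten via the identity, yields the outer condition of Eq.~\eqref{eq:mlse}, so $(\{\bm{\theta}_i^{*}\}_{i=1}^n,\mathbf{X}^{*})$ is an MLSE.

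There is essentially no substantive obstacle: the proof is a definition-chase enabled by the closed-form, unique attacker response. The only subtle point worth being careful about is that the outer minimization in Eq.~\eqref{eq:mlse} treats $\mathbf{X}^{*}(\bm{\theta})$ as a function of the full profile $\bm{\theta}$ (including the variable $\bm{\theta}_i$), so I must be explicit that evaluating this function at $(\bm{\theta}_i,\bm{\theta}_{-i}^{*})$ is what produces the reduced cost $c_i(\bm{\theta}_i,\bm{\theta}_{-i}^{*})$; once that is made explicit, both implications fall out immediately.
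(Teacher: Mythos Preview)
Your proposal is correct and matches the paper's approach: the paper simply states that the result is immediate from the definitions together with Lemma~\ref{lemma:best_response}, and your definition-chase via the identity $c_i(\bm{\theta}_i,\mathbf{X}^{*}(\bm{\theta}))=c_i(\bm{\theta}_i,\bm{\theta}_{-i})$ is precisely the unpacking of that immediacy. Your extra care in noting strict convexity (hence uniqueness) of the attacker's best response and in tracking that $\mathbf{X}^{*}(\bm{\theta})$ depends on the full profile including the variable $\bm{\theta}_i$ makes explicit what the paper leaves implicit, but the underlying argument is the same.
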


Theorem \ref{thm:equivalence_games} shows that we can reduce the
original $(n+1)$-player Stackelberg game to an $n$-player simultaneous-move game $\langle \mathcal{N}, \bm{\Theta}, (c_i) \rangle$. 
In the remaining sections, we focus on analyzing the Nash equilibrium
of this multi-learner Nash game.

\section{Theoretical Analysis}
\label{sec:analysis}

In this section, we analyze the game $\langle \mathcal{N}, \bm{\Theta}, (c_i) \rangle$. 
As presented in Eq.~\eqref{eq:best_response}, there is an inverse of a complicated matrix to compute the best response of the attacker. 
Hence, the cost function $c_i(\bm{\theta}_i, \bm{\theta}_{-i})$ shown in Eq.~\eqref{eq:c_i} is intractable.
To address this challenge, we first derive a new game, $\langle \mathcal{N}, \bm{\Theta}, (\widetilde{c}_i) \rangle$ with tractable cost function for its players, to approximate $\langle \mathcal{N}, \bm{\Theta}, (c_i) \rangle$.
Afterward, we analyze existence and uniqueness of the \emph{Nash Equilibirum} of $\langle \mathcal{N}, \bm{\Theta}, (\widetilde{c}_i) \rangle$.

\subsection{Approximation of $\langle \mathcal{N}, \bm{\Theta}, (c_i) \rangle$}
We start our analysis by computing $(\lambda\mathbf{I}+\sum_{i=1}^n\bm{\theta}_i\bm{\theta}_i^\top)^{-1}$ presented in Eq.~\eqref{eq:best_response}.
Let matrix $\mathbf{A}_n = \lambda\mathbf{I}+\sum_{i=1}^n\bm{\theta}_i\bm{\theta}_i^\top$, and $\mathbf{A}_{-i}=\lambda\mathbf{I}+\sum_{j \neq i}\bm{\theta}_j\bm{\theta}_j^\top$. 
Then, 
\( \mathbf{A}_n = \mathbf{A}_{-i}+\bm{\theta_i}\bm{\theta_i}^\top. \)
Similarly, let matrix
$\mathbf{B}_n=\lambda\mathbf{X}+\mathbf{z}\sum_{i=1}^n\bm{\theta}_i^\top$,
and $\mathbf{B}_{-i}=\lambda\mathbf{X}+\mathbf{z}\sum_{j\neq
  i}\bm{\theta}_j^\top$, which implies that
\( \mathbf{B}_n = \mathbf{B}_{-i}+\mathbf{z}\bm{\theta}_i^\top \)
The best response of the attacker can then be rewritten as
\( \mathbf{X}^{*} = \mathbf{B}_n\mathbf{A}_n^{-1}. \)
We then obtain the following results.
\begin{lemma}
\label{lemma:A_minus}
$\mathbf{A}_n$ and $\mathbf{A}_{-i}$ satisfy
\begin{enumerate}
\item $\mathbf{A}_n$ and $\mathbf{A}_{-i}$ are invertible, and the corresponding invertible matrices, $\mathbf{A}_n^{-1}$ and $\mathbf{A}_{-i}^{-1}$, are positive definite.
\item $\mathbf{A}_n^{-1} = \mathbf{A}_{-i}^{-1} - \frac{\mathbf{A}_{-i}^{-1}\bm{\theta}_i\bm{\theta}_i^\top\mathbf{A}_{-i}^{-1}}{1+\bm{\theta}_i^\top\mathbf{A}_{-i}^{-1}\bm{\theta}_i}$.
\item $\bm{\theta}_i^\top\mathbf{A}_{-i}^{-1}\bm{\theta}_i \leq \frac{1}{\lambda}\bm{\theta_i}^\top\bm{\theta}_i$.
\end{enumerate}
\end{lemma}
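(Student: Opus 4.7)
The plan is to treat the three parts in order, since each builds on the previous.

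For part 1, I would argue that $\lambda > 0$ makes $\lambda\mathbf{I}$ strictly positive definite, while each outer product $\bm{\theta}_j\bm{\theta}_j^\top$ is positive semi-definite (for any $v$, $v^\top \bm{\theta}_j\bm{\theta}_j^\top v = (\bm{\theta}_j^\top v)^2 \geq 0$). Summing a positive definite matrix with positive semi-definite matrices preserves positive definiteness, so both $\mathbf{A}_n$ and $\mathbf{A}_{-i}$ are positive definite. Positive definite matrices are invertible, and the inverse of a positive definite matrix is itself positive definite (its eigenvalues are reciprocals of the originals, all still positive).

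For part 2, I would recognize this as a direct application of the Sherman--Morrison formula, which states that for an invertible matrix $\mathbf{M}$ and vectors $u, v$ with $1 + v^\top \mathbf{M}^{-1} u \neq 0$,
\begin{equation*}
(\mathbf{M} + u v^\top)^{-1} = \mathbf{M}^{-1} - \frac{\mathbf{M}^{-1} u v^\top \mathbf{M}^{-1}}{1 + v^\top \mathbf{M}^{-1} u}.
\end{equation*}
Here I would set $\mathbf{M} = \mathbf{A}_{-i}$ and $u = v = \bm{\theta}_i$, using the identity $\mathbf{A}_n = \mathbf{A}_{-i} + \bm{\theta}_i \bm{\theta}_i^\top$ observed just before the lemma. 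The denominator $1 + \bm{\theta}_i^\top \mathbf{A}_{-i}^{-1} \bm{\theta}_i$ is nonzero (in fact strictly positive) because $\mathbf{A}_{-i}^{-1}$ is positive definite by part 1. If needed, I would give a one-line verification by multiplying the claimed expression by $\mathbf{A}_n$ and simplifying.

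For part 3, the key observation is a Loewner-order argument: since $\sum_{j \neq i} \bm{\theta}_j\bm{\theta}_j^\top \succeq \mathbf{0}$, we have $\mathbf{A}_{-i} \succeq \lambda \mathbf{I}$, and inverting (which reverses the Loewner order on positive definite matrices) gives $\mathbf{A}_{-i}^{-1} \preceq \frac{1}{\lambda}\mathbf{I}$. Applying the quadratic form at $\bm{\theta}_i$ yields exactly $\bm{\theta}_i^\top \mathbf{A}_{-i}^{-1} \bm{\theta}_i \leq \frac{1}{\lambda}\bm{\theta}_i^\top\bm{\theta}_i$.

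None of the three parts presents a real obstacle: the only subtlety is justifying that inversion reverses the Loewner order, which follows from simultaneous diagonalization of the two positive definite matrices involved (or equivalently from the fact that $A \succeq B \succ 0$ implies $B^{-1} \succeq A^{-1}$ via $B^{-1/2} A B^{-1/2} \succeq I$). I would state this inversion monotonicity as a standard fact rather than rederive it.
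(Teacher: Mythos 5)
Your proof is correct, but it takes a genuinely different (and shorter) route than the paper on parts 1 and 3. For part 1, the paper proceeds by induction on the number of rank-one terms, applying the Sherman--Morrison formula at each step and invoking the (extended) Cauchy--Schwarz inequality to verify that each successive inverse remains positive definite; you instead observe directly that $\mathbf{A}_n$ is a sum of the positive definite matrix $\lambda\mathbf{I}$ and positive semi-definite outer products, hence positive definite and invertible with positive definite inverse --- a cleaner argument that avoids the induction entirely. Part 2 is the same in both treatments (a one-line Sherman--Morrison application, with positivity of the denominator supplied by part 1). For part 3, the paper again peels off one rank-one term at a time, showing via Sherman--Morrison that $\bm{\theta}_i^\top\mathbf{A}_{-i}^{-1}\bm{\theta}_i$ can only decrease as each $\bm{\theta}_j\bm{\theta}_j^\top$ is removed, terminating at $\mathbf{A}_0^{-1}=\frac{1}{\lambda}\mathbf{I}$; you replace this iteration with the single Loewner-order observation $\mathbf{A}_{-i}\succeq\lambda\mathbf{I}\Rightarrow\mathbf{A}_{-i}^{-1}\preceq\frac{1}{\lambda}\mathbf{I}$, invoking operator antitonicity of inversion on positive definite matrices. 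What each approach buys: the paper's is fully self-contained, using only Sherman--Morrison and Cauchy--Schwarz, and its iterative peeling directly prefigures the relaxation technique reused in Lemma~\ref{lemma:1st_bound} and Theorem~\ref{thm:2nd_bound}; yours is shorter and more conceptual but leans on the inversion-reverses-Loewner-order fact, which you correctly note requires its own (standard) justification.
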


\begin{proof}
\begin{enumerate}

\item First, we prove that $\mathbf{A}_n= \lambda\mathbf{I}+\sum_{i=1}^n\bm{\theta}_i\bm{\theta}_i^\top$ is invertible, and its inverse matrix, $\mathbf{A}_n^{-1}$, is positive definite by using mathematical induction.

When $n = 1$, $\mathbf{A}_1=\lambda\mathbf{I}+\bm{\theta}_1\bm{\theta}_1^\top$.
As $\lambda\mathbf{I}$ is an invertible square matrix and $\bm{\theta}_1$ is a column vector, by using \emph{Sherman-Morrison formula}, $\mathbf{A}_1$ is invertible.
\[ \mathbf{A}_1^{-1} = \frac{1}{\lambda}(\mathbf{I}-\frac{\bm{\theta}_1\bm{\theta}_1^\top}{\lambda+\bm{\theta}_1^\top\bm{\theta}_1}). \] 
For any non-zero column vector $\mathbf{u}$, we have
\[ \mathbf{u}^\top\mathbf{A}_1^{-1}\mathbf{u} = \frac{\lambda\mathbf{u}^\top\mathbf{u}+\mathbf{u}^\top\mathbf{u}\bm{\theta}_1^\top\bm{\theta}_1-\mathbf{u^\top}\bm{\theta}_1\bm{\theta}_1^\top\mathbf{u}}{\lambda(\lambda+\bm{\theta}_1^\top\bm{\theta}_1)}.\]
As $\mathbf{u}^\top\mathbf{u} > 0$ and $\lambda > 0$, according to \emph{Cauchy-Schwarz inequality},
\[ \mathbf{u}^\top\mathbf{u}\bm{\theta}_1^\top\bm{\theta}_1 \geq \mathbf{u^\top}\bm{\theta}_1\bm{\theta}_1^\top\mathbf{u}, \] 
Then, $\mathbf{u}^\top\mathbf{A}_1^{-1}\mathbf{u} > 0$. 
Thus, $\mathbf{A}_1^{-1}$ is a positive definite matrix.

We then assume that when $n=k(k \geq 1)$, $\mathbf{A}_k$ is invertible and $\mathbf{A}_k^{-1}$ is positive definite.
Then, when $n=k+1$,
\[ \mathbf{A}_{k+1} = \mathbf{A}_k + \bm{\theta}_{k+1}\bm{\theta}_{k+1}^\top. \]
As $\mathbf{A}_{k}$ is invertible, $\bm{\theta}_{k+1}$ is a column vector. By using \emph{Sherman-Morrison formula}, we have that $\mathbf{A}_{k+1}$ is invertible, and
\[  \mathbf{A}_{k+1}^{-1} = \mathbf{A}_k^{-1}-\frac{\mathbf{A}_k^{-1}\bm{\theta}_{k+1}\bm{\theta}_{k+1}^\top\mathbf{A}_k^{-1}}{1+\bm{\theta}_{k+1}^\top\mathbf{A}_k^{-1}\bm{\theta}_{k+1}} .\]
Then,
\begin{equation*}
\mathbf{u}^\top\mathbf{A}_{k+1}^{-1}\mathbf{u} 
= \frac{\mathbf{u}^\top\mathbf{A}_k^{-1}\mathbf{u} + \mathbf{u}^\top\mathbf{A}_k^{-1}\mathbf{u}\cdot\bm{\theta}_{k+1}^\top\mathbf{A}_k^{-1}\bm{\theta}_{k+1} - \mathbf{u}^\top\mathbf{A}_k^{-1}\bm{\theta}_{k+1}\cdot\bm{\theta}_{k+1}^\top\mathbf{A}_k^{-1}\mathbf{u}}{1+\bm{\theta}_{k+1}^\top \mathbf{A}_k^{-1} \bm{\theta}_{k+1}}
\end{equation*}
As $\mathbf{A}_{k}^{-1}$ is a positive definite matrix, we have 
$\mathbf{u}^\top\mathbf{A}_k^{-1}\mathbf{u} > 0$ 
and 
$\bm{\theta}_{k+1}^\top\mathbf{A}_k^{-1}\bm{\theta}_{k+1} > 0$.
By using \emph{Extended Cauchy-Schwarz inequality}, we have
\[ \mathbf{u}^\top\mathbf{A}_k^{-1}\mathbf{u}\bm{\theta}_{k+1}^\top\mathbf{A}_k^{-1}\bm{\theta}_{k+1} > \mathbf{u}^\top\mathbf{A}_k^{-1}\bm{\theta}_{k+1}\bm{\theta}_{k+1}^\top\mathbf{A}_k^{-1}\mathbf{u}. \]
Then, $\mathbf{A}_{k+1}^{-1}$ is positive definite.
Hence, $\mathbf{A}_n= \lambda\mathbf{I}+\sum_{i=1}^n\bm{\theta}_i\bm{\theta}_i^\top$ is invertible, and $\mathbf{A}_n^{-1}$ is positive definite. Similarly, we can prove that $\mathbf{A}_{-i}$ is invertible, and $\mathbf{A}_{-i}^{-1}$ is positive definite.

\item We have proved that $\mathbf{A}_n$ and $\mathbf{A}_{-i}$ are invertible. Then, the result can be obtained by using \emph{Sherman-Morrison formula}.

\item Let $\mathbf{A}_{-i,-j} = \mathbf{A}_{-i} - \bm{\theta}_j\bm{\theta}_j^\top$. As $\mathbf{A}_{-i,-j}$ is a symmetric matrix, its inverse, $\mathbf{A}_{-i,-j}^{-1}$ is also symmetric. 
Using a similar approach to the one above, we can prove that $\mathbf{A}_{-i,-j}$ is invertible and $\mathbf{A}_{-i,-j}^{-1}$ is positive definite.
By using \emph{Sherman-Morrison formula}, we have
\[ \mathbf{A}_{-i}^{-1} =  \mathbf{A}_{-i,-j}^{-1} - \frac{\mathbf{A}_{-i,-j}^{-1}\bm{\theta}_j\bm{\theta}_j^\top\mathbf{A}_{-i,-j}^{-1}}{1+\bm{\theta}_j^\top\mathbf{A}_{-i,-j}^{-1}\bm{\theta}_j}.\]
Then,
\begin{equation*}
\begin{split}
\bm{\theta}_i^\top\mathbf{A}_{-i}^{-1}\bm{\theta}_i &= \bm{\theta}_i^\top\mathbf{A}_{-i,-j}^{-1}\bm{\theta}_i - \frac{\bm{\theta}_i^\top\mathbf{A}_{-i,-j}^{-1}\bm{\theta}_j\cdot{•}\bm{\theta}_j^\top\mathbf{A}_{-i,-j}^{-1}\bm{\theta}_i}{1+\bm{\theta}_j^\top\mathbf{A}_{-i,-j}^{-1}\bm{\theta}_j}\\
&= \bm{\theta}_i^\top\mathbf{A}_{-i,-j}^{-1}\bm{\theta}_i - \frac{(\bm{\theta}_i^\top\mathbf{A}_{-i,-j}^{-1}\bm{\theta}_j)^2}{1+\bm{\theta}_j^\top\mathbf{A}_{-i,-j}^{-1}\bm{\theta}_j} \\
&\leq \bm{\theta}_i^\top\mathbf{A}_{-i,-j}^{-1}\bm{\theta}_i.
\end{split}
\end{equation*}
We then iteratively apply \emph{Sherman-Morrison formula} and get
\begin{equation*}
\begin{split}
\bm{\theta}_i^\top\mathbf{A}_{-i}^{-1}\bm{\theta}_i &\leq \bm{\theta}_i^\top\mathbf{A}_0^{-1}\bm{\theta}_i \\
&= \frac{1}{\lambda} \bm{\theta}_i^\top\bm{\theta}_i.
\end{split}
\end{equation*}
\end{enumerate}\end{proof}

Lemma \ref{lemma:A_minus} allows us to relax $\ell(\mathbf{X}^{*}(\bm{\theta}_i, \bm{\theta}_{-i}) \bm{\theta}_i,\mathbf{y})$ as follows:
\begin{lemma}
\label{lemma:1st_bound}
\begin{align}
\label{eq:1st_bound}
\nonumber \ell(\mathbf{X}^{*}(\bm{\theta}_i, \bm{\theta}_{-i}) \bm{\theta}_i,\mathbf{y})
\leq \ell(&\mathbf{B}_{-i}\mathbf{A}_{-i}^{-1}\bm{\theta}_i,
                                               \mathbf{y})\\
&+\frac{1}{\lambda^2}||\mathbf{z}-\mathbf{y}||_2^2(\bm{\theta}_i^\top\bm{\theta}_i)^2.
\end{align}
\end{lemma}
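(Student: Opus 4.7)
The plan is to use the Sherman-Morrison identity from Lemma~\ref{lemma:A_minus} to collapse $\mathbf{X}^*\bm{\theta}_i$ into an explicit convex combination of two known quantities, and then apply Jensen's inequality. First I would introduce the scalar $\alpha := \bm{\theta}_i^\top \mathbf{A}_{-i}^{-1}\bm{\theta}_i$ and apply part~2 of Lemma~\ref{lemma:A_minus} to $\mathbf{A}_n^{-1}\bm{\theta}_i$: the correction term becomes $\tfrac{\alpha}{1+\alpha}\mathbf{A}_{-i}^{-1}\bm{\theta}_i$, which points in the same direction as $\mathbf{A}_{-i}^{-1}\bm{\theta}_i$, so the two pieces combine into the clean identity $\mathbf{A}_n^{-1}\bm{\theta}_i = \frac{1}{1+\alpha}\mathbf{A}_{-i}^{-1}\bm{\theta}_i$.

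Combining this with the splitting $\mathbf{B}_n = \mathbf{B}_{-i} + \mathbf{z}\bm{\theta}_i^\top$ and the closed-form attacker response $\mathbf{X}^* = \mathbf{B}_n \mathbf{A}_n^{-1}$ from Lemma~\ref{lemma:best_response}, a few lines of algebra give
\[ \mathbf{X}^*\bm{\theta}_i = \frac{1}{1+\alpha}\mathbf{B}_{-i}\mathbf{A}_{-i}^{-1}\bm{\theta}_i + \frac{\alpha}{1+\alpha}\mathbf{z}, \]
which exhibits $\mathbf{X}^*\bm{\theta}_i$ as a convex combination of $\mathbf{B}_{-i}\mathbf{A}_{-i}^{-1}\bm{\theta}_i$ and $\mathbf{z}$. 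Subtracting $\mathbf{y}$ from both sides preserves the convex combination on the residuals, so convexity of $\|\cdot\|_2^2$ (Jensen) yields
\[ \|\mathbf{X}^*\bm{\theta}_i - \mathbf{y}\|_2^2 \le \tfrac{1}{1+\alpha}\|\mathbf{B}_{-i}\mathbf{A}_{-i}^{-1}\bm{\theta}_i - \mathbf{y}\|_2^2 + \tfrac{\alpha}{1+\alpha}\|\mathbf{z}-\mathbf{y}\|_2^2. \]
I would finish by using $\tfrac{1}{1+\alpha}\le 1$ on the first term and the scalar bound $\alpha \le \|\bm{\theta}_i\|_2^2/\lambda$ from part~3 of Lemma~\ref{lemma:A_minus} on the second, which converts the $\alpha$-dependent coefficient on $\|\mathbf{z}-\mathbf{y}\|_2^2$ into the $\bm{\theta}_i$-dependent penalty claimed in the statement.

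The only mildly delicate step is recognizing the collapse $\mathbf{A}_n^{-1}\bm{\theta}_i = \tfrac{1}{1+\alpha}\mathbf{A}_{-i}^{-1}\bm{\theta}_i$: it is crucial that Sherman-Morrison is invoked precisely in the direction $\bm{\theta}_i$ (the same direction as the rank-one update in $\mathbf{A}_n - \mathbf{A}_{-i}$), because then the correction is parallel to the leading term and one obtains a scalar prefactor rather than a vector residue. Once this is visible, the rest is routine convex-combination arithmetic together with the previously proved scalar bound on $\alpha$, and I do not anticipate any further obstacle.
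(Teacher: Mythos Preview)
Your convex-combination identity
\[
\mathbf{X}^*\bm{\theta}_i \;=\; \frac{1}{1+\alpha}\,\mathbf{B}_{-i}\mathbf{A}_{-i}^{-1}\bm{\theta}_i \;+\; \frac{\alpha}{1+\alpha}\,\mathbf{z}
\]
is correct and coincides with what the paper derives. The gap is in your final step. Jensen hands you the coefficient $\tfrac{\alpha}{1+\alpha}$ on $\|\mathbf{z}-\mathbf{y}\|_2^2$, and the only control available is $\tfrac{\alpha}{1+\alpha}\le\alpha\le\tfrac{1}{\lambda}\,\bm{\theta}_i^\top\bm{\theta}_i$. That yields the penalty $\tfrac{1}{\lambda}\|\mathbf{z}-\mathbf{y}\|_2^2\,(\bm{\theta}_i^\top\bm{\theta}_i)$, which is \emph{linear} in $\bm{\theta}_i^\top\bm{\theta}_i$, whereas the lemma asserts the \emph{quadratic} penalty $\tfrac{1}{\lambda^2}\|\mathbf{z}-\mathbf{y}\|_2^2\,(\bm{\theta}_i^\top\bm{\theta}_i)^2$. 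The two bounds are incomparable (yours is tighter when $\bm{\theta}_i^\top\bm{\theta}_i>\lambda$ and looser otherwise), so your argument does not establish the inequality as stated.

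The paper diverges from your route at exactly this point: rather than applying Jensen, it writes
\[
\mathbf{X}^*\bm{\theta}_i-\mathbf{y}\;=\;\frac{1}{1+\alpha}\Bigl[(\mathbf{B}_{-i}\mathbf{A}_{-i}^{-1}\bm{\theta}_i-\mathbf{y})+\alpha(\mathbf{z}-\mathbf{y})\Bigr],
\]
discards the prefactor $\tfrac{1}{(1+\alpha)^2}\le 1$ on the squared norm, and then splits $\|a+\alpha b\|_2^2$ into $\|a\|_2^2$ and $\alpha^2\|b\|_2^2$. Because $\alpha$ sits as a scalar multiplier on $(\mathbf{z}-\mathbf{y})$ \emph{inside} the norm before squaring, this step produces $\alpha^2\|\mathbf{z}-\mathbf{y}\|_2^2$, after which $\alpha^2\le\tfrac{1}{\lambda^2}(\bm{\theta}_i^\top\bm{\theta}_i)^2$ gives the quartic-in-$\bm{\theta}_i$ form. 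If you want to keep your Jensen framing while matching the stated bound, you would need to extract an $\alpha^2$ rather than an $\alpha$ from the convex-combination weights, and those weights simply do not carry a second power of $\alpha$.
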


\begin{proof}
Firstly, by using \emph{Sherman-Morrison formula} we have
\begin{equation*}
\begin{split}
&\mathbf{X}^{*}\bm{\theta}_i = \mathbf{B}_n\mathbf{A}_n^{-1}\bm{\theta}_i \\
&= (\mathbf{B}_{-i}+\mathbf{z}\bm{\theta}_i^\top)(\mathbf{A}_{-i}^{-1} - \frac{\mathbf{A}_{-i}^{-1}\bm{\theta}_i\bm{\theta}_i^\top\mathbf{A}_{-i}^{-1}}{1+\bm{\theta}_i^\top\mathbf{A}_{-i}^{-1}\bm{\theta}_i})\bm{\theta}_i \\
&= \mathbf{B}_{-i}\mathbf{A}_{-i}^{-1}\bm{\theta}_i + \frac{\mathbf{z}\bm{\theta}_i^\top\mathbf{A}_{-i}^{-1}\bm{\theta}_i - \mathbf{B}_{-i}\mathbf{A}_{-i}^{-1}\bm{\theta}_i\bm{\theta}_i^\top\mathbf{A}_{-i}^{-1}\bm{\theta}_i }{ 1+\bm{\theta}_i^\top\mathbf{A}_{-i}^{-1}\bm{\theta}_i} \\
&= \frac{(\mathbf{B}_{-i}+\mathbf{z}\bm{\theta}_i^\top)\mathbf{A}_{-i}^{-1}\bm{\theta}_i}{1+\bm{\theta}_i^\top\mathbf{A}_{-i}^{-1}\bm{\theta}_i} \\
&= \frac{\mathbf{B}_n\mathbf{A}_{-i}^{-1}\bm{\theta}_i}{1+\bm{\theta}_i^\top\mathbf{A}_{-i}^{-1}\bm{\theta}_i}.
\end{split}
\end{equation*}
Then,
\begin{equation*}
\begin{split}
&\ell(\mathbf{X}^{*}\bm{\theta}_i, \mathbf{y}) = || \frac{\mathbf{B}_n\mathbf{A}_{-i}^{-1}\bm{\theta}_i}{1+\bm{\theta}_i^\top\mathbf{A}_{-i}^{-1}\bm{\theta}_i} - \mathbf{y} ||_2^2 \\
&= || \frac{ \mathbf{B}_n\mathbf{A}_{-i}^{-1}\bm{\theta}_i -\mathbf{y} - \bm{\theta}_i^\top\mathbf{A}_{-i}^{-1}\bm{\theta}_i\mathbf{y} }{ 1+\bm{\theta}_i^\top\mathbf{A}_{-i}^{-1}\bm{\theta}_i } ||_2^2 \\
&\leq || \mathbf{B}_n\mathbf{A}_{-i}^{-1}\bm{\theta}_i -\mathbf{y} - \bm{\theta}_i^\top\mathbf{A}_{-i}^{-1}\bm{\theta}_i\mathbf{y} ||_2^2 \\
&= || (\mathbf{B}_{-i}+\mathbf{z}\bm{\theta}_i^\top)\mathbf{A}_{-i}^{-1}\bm{\theta}_i -\mathbf{y} - \bm{\theta}_i^\top\mathbf{A}_{-i}^{-1}\bm{\theta}_i\mathbf{y} ||_2^2 \\
&= || \mathbf{B}_{-i}\mathbf{A}_{-i}^{-1}\bm{\theta}_i - \mathbf{y} + (\mathbf{z}-\mathbf{y})\bm{\theta}_i^\top\mathbf{A}_{-i}^{-1}\bm{\theta}_i ||_2^2 \\
&\leq \ell(\mathbf{B}_{-i}\mathbf{A}_{-i}^{-1}\bm{\theta}_i,\mathbf{y}) + ||\mathbf{z}-\mathbf{y}||_2^2(\bm{\theta}_i^\top\mathbf{A}_{-i}^{-1}\bm{\theta}_i)^2 
\end{split}
\end{equation*}
By using Lemma \ref{lemma:A_minus}, we have
$(\bm{\theta}_i^\top\mathbf{A}_{-i}^{-1}\bm{\theta}_i)^2 \leq \frac{1}{\lambda^2}(\bm{\theta}_i^\top\bm{\theta}_i)^2$ which completes the proof.
\end{proof}

Note that in Eq. ~\eqref{eq:1st_bound}, $\mathbf{B}_{-i}$ and $\mathbf{A}_{-i}$ only depend on $\{\bm{\theta}_j\}_{j\neq i}$.
Hence, the RHS of Eq.~\eqref{eq:1st_bound} is a strictly convex function with respect to $\bm{\theta}_i$.
Lemma \ref{lemma:1st_bound} shows that $\ell(\mathbf{X}^{*}(\bm{\theta}_i, \bm{\theta}_{-i}) \bm{\theta}_i,\mathbf{y})$ can be relaxed by moving $\bm{\theta}_i$ out of $\mathbf{X}^{*}(\bm{\theta}_i, \bm{\theta}_{-i})$ and adding a regularizer $(\bm{\theta}_i^\top\bm{\theta}_i)^2$ with its coefficient $\frac{||\mathbf{z}-\mathbf{y}||_2^2}{\lambda^2}$.
Motivated by this method, we iteratively relax $\ell(\mathbf{X}^{*}(\bm{\theta}_i, \bm{\theta}_{-i}) \bm{\theta}_i,\mathbf{y})$ by adding corresponding regularizers.
We now identify a tractable upper bound function for $c_i(\bm{\theta}_i, \bm{\theta}_{-i})$.
\begin{theorem}
\label{thm:2nd_bound}
\begin{equation}
\label{eq:2nd_bound}
\begin{split}
c_i(\bm{\theta}_i, \bm{\theta}_{-i}) &\leq \bar{c}_i(\bm{\theta}_i, \bm{\theta}_{-i}) \\
&= \ell(\mathbf{X}\bm{\theta}_i,\mathbf{y})+\frac{\beta}{\lambda^2}||\mathbf{z}-\mathbf{y}||_2^2\sum_{j=1}^n(\bm{\theta}_j^\top\bm{\theta}_i)^2 + \epsilon,
\end{split}
\end{equation}
where $\epsilon$ is a positive constant and $\epsilon < +\infty$.
\end{theorem}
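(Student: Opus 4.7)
The plan is to iterate Lemma \ref{lemma:1st_bound}: each application of the Sherman--Morrison identity peels one rank-one update off $\mathbf{A}_n$ inside $\mathbf{X}^{*}\bm{\theta}_i$, generating one quadratic term of the form $(\bm{\theta}_j^\top\bm{\theta}_i)^2$ per peel together with a bounded residual. After $n$ peels the leading matrix collapses, since $\mathbf{B}_\emptyset\mathbf{A}_\emptyset^{-1}=(\lambda\mathbf{X})(\lambda\mathbf{I})^{-1}=\mathbf{X}$, so $\ell(\mathbf{X}^{*}\bm{\theta}_i,\mathbf{y})$ is bounded by $\ell(\mathbf{X}\bm{\theta}_i,\mathbf{y})$ plus the full sum of regularizers plus accumulated residuals. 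Combining this with the clean term $(1-\beta)\ell(\mathbf{X}\bm{\theta}_i,\mathbf{y})$ already present in $c_i$, and multiplying the peel-generated regularizers by $\beta$, should yield exactly the target right-hand side, with all residuals absorbed into $\epsilon$.

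Concretely I would first apply Lemma \ref{lemma:1st_bound} to obtain the $j=i$ contribution,
\[ \ell(\mathbf{X}^{*}\bm{\theta}_i,\mathbf{y}) \leq \ell(\mathbf{B}_{-i}\mathbf{A}_{-i}^{-1}\bm{\theta}_i,\mathbf{y}) + \tfrac{1}{\lambda^2}\|\mathbf{z}-\mathbf{y}\|_2^2(\bm{\theta}_i^\top\bm{\theta}_i)^2, \]
and then prove by induction on $|S|$, for $S\subseteq\{1,\dots,n\}\setminus\{i\}$, that for every $j\in S$ the decompositions $\mathbf{A}_S=\mathbf{A}_{S\setminus\{j\}}+\bm{\theta}_j\bm{\theta}_j^\top$ and $\mathbf{B}_S=\mathbf{B}_{S\setminus\{j\}}+\mathbf{z}\bm{\theta}_j^\top$ together with Lemma \ref{lemma:A_minus}(2) give
\[ \ell(\mathbf{B}_S\mathbf{A}_S^{-1}\bm{\theta}_i,\mathbf{y}) \leq \ell(\mathbf{B}_{S\setminus\{j\}}\mathbf{A}_{S\setminus\{j\}}^{-1}\bm{\theta}_i,\mathbf{y}) + \tfrac{1}{\lambda^2}\|\mathbf{z}-\mathbf{y}\|_2^2(\bm{\theta}_j^\top\bm{\theta}_i)^2 + \eta_{S,j}, \]
where $\eta_{S,j}$ collects the cross terms from the $\|a+b\|_2^2$ expansion and from replacing the quadratic form $(\bm{\theta}_j^\top\mathbf{A}_{S\setminus\{j\}}^{-1}\bm{\theta}_i)^2$ by its $(\bm{\theta}_j^\top\bm{\theta}_i)^2$ surrogate. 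Telescoping this inequality over all $j\neq i$ and pairing with the $j=i$ contribution produces the claimed bound, with $\epsilon := \beta\sum_{S,j}\sup_{\bm{\theta}\in\bm{\Theta}^n}\eta_{S,j}$; finiteness is then immediate from Assumption 2 (compactness of $\bm{\Theta}$) together with Lemma \ref{lemma:A_minus}, which keeps each $\mathbf{A}_S^{-1}$ uniformly bounded above by $\tfrac{1}{\lambda}\mathbf{I}$.

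The main obstacle is the control of the residuals $\eta_{S,j}$. Neither the replacement of $(\bm{\theta}_j^\top\mathbf{A}_{S\setminus\{j\}}^{-1}\bm{\theta}_i)^2$ by $\frac{1}{\lambda^2}(\bm{\theta}_j^\top\bm{\theta}_i)^2$ nor the sub-additive handling of $\|a+b\|_2^2$ is a pointwise inequality in general --- for example, orthogonal $\bm{\theta}_i,\bm{\theta}_j$ with a non-diagonal $\mathbf{A}_{S\setminus\{j\}}^{-1}$ already breaks the former. The bookkeeping step that salvages the statement is to collect all such mismatches into a single additive constant; this is feasible precisely because Assumption 2 makes $\bm{\Theta}^n$ compact, yielding uniform bounds on every $\eta_{S,j}$ and hence a finite $\epsilon$. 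That residual-management is the delicate part of the proof, and is why the theorem carries an additive $\epsilon$ rather than asserting a tight inequality.
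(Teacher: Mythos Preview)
Your proposal is correct and follows essentially the same route as the paper: start from Lemma~\ref{lemma:1st_bound}, then iteratively apply Sherman--Morrison to peel the remaining $\bm{\theta}_j$ off $\mathbf{A}_{-i}$ and $\mathbf{B}_{-i}$, at each step generating a $(\bm{\theta}_j^\top\bm{\theta}_i)^2$ regularizer plus a residual that is absorbed into a bounded additive constant via compactness of $\bm{\Theta}$. Your explicit acknowledgment that neither the $\|a+b\|_2^2$ split nor the replacement $(\bm{\theta}_j^\top\mathbf{A}_{S\setminus\{j\}}^{-1}\bm{\theta}_i)^2\to\tfrac{1}{\lambda^2}(\bm{\theta}_j^\top\bm{\theta}_i)^2$ is a pointwise inequality, and that the finite $\epsilon$ is what rescues the bound, matches exactly how the paper handles these steps through its $\bigtriangleup_k(\bm{\theta})$ terms.
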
 

\begin{proof}
We prove by extending the results in Lemma \ref{lemma:1st_bound} and iteratively relaxing the cost function. As presented in Lemma 3, we have 
\[ \ell(\mathbf{X}^{*}\bm{\theta}_i, \mathbf{y}) \leq \ell(\mathbf{B}_{-i}\mathbf{A}_{-i}^{-1}\bm{\theta}_i,\mathbf{y}) + \frac{1}{\lambda^2}||\mathbf{z}-\mathbf{y}||_2^2(\bm{\theta}_i^\top\bm{\theta}_i)^2 .\]
By using \emph{Sherman-Morrison formula},
\begin{equation*}
\begin{split}
\ell(\mathbf{B}_{-i}\mathbf{A}_{-i}^{-1}\bm{\theta}_i,\mathbf{y}) &=  || \mathbf{B}_{-i}(\mathbf{A}_{-i,-j}^{-1}-\frac{\mathbf{A}_{-i,-j}^{-1}\bm{\theta}_j\bm{\theta}_j^\top\mathbf{A}_{-i,-j}^{-1}}{1+\bm{\theta}_j^\top\mathbf{A}_{-i,-j}^{-1}\bm{\theta}_j})\bm{\theta}_i - \mathbf{y} ||_2^2 \\
&\leq || \frac{\mathbf{B}_{-i}\mathbf{A}_{-i,-j}^{-1}\bm{\theta}_i}{1+\bm{\theta}_j^\top\mathbf{A}_{-i,-j}^{-1}\bm{\theta}_j} - \mathbf{y} ||_2^2 + \bigtriangleup_1(\bm{\theta})
\end{split}
\end{equation*}
where $j \neq i$, and $\bigtriangleup_1(\bm{\theta})$ is a continuous function of $\bm{\theta}=\{\bm{\theta}_i\}_{i=1}^n$. 
As the action space $\bm{\Theta}$ is bounded, then $0 \leq \bigtriangleup_1(\bm{\theta}) < \infty$.
Hence, we have
\begin{equation*}
\begin{split}
\ell(\mathbf{B}_{-i}\mathbf{A}_{-i}^{-1}\bm{\theta}_i,\mathbf{y}) &\leq || \frac{\mathbf{B}_{-i}\mathbf{A}_{-i,-j}^{-1}\bm{\theta}_i}{1+\bm{\theta}_j^\top\mathbf{A}_{-i,-j}^{-1}\bm{\theta}_j} - \mathbf{y} ||_2^2 + \bigtriangleup_1(\bm{\theta}) \\
& = || \frac{\mathbf{B}_{-i}\mathbf{A}_{-i,-j}^{-1}\bm{\theta}_i - \mathbf{y} - \bm{\theta}_j^\top\mathbf{A}_{-i,-j}^{-1}\bm{\theta}_j\mathbf{y} }{ 1+\bm{\theta}_j^\top\mathbf{A}_{-i,-j}^{-1}\bm{\theta}_j } ||_2^2 + \bigtriangleup_1(\bm{\theta})  \\
&\leq || \mathbf{B}_{-i}\mathbf{A}_{-i,-j}^{-1}\bm{\theta}_i - \mathbf{y} - \bm{\theta}_j^\top\mathbf{A}_{-i,-j}^{-1}\bm{\theta}_j\mathbf{y} ||_2^2 + \bigtriangleup_1(\bm{\theta})  \\
&= || (\mathbf{B}_{-i,-j}+\mathbf{z}\bm{\theta}_j^\top)\mathbf{A}_{-i,-j}^{-1}\bm{\theta}_i - \mathbf{y} - \bm{\theta}_j^\top\mathbf{A}_{-i,-j}^{-1}\bm{\theta}_j\mathbf{y} ||_2^2 + \bigtriangleup_1(\bm{\theta}) \\
&= || (\mathbf{B}_{-i,-j}\mathbf{A}_{-i,-j}^{-1}\bm{\theta}_i - \mathbf{y}) + (\mathbf{z} - \mathbf{y}) \bm{\theta}_j^\top\mathbf{A}_{-i,-j}^{-1}\bm{\theta}_i + \bm{\theta_j}^\top\mathbf{A}_{-i,-j}^\top(\bm{\theta}_i-\bm{\theta}_j)\mathbf{y}||_2^2 + \bigtriangleup_1(\bm{\theta}) \\
&\leq \ell(\mathbf{B}_{-i,-j}\mathbf{A}_{-i,-j}^{-1}\bm{\theta}_i, \mathbf{y}) + || (\mathbf{z} - \mathbf{y})||_2^2 (\bm{\theta}_j^\top\mathbf{A}_{-i,-j}^{-1}\bm{\theta}_i )^2 + \bigtriangleup_2(\bm{\theta})
\end{split}
\end{equation*}
where $\bigtriangleup_2(\bm{\theta})$ is a continuous function of $\bm{\theta}$ and $0\leq\bigtriangleup_2(\bm{\theta})<\infty$.
Let $\mathbf{A}_{-i,-j,-k}=\mathbf{A}_{-i,-j}-\bm{\theta}_k\bm{\theta}_k^\top$, then,
similarly, $( \bm{\theta}_j^\top\mathbf{A}_{-i,-j}^{-1}\bm{\theta}_i)^2$ can be further relaxed as follows.
\begin{equation*}
\begin{split}
( \bm{\theta}_j^\top\mathbf{A}_{-i,-j}^{-1}\bm{\theta}_i )^2 
&= ( \bm{\theta}_j^\top (\mathbf{A}_{-i,-j,-k}^{-1} - \frac{\mathbf{A}_{-i,-j,-k}^{-1}\bm{\theta}_k\bm{\theta}_k^\top\mathbf{A}_{-i,-j,-k}^{-1}}{1+\bm{\theta}_k^\top\mathbf{A}_{-i,-j,-k}^{-1}\bm{\theta}_k}) \bm{\theta}_i )^2\\
&\leq (\bm{\theta}_j^\top\mathbf{A}_{-i,-j,-k}^{-1}\bm{\theta}_i)^2 + \bigtriangleup_3(\bm{\theta}) 
\end{split}
\end{equation*}
where $0\leq\bigtriangleup_3(\bm{\theta})<\infty$, using the same approach,$( \bm{\theta}_j^\top\mathbf{A}_{-i,-j}^{-1}\bm{\theta}_i )^2 $ can be further and iteratively relaxed as follows,
\begin{equation*}
\begin{split}
( \bm{\theta}_j^\top\mathbf{A}_{-i,-j}^{-1}\bm{\theta}_i )^2  &\leq ( \bm{\theta}_j^\top\mathbf{A}_0^{-1}\bm{\theta}_i )^2 + \bigtriangleup_4(\bm{\theta}) \\
&= \frac{1}{\lambda^2}(\bm{\theta}_j^\top\bm{\theta}_i)^2 + \bigtriangleup_4(\bm{\theta})  
\end{split}
\end{equation*}
where $0\leq\bigtriangleup_4(\bm{\theta})<\infty$.  
Combining the results above, we can iteratively relax $\ell(\mathbf{B}_{-i}\mathbf{A}_{-i}^{-1}\bm{\theta}_i,\mathbf{y})$ as follows,
\begin{equation*}
\begin{split}
\ell(\mathbf{B}_{-i}\mathbf{A}_{-i}^{-1}\bm{\theta}_i,\mathbf{y}) &\leq \ell(\mathbf{B}_{-i,-j}\mathbf{A}_{-i,-j}^{-1}\bm{\theta}_i, \mathbf{y}) + \frac{1}{\lambda^2}||\mathbf{z}-\mathbf{y}||_2^2(\bm{\theta}_j^\top\bm{\theta}_i)^2 + \bigtriangleup_5(\bm{\theta}) \\
&\leq \ell(\mathbf{X}\bm{\theta}_i,\mathbf{y}) + \frac{1}{\lambda^2}||\mathbf{z}-\mathbf{y}||_2^2\sum_{j\neq i} (\bm{\theta}_j^\top\bm{\theta}_i)^2 + \bigtriangleup(\bm{\theta})
\end{split}
\end{equation*}
where $0\leq\bigtriangleup_5(\bm{\theta})<\infty$ and $0\leq\bigtriangleup(\bm{\theta})<\infty$. 
Then,
\begin{equation*}
\begin{split}
\ell(\mathbf{X}^{*}\bm{\theta}_i, \mathbf{y}) &\leq \ell(\mathbf{B}_{-i}\mathbf{A}_{-i}^{-1}\bm{\theta}_i,\mathbf{y}) + \frac{1}{\lambda^2}||\mathbf{z}-\mathbf{y}||_2^2(\bm{\theta}_i^\top\bm{\theta}_i)^2 \\
&\leq \ell(\mathbf{X}\bm{\theta}_i,\mathbf{y}) + \frac{1}{\lambda^2}||\mathbf{z}-\mathbf{y}||_2^2\sum_{j=1}^n (\bm{\theta}_j^\top\bm{\theta}_i)^2 + \bigtriangleup(\bm{\theta}).
\end{split}
\end{equation*}
Hence,
\begin{equation*}
\begin{split}
c_i(\bm{\theta}_i, \bm{\theta}_{-i}) &= \beta\ell(\mathbf{X}^{*}\bm{\theta}_i, \mathbf{y}) + (1-\beta)\ell(\mathbf{X}\bm{\theta}_i,\mathbf{y})\\
&\leq \ell(\mathbf{X}\bm{\theta}_i,\mathbf{y})+\frac{\beta}{\lambda^2}||\mathbf{z}-\mathbf{y}||_2^2\sum_{j=1}^n(\bm{\theta}_j^\top\bm{\theta}_i)^2 + \epsilon
\end{split}
\end{equation*}
where $\epsilon$ is a constant such that $\epsilon=\beta\ast\max_{\bm{\theta}}\{\bigtriangleup(\bm{\theta})\}<\infty$.
\end{proof}

As represented in Eq.~\eqref{eq:2nd_bound}, $\bar{c}_i(\bm{\theta}_i, \bm{\theta}_{-i})$ is strictly convex with respect to $\bm{\theta}_i$ and $\bm{\theta}_j(\forall j \neq i)$.
We then use the game $\langle \mathcal{N}, \bm{\Theta}, (\bar{c}_i) \rangle$ as an approximation of $\langle \mathcal{N}, \bm{\Theta}, (c_i) \rangle$.
Let
\begin{equation}
\label{eq:3rd_bound} 
\begin{split}
\widetilde{c}_i(\bm{\theta}_i, \bm{\theta}_{-i}) &= \bar{c}_i(\bm{\theta}_i, \bm{\theta}_{-i}) - \epsilon \\
&= \ell(\mathbf{X}\bm{\theta}_i,\mathbf{y})+\frac{\beta}{\lambda^2}||\mathbf{z}-\mathbf{y}||_2^2\sum_{j=1}^n(\bm{\theta}_j^\top\bm{\theta}_i)^2,
\end{split}
\end{equation}
then $\langle \mathcal{N}, \bm{\Theta}, (\widetilde{c}_i) \rangle$ has
the same Nash equilibrium with $\langle \mathcal{N}, \bm{\Theta},
(\bar{c}_i) \rangle$ if one exists, as adding or deleting a constant term does not affect the optimal solution.
Hence, we use $\langle \mathcal{N}, \bm{\Theta}, (\widetilde{c}_i) \rangle$ to approximate $\langle \mathcal{N}, \bm{\Theta}, (c_i) \rangle$, and analyze the Nash equilibrium of $\langle \mathcal{N}, \bm{\Theta}, (\widetilde{c}_i) \rangle$ in the remaining sections.

\subsection{Existence of Nash Equilibrium}
As introduced in Section \ref{sec:model}, each learner has identical action spaces, and they are trained with the same dataset.
We exploit this symmetry to analyze the existence of a Nash equilibrium of the approximation game $\langle \mathcal{N}, \bm{\Theta}, (\widetilde{c}_i) \rangle$.

We first define a \emph{Symmetric Game}~\citep{Cheng04}:
\begin{definition}[Symmetric Game] An n-player game is symmetric if the players have the same action space, and their cost functions $c_i(\bm{\theta}_i, \bm{\theta}_{-i})$ satisfies
\begin{equation}
\label{eq:symmetric}
c_i(\bm{\theta}_i, \bm{\theta}_{-i}) = c_j(\bm{\theta}_j, \bm{\theta}_{-j}),\forall i,j \in \mathcal{N}
\end{equation}
if $\bm{\theta}_i = \bm{\theta}_j$ and $\bm{\theta}_{-i} = \bm{\theta}_{-j}$.
\label{def:symmetric}
\end{definition}

In a symmetric game $\langle \mathcal{N}, \bm{\Theta}, (\widetilde{c}_i) \rangle$ it is natural to consider a \emph{Symmetric Equilibrium}:
\begin{definition}[Symmetric Equilibrium] An action profile $\{\bm{\theta}_i^{*}\}_{i=1}^n$ of $\langle \mathcal{N}, \bm{\Theta}, (\widetilde{c}_i) \rangle$ is a symmetric equilibrium if it is a Nash equilibrium and $\bm{\theta}_i^{*} = \bm{\theta}_j^{*}, \forall i,j \in \mathcal{N}$.
\label{def:se}
\end{definition}

We now show that our approximate game is symmetric, and always has a
symmetric Nash equilibrium.
\begin{theorem}[Existence of Nash Equilibrium] $\langle \mathcal{N}, \bm{\Theta}, (\widetilde{c}_i) \rangle$ is a symmetric game and it has at least one symmetric equilibrium.
\label{thm:existence}
\end{theorem}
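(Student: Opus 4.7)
My first step is to verify Definition~\ref{def:symmetric} directly. Writing
$\widetilde{c}_i(\bm{\theta}_i,\bm{\theta}_{-i}) = \|\mathbf{X}\bm{\theta}_i - \mathbf{y}\|_2^2 + \frac{\beta}{\lambda^2}\|\mathbf{z}-\mathbf{y}\|_2^2\bigl[(\bm{\theta}_i^\top\bm{\theta}_i)^2 + \sum_{k\neq i}(\bm{\theta}_k^\top\bm{\theta}_i)^2\bigr]$,
I observe that this expression depends on $\bm{\theta}_{-i}$ only through the permutation-invariant sum $\sum_{k\neq i}(\bm{\theta}_k^\top\bm{\theta}_i)^2$. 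Consequently, if $\bm{\theta}_i=\bm{\theta}_j$ and the tuples $\bm{\theta}_{-i}$ and $\bm{\theta}_{-j}$ agree as multisets, then $\widetilde{c}_i(\bm{\theta}_i,\bm{\theta}_{-i}) = \widetilde{c}_j(\bm{\theta}_j,\bm{\theta}_{-j})$, which is exactly the symmetry condition.

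\textbf{Existence via a fixed point.} For the second claim, I plan to reduce the $n$-player equilibrium problem to a fixed-point equation on a single copy of $\bm{\Theta}$. Define the map $\phi:\bm{\Theta}\to\bm{\Theta}$ by
\begin{equation*}
\phi(\bm{\theta}) \;=\; \argmin_{\bm{\theta}'\in\bm{\Theta}}\widetilde{c}_i\bigl(\bm{\theta}',(\bm{\theta},\ldots,\bm{\theta})\bigr),
\end{equation*}
the best response of a learner when every other learner plays $\bm{\theta}$ (by the symmetry just proved, the choice of $i$ is immaterial). To see $\phi$ is well-defined and single-valued, I verify strict convexity of $\widetilde{c}_i$ in its first argument: $\|\mathbf{X}\bm{\theta}'-\mathbf{y}\|_2^2$ has Hessian $2\mathbf{X}^\top\mathbf{X}$, which is positive definite by Assumption~3; each cross term $(\bm{\theta}_k^\top\bm{\theta}')^2$ with $k\neq i$ is a convex quadratic in $\bm{\theta}'$; and the self-term $(\bm{\theta}'^\top\bm{\theta}')^2$ is convex as a composition of the nondecreasing convex function $t\mapsto t^2$ (on $t\geq 0$) with the convex function $\|\bm{\theta}'\|_2^2$. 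Strict convexity together with compactness of $\bm{\Theta}$ yields a unique minimizer, so $\phi$ is a genuine function.

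\textbf{Continuity and conclusion.} The objective defining $\phi$ is jointly continuous in $(\bm{\theta}',\bm{\theta})$, and $\bm{\Theta}$ is compact, so Berge's Maximum Theorem guarantees that the argmin correspondence is upper hemicontinuous; being single-valued, it is in fact a continuous function $\phi:\bm{\Theta}\to\bm{\Theta}$. Since $\bm{\Theta}$ is nonempty, compact, and convex (Assumption~2), Brouwer's fixed-point theorem produces $\bm{\theta}^*\in\bm{\Theta}$ with $\phi(\bm{\theta}^*)=\bm{\theta}^*$. By construction $\bm{\theta}^*$ minimizes $\widetilde{c}_i(\cdot,(\bm{\theta}^*,\ldots,\bm{\theta}^*))$ for every $i$, so the profile $(\bm{\theta}^*,\ldots,\bm{\theta}^*)$ solves Eq.~\eqref{eq:mlne} and is a symmetric equilibrium in the sense of Definition~\ref{def:se}. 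The main obstacle is not deep but requires care: one must justify that $\widetilde{c}_i$ is strictly convex in $\bm{\theta}_i$ (this is where Assumption~3 is indispensable, and where the self-interaction term $(\bm{\theta}_i^\top\bm{\theta}_i)^2$ must be checked), since without single-valuedness the best-response map is only a correspondence and Brouwer must be replaced by Kakutani with the additional verification of convex values.
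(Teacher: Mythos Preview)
Your proof is correct, but the route differs from the paper's. The paper does not build the fixed-point argument itself; after observing that the action spaces coincide and the cost functions are permutation-invariant (hence the game is symmetric), it simply invokes Theorem~3 of \citet{Cheng04}, which guarantees a symmetric equilibrium whenever the common strategy set is nonempty, compact and convex and each $\widetilde{c}_i$ is continuous in the full profile and convex in $\bm{\theta}_i$. Your argument is essentially a self-contained reproof of that cited result specialized to this game: you exploit \emph{strict} convexity (via Assumption~3 on $\mathbf{X}$) so that the symmetric best-response correspondence collapses to a single-valued continuous map, and then apply Brouwer rather than Kakutani. What you gain is independence from the external reference and an explicit demonstration of where Assumption~3 enters; what the paper gains is brevity, and it needs only convexity (not strict convexity) of $\widetilde{c}_i$ in $\bm{\theta}_i$, since the Cheng--Reeves--Vorobeychik--Wellman result already handles set-valued best responses.
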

\begin{proof}
As described above, the players of $\langle \mathcal{N}, \bm{\Theta}, (\widetilde{c}_i) \rangle$ use the same action space and complete information of others. 
Hence, the cost function $c_i$ is symmetric, making $\langle \mathcal{N}, \bm{\Theta}, (\widetilde{c}_i) \rangle$ a symmetric game.
As $\langle \mathcal{N}, \bm{\Theta}, (\widetilde{c}_i) \rangle$ has nonempty, compact and convex action space, and the cost function $\widetilde{c}_i$ is continuous in $\{\bm{\theta}_i\}_{i=1}^n$ and convex in $\bm{\theta}_i$,
according to Theorem 3 in~\citet{Cheng04}, $\langle \mathcal{N}, \bm{\Theta}, (\widetilde{c}_i) \rangle$ has at least one symmetric Nash equilibrium.
\end{proof}
\vspace{-0.1in}
\subsection{Uniqueness of Nash Equilibrium}
While we showed that the approximate game always admits a symmetric
Nash equilibrium, it leaves open the possibility that there may be
multiple symmetric equilibria, as well as equilibria which are not symmetric.
We now demonstrate that this game in fact has a \emph{unique}
equilibrium (which must therefore be symmetric).
\begin{theorem}[Uniqueness of Nash Equilibrium] $\langle \mathcal{N}, \bm{\Theta}, (\widetilde{c}_i) \rangle$ has a unique Nash equilibrium, and this unique NE is symmetric.
\label{thm:uniqueness_sym_MLNE}
\end{theorem}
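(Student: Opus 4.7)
The plan is to invoke Rosen's classical theorem on uniqueness of Nash equilibria in smooth convex games: if the pseudo-gradient of the game is strictly monotone, then the Nash equilibrium (when one exists) is unique. Combined with Theorem~\ref{thm:existence}, which already exhibits a symmetric equilibrium, uniqueness forces that equilibrium to coincide with the unique NE, which is therefore symmetric.

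First I would record the per-player gradient. Writing $H := \mathbf{X}^\top\mathbf{X}$, which is positive definite by Assumption~3, and $\alpha := \frac{\beta}{\lambda^2}\|\mathbf{z}-\mathbf{y}\|_2^2$, a direct calculation from Eq.~\eqref{eq:3rd_bound} yields
$$\nabla_{\bm{\theta}_i}\widetilde{c}_i(\bm{\theta}) = 2H\bm{\theta}_i - 2\mathbf{X}^\top\mathbf{y} + 2\alpha\sum_{j\neq i}(\bm{\theta}_j^\top\bm{\theta}_i)\bm{\theta}_j + 4\alpha\|\bm{\theta}_i\|_2^2\bm{\theta}_i.$$
Stacking into the pseudo-gradient map $F(\bm{\theta}) = (\nabla_{\bm{\theta}_i}\widetilde{c}_i)_{i=1}^n$, strict monotonicity of $F$ reduces, by a standard line-integral argument, to showing that $\mathbf{u}^\top J_F(\bm{\theta})\mathbf{u} > 0$ at every $\bm{\theta}$ for every nonzero $\mathbf{u}=(\mathbf{u}_1,\dots,\mathbf{u}_n)$.

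The core step is then the analysis of this quadratic form. The diagonal blocks of $J_F$ contribute $2H + 2\alpha\sum_{j\neq i}\bm{\theta}_j\bm{\theta}_j^\top + 4\alpha\|\bm{\theta}_i\|_2^2 I + 8\alpha\bm{\theta}_i\bm{\theta}_i^\top$, while the off-diagonal blocks contribute $2\alpha[(\bm{\theta}_j^\top\bm{\theta}_i) I + \bm{\theta}_j\bm{\theta}_i^\top]$. The strict-positivity piece is $2\sum_i \mathbf{u}_i^\top H\mathbf{u}_i \ge 2\lambda_{\min}(H)\|\mathbf{u}\|_2^2 > 0$. To argue that the remaining $\alpha$-weighted terms are jointly nonnegative, I would introduce $P := \sum_i \bm{\theta}_i\mathbf{u}_i^\top$ and use the identities $\|P\|_F^2 = \sum_{i,j}(\bm{\theta}_j^\top\bm{\theta}_i)(\mathbf{u}_i^\top\mathbf{u}_j)$ and $\operatorname{tr}(P^2) = \sum_{i,j}(\mathbf{u}_i^\top\bm{\theta}_j)(\bm{\theta}_i^\top\mathbf{u}_j)$. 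The indefinite cross-block contributions then consolidate into $\alpha\operatorname{tr}((P+P^\top)^2)\ge 0$, while the remaining self-interaction terms regroup into sums of squares $\sum_{i\neq k}(\bm{\theta}_k^\top\mathbf{u}_i)^2$, $\sum_i\|\bm{\theta}_i\|_2^2\|\mathbf{u}_i\|_2^2$, and $\sum_i(\bm{\theta}_i^\top\mathbf{u}_i)^2$, each manifestly nonnegative.

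The main obstacle is that $\langle \mathcal{N}, \bm{\Theta}, (\widetilde{c}_i)\rangle$ is \emph{not} a potential game: because the regularizer $\sum_j(\bm{\theta}_j^\top\bm{\theta}_i)^2$ in player $i$'s cost double-counts the self term relative to any symmetric potential, the blocks $\partial F_i/\partial\bm{\theta}_j$ and $\partial F_j/\partial\bm{\theta}_i$ fail to be transposes of one another, so one cannot simply minimize a common convex objective on the joint strategy space. The delicate maneuver is the PSD grouping above, which is what tames the asymmetric cross terms arising from the quartic coupling; once that is done, strict monotonicity of $F$ yields uniqueness via Rosen's theorem, and Theorem~\ref{thm:existence} identifies the unique NE as the symmetric equilibrium it already produced.
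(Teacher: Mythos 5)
Your proposal follows essentially the same route as the paper: both invoke Rosen's diagonal-strict-concavity criterion and establish it by showing $\mathbf{u}^\top J\mathbf{u}>0$ for the pseudo-gradient Jacobian, with strictness supplied by the $2\mathbf{X}^\top\mathbf{X}$ diagonal blocks (positive definite by Assumption~3) and the quartic-coupling remainder shown to be positive semi-definite. The paper organizes that remainder as an explicit block decomposition $\mathbf{Q}+\mathbf{S}+\mathbf{T}$, proving $\mathbf{S}\succeq 0$ via a Cholesky/Gram-matrix argument and $\mathbf{T}\succeq 0$ by completing the square $\tfrac{1}{2}\sum_{i\neq j}(\mathbf{u}_i^\top\bm{\theta}_j+\mathbf{u}_j^\top\bm{\theta}_i)^2$; your consolidation of the same cross terms into $\tfrac{1}{2}\operatorname{tr}\bigl((P+P^\top)^2\bigr)\ge 0$ with $P=\sum_i\bm{\theta}_i\mathbf{u}_i^\top$, plus manifest squares, is an equivalent (and slightly tidier) piece of bookkeeping, not a different method. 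One correction to your aside: the claim that the game is not a potential game is false. Since $(\bm{\theta}_j^\top\bm{\theta}_i)$ is a scalar, one checks that $\nabla_{\bm{\theta}_i,\bm{\theta}_j}\widetilde{c}_i=2\alpha[(\bm{\theta}_j^\top\bm{\theta}_i)\mathbf{I}+\bm{\theta}_j\bm{\theta}_i^\top]$ is exactly the transpose of $\nabla_{\bm{\theta}_j,\bm{\theta}_i}\widetilde{c}_j$, so $J_F$ is symmetric and $F=\nabla\Phi$ for the exact potential $\Phi(\bm{\theta})=\sum_i\ell(\mathbf{X}\bm{\theta}_i,\mathbf{y})+\alpha\bigl(\sum_i\|\bm{\theta}_i\|_2^4+\sum_{i<j}(\bm{\theta}_i^\top\bm{\theta}_j)^2\bigr)$; the obstruction to ``just minimizing a common objective'' is that $\Phi$ is not jointly convex, not that no potential exists. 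This mischaracterization is harmless to your argument, which is correct.
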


\begin{proof}
We have known that $\langle \mathcal{N}, \bm{\Theta}, (\widetilde{c}_i) \rangle$ has at least NE, and each learner has an nonempty, compact and convex action space $\bm{\Theta}$. 
Hence, we can apply Theorem 2 and Theorem 6 of~\citet{Rosen65}. 
That is, for some fixed $\{r_i\}_i^n (0<r_i<1, \sum_{i=1}^n r_i= 1)$, if the matrix in Eq.~\eqref{eq:jr} is positive definite, then $\langle \mathcal{N}, \bm{\Theta}, (\widetilde{c}_i) \rangle$ has a unique NE. 
\begin{equation}
\label{eq:jr}
Jr(\bm{\theta}) =
\begin{bmatrix}
    &r_1\nabla_{\bm{\theta}_1,\bm{\theta}_1}\widetilde{c}_1(\bm{\theta}) & \dots  &  r_1\nabla_{\bm{\theta}_1,\bm{\theta}_n}\widetilde{c}_1(\bm{\theta})\\
    &\vdots  &   & \vdots \\
    &r_n\nabla_{\bm{\theta}_n,\bm{\theta}_1}\widetilde{c}_n(\bm{\theta}) & \dots  &  r_n\nabla_{\bm{\theta}_n,\bm{\theta}_n}\widetilde{c}_n(\bm{\theta})
\end{bmatrix}
\end{equation}
By taking second-order derivatives, we have
\begin{equation*}
\nabla_{\bm{\theta}_i,\bm{\theta}_i}\widetilde{c}_i(\bm{\theta})=2\mathbf{X}^\top\mathbf{X}+\frac{2\beta||\mathbf{z}-\mathbf{y}||_2^2}{\lambda^2}(4\bm{\theta}_i\bm{\theta}_i^\top+2\bm{\theta}_i^\top\bm{\theta}_i\mathbf{I}+\sum_{j\neq i}\bm{\theta}_j\bm{\theta}_j^\top)
\end{equation*}
and
\begin{equation*}
\nabla_{\bm{\theta}_i,\bm{\theta}_j}\widetilde{c}_i(\bm{\theta})=\frac{2\beta||\mathbf{z}-\mathbf{y}||_2^2}{\lambda^2}(\bm{\theta}_i^\top\bm{\theta}_j\mathbf{I}+\bm{\theta}_j\bm{\theta}_i^\top)
\end{equation*}

We first let $r_1=r_2=...=r_n=\frac{1}{n}$ and decompose $Jr(\bm{\theta})$ as follows,
\begin{equation}\label{eq:decompose}
Jr(\bm{\theta}) = \frac{2}{n}\mathbf{P} + \frac{2\beta||\mathbf{z}-\mathbf{y}||_2^2}{\lambda^2 n}(\mathbf{Q}+\mathbf{S}+\mathbf{T}),
\end{equation}
where $\mathbf{P}$ and $\mathbf{Q}$ are \emph{block diagonal matrices} such that $\mathbf{P}_{ii}=\mathbf{X}^\top\mathbf{X}$, $\mathbf{P}_{ij}=\mathbf{0}$, $\mathbf{Q}_{ii} = 4\bm{\theta}_i\bm{\theta}_i^\top+\bm{\theta}_i^\top\bm{\theta}_i\mathbf{I}$ and $\mathbf{Q}_{ij} = \mathbf{
0}$, $\forall i,j \in \mathcal{N}, j\neq i$.
$\mathbf{S}$ and $\mathbf{T}$ are \emph{block symmetric matrices} such that $\mathbf{S}_{ii} = \bm{\theta}_i^\top\bm{\theta}_i\mathbf{I}$, $\mathbf{S}_{ij}=\bm{\theta}_i^\top\bm{\theta}_j\mathbf{I}$, $\mathbf{T}_{ii}=\sum_{j\neq i}\bm{\theta}_j\bm{\theta}_j^\top$ and $\mathbf{T}_{ij}=\bm{\theta}_j\bm{\theta}_i^\top$, $\forall i,j \in \mathcal{N}, j\neq i$.

Next, we prove that $\mathbf{P}$ is \emph{positive definite}, and $\mathbf{Q}$, $\mathbf{S}$ and $\mathbf{T}$ are \emph{positive semi-definite}.
Let $\mathbf{u} = [\mathbf{u}_1^\top,...,\mathbf{u}_n^\top]^\top$ be an $nd\times 1$ vector, where $\mathbf{u}_i\in\mathbb{R}^{d\times 1}(i\in \mathcal{N})$ are not all zero vectors.

\begin{enumerate}

\item $\mathbf{u}^\top\mathbf{P}\mathbf{u} = \sum_{i=1}^n \mathbf{u}_i^\top\mathbf{X}^\top\mathbf{X}\mathbf{u}_i=\sum_{i=1}^n||\mathbf{X}\mathbf{u}_i||_2^2$.
As the columns of $\mathbf{X}$ are linearly independent and $\mathbf{u}_i$ are not all zero vectors, there exists at least one  $\mathbf{u}_i$ such that $\mathbf{X}\mathbf{u}_i \neq \mathbf{0}$.
Hence, $\mathbf{u}^\top\mathbf{P}\mathbf{u} >0 $ which indicates that $\mathbf{P}$ is positive definite.

\item Similarly, $\mathbf{u}^\top\mathbf{Q}\mathbf{u} \geq 0$ which indicates that $\mathbf{Q}$ is a positive semi-definite matrix.

\item Let's $\mathbf{S}^{*} \in \mathbb{R}^{n \times n}$ be a symmetric matrix such that $\mathbf{S}_{ii}^{*}=\bm{\theta}_i^\top\bm{\theta}_i$ and $\mathbf{S}_{ij}^{*}=\bm{\theta}_i^\top\bm{\theta}_j$, $\forall i,j\in \mathcal{N}, j\neq i$.
Hence, $\mathbf{S}_{ij} = \mathbf{S}_{ij}^{*}\mathbf{I}$, $\forall i,j \in \mathcal{N}$.
Note that $\mathbf{S}^{*}=[\bm{\theta}_1,\bm{\theta}_2,...,\bm{\theta}_n]^\top[\bm{\theta}_1,\bm{\theta}_2,...,\bm{\theta}_n]$ is a positive semi-definite matrix, as it is also symmetric, there exists at least one lower triangular matrix $\mathbf{L}^{*}\in\mathbb{R}^{n\times n}$ with non-negative diagonal elements~\citep{Higham1990} such that
\begin{equation*}
\mathbf{S}^{*} = \mathbf{L}^{*}{\mathbf{L}^{*}}^\top \text{(Cholesky Decomposition)}
\end{equation*}
Let $\mathbf{L}$ be a block matrix such that $\mathbf{L}_{ij} = \mathbf{L}_{ij}^{*}\mathbf{I}$, $\forall i,j\in\mathcal{N}$.
Therefore, $(\mathbf{L}\mathbf{L}^\top)_{ij}=(\mathbf{L}^{*}{\mathbf{L}^{*}}^\top)_{ij}\mathbf{I}=\mathbf{S}_{ij}^{*}\mathbf{I}= \mathbf{S}_{ij}$
which indicates that $\mathbf{S}=\mathbf{L}\mathbf{L}^\top$ is a positive semi-definite matrix. 

\item Since
\begin{equation*}
\begin{split}
\mathbf{u}^\top\mathbf{T}\mathbf{u} &= \sum_{i=1}^n\sum_{j\neq i}(\mathbf{u}_i^\top\bm{\theta}_j)^2 + \sum_{i=1}^n\sum_{j \neq i}(\mathbf{u}_i^\top\bm{\theta}_j)(\mathbf{u}_j^\top\bm{\theta}_i) \\
&= \sum_{i=1}^n\sum_{j\neq i}[\frac{1}{2}(\mathbf{u}_i^\top\bm{\theta}_j)^2+\frac{1}{2}(\mathbf{u}_j^\top\bm{\theta}_i)^2 + (\mathbf{u}_i^\top\bm{\theta}_j)(\mathbf{u}_j^\top\bm{\theta}_i)] \\
&= \frac{1}{2}\sum_{i=1}^n\sum_{j\neq i}(\mathbf{u}_i^\top\bm{\theta}_j + \mathbf{u}_j^\top\bm{\theta}_i)^2 \\
&\geq 0,
\end{split}
\end{equation*}
$\mathbf{T}$ is a positive semi-definite matrix.
\end{enumerate}

Combining the results above, $Jr(\bm{\theta})$ is a positive definite matrix which indicates that $\langle \mathcal{N}, \bm{\Theta}, (\widetilde{c}_i) \rangle$ has a unique NE.
As Theorem 3 points out, the game has at least one symmetric NE. 
Therefore, the NE is unique and must be symmetric.
\end{proof}

\section{Computing the Equilibrium}
\label{sec:find_solution}

Having shown that $\langle \mathcal{N}, \bm{\Theta}, (\widetilde{c}_i) \rangle$ has a unique symmetric Nash equilibrium, we now consider computing its solution.
We exploit the symmetry of the game which enables to reduce the search space of the game to only symmetric solutions.
Particularly, we derive the symmetric Nash equilibrium of $\langle \mathcal{N}, \bm{\Theta}, (\widetilde{c}_i) \rangle$ by solving a single convex optimization problem.
We obtain the following result.

\begin{theorem}
\label{thm:solution}
Let 
\begin{equation}
\label{eq:cvx}
f(\bm{\theta}) = \ell(\mathbf{X}\bm{\theta},\mathbf{y})+\frac{\beta(n+1)}{2\lambda^2}||\mathbf{z}-\mathbf{y}||_2^2(\bm{\theta}^\top\bm{\theta})^2,
\end{equation}
Then, the unique symmetric  NE of $\langle \mathcal{N}, \bm{\Theta}, (\widetilde{c}_i) \rangle$, $\{\bm{\theta}_i^{*}\}_{i=1}^n$, can be derived by solving the following convex optimization problem
\begin{equation}
\label{eq:solution}
\min_{\bm{\theta} \in \bm{\Theta}} f(\bm{\theta})
\end{equation}
and then letting $\bm{\theta}_i^{*} = \bm{\theta}^{*}, \forall i \in \mathcal{N}$, where $\bm{\theta}^{*}$ is the solution of Eq.~\eqref{eq:solution}.
\end{theorem}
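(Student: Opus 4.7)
The plan is to exploit what Theorem \ref{thm:uniqueness_sym_MLNE} has already given us: the approximate game has a \emph{unique} Nash equilibrium, and that equilibrium is symmetric, i.e.\ $\bm{\theta}_1^{*}=\cdots=\bm{\theta}_n^{*}=\bm{\theta}^{*}$ for some common vector $\bm{\theta}^{*}\in\bm{\Theta}$. So I only need to characterize this single common vector and show it is precisely the minimizer of $f$ on $\bm{\Theta}$.

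First I would verify that $f$ is strictly convex on $\bm{\Theta}$: the term $\ell(\mathbf{X}\bm{\theta},\mathbf{y})$ is strictly convex because, by Assumption~3, the columns of $\mathbf{X}$ are linearly independent, and $(\bm{\theta}^\top\bm{\theta})^2=\|\bm{\theta}\|_2^4$ is convex as the composition of the nondecreasing convex map $t\mapsto t^2$ on $t\ge 0$ with the convex map $\bm{\theta}\mapsto\|\bm{\theta}\|_2^2$. Hence $f$ attains a unique minimizer on the nonempty, compact, convex set $\bm{\Theta}$, characterized by the standard KKT condition $\langle\nabla f(\bm{\theta}^{*}),\bm{\theta}-\bm{\theta}^{*}\rangle\ge 0$ for all $\bm{\theta}\in\bm{\Theta}$.

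Next I would match first-order conditions on the symmetric diagonal. Differentiating $\widetilde{c}_i$ with respect to $\bm{\theta}_i$, separating the self-term $j=i$ from the cross-terms $j\neq i$, and then setting $\bm{\theta}_j=\bm{\theta}$ for every $j$, the derivative of $(\bm{\theta}_i^\top\bm{\theta}_i)^2$ contributes $4\|\bm{\theta}\|_2^2\bm{\theta}$, while each of the $n-1$ cross-terms $(\bm{\theta}_j^\top\bm{\theta}_i)^2$ contributes $2\|\bm{\theta}\|_2^2\bm{\theta}$, so the quadratic block sums to $2(n+1)\|\bm{\theta}\|_2^2\bm{\theta}$. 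This yields
\[
\nabla_{\bm{\theta}_i}\widetilde{c}_i\big|_{\bm{\theta}_i=\bm{\theta}_{-i}=\bm{\theta}}
= 2\mathbf{X}^\top(\mathbf{X}\bm{\theta}-\mathbf{y})+\frac{2\beta(n+1)}{\lambda^2}\|\mathbf{z}-\mathbf{y}\|_2^2\,\|\bm{\theta}\|_2^2\,\bm{\theta},
\]
which is exactly $\nabla f(\bm{\theta})$ computed directly from \eqref{eq:cvx}. Consequently, the variational inequality characterizing the symmetric best-response condition for player $i$ on $\bm{\Theta}$ coincides with the KKT condition for \eqref{eq:solution}. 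Since $\widetilde{c}_i(\cdot,\bm{\theta}_{-i})$ is convex in $\bm{\theta}_i$ (so the variational inequality is necessary and sufficient for best response) and $f$ is strictly convex, a vector $\bm{\theta}^{*}$ satisfies this common condition if and only if it is both the minimizer of $f$ over $\bm{\Theta}$ and the common component of the symmetric NE. Invoking Theorem~\ref{thm:uniqueness_sym_MLNE} to identify uniqueness on both sides closes the argument.

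The main obstacle, and really the only non-bookkeeping step, is the combinatorial coefficient check: one has to see that the one self-term contributing a factor of $4$ together with the $n-1$ cross-terms each contributing a factor of $2$ yield exactly $2(n+1)$, which in turn forces the coefficient $\beta(n+1)/(2\lambda^2)$ in the definition of $f$. Once this coefficient is fixed, the symmetry reduction plus strict convexity of $f$ delivers the result with no further work.
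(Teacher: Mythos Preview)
Your proposal is correct and follows essentially the same route as the paper: both arguments restrict to the symmetric diagonal (guaranteed by Theorem~\ref{thm:uniqueness_sym_MLNE}), compute $\nabla_{\bm{\theta}_i}\widetilde{c}_i$ there, verify it equals $\nabla f$ via the $4+2(n-1)=2(n+1)$ coefficient count, and then identify the variational inequality for the best response with that for \eqref{eq:solution}. Your treatment is, if anything, slightly more careful in spelling out the strict convexity of $f$ and the sufficiency of the first-order condition.
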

\begin{proof}
We prove this theorem by characterizing the first-order optimality conditions of each learner's minimization problem in Eq.~\eqref{eq:mlne} with $c_i$ being replaced with its approximation $\widetilde{c}_i$. 
Let $\{\bm{\theta}_i^{*}\}_{i=1}^n$ be the NE, then it satisfies
\begin{equation}
\label{eq:first_order}
(\bm{\eta} - \bm{\theta}_i^{*})^\top\nabla_{\bm{\theta}_i}\widetilde{c}_i(\bm{\theta}_i^{*},\bm{\theta}_{-i}^{*}) \geq 0, \forall \bm{\eta} \in \bm{\Theta}, \forall i \in \mathcal{N}  
\end{equation}
where $\nabla_{\bm{\theta}_i}\widetilde{c}_i(\bm{\theta}_i^{*},\bm{\theta}_{-i}^{*})$ is the gradient of $\widetilde{c}_i(\bm{\theta}_i, \bm{\theta}_{-i})$ with respect to $\bm{\theta}_i$ and is evaluated at $\{\bm{\theta}_i^{*}\}_{i=1}^n$.
Then, Eq.~\eqref{eq:first_order} is equivalent to the equations as follows:
\begin{equation}
\label{eq:first_order2}
\left\{
\begin{aligned}
&(\bm{\eta} - \bm{\theta}_1^{*})^\top\nabla_{\bm{\theta}_1}\widetilde{c}_1(\bm{\theta}_1^{*},\bm{\theta}_{-1}^{*}) \geq 0, \forall \bm{\eta} \in \bm{\Theta},\\
&\bm{\theta}_1^{*} = \bm{\theta}_j^{*}, \forall j \in \mathcal{N}\setminus \{1\}  \\
\end{aligned}
\right.
\end{equation}
The reasons are: first, any solution of Eq.~\eqref{eq:first_order} satisfies Eq.~\eqref{eq:first_order2}, as $\{\bm{\theta}_i^{*}\}_{i=1}^n$ is symmetric;
Second, any solution of Eq.~\eqref{eq:first_order2} also satisfies Eq.~\eqref{eq:first_order}.
By definition of symmetric game, if $\bm{\theta}_1^{*} = \bm{\theta}_j^{*}$, then
$\nabla_{\bm{\theta}_1}\widetilde{c}_1(\bm{\theta}_1^{*},\bm{\theta}_{-1}^{*}) = \nabla_{\bm{\theta}_j}\widetilde{c}_j(\bm{\theta}_j^{*},\bm{\theta}_{-j}^{*})$,
and we have 
\[ (\bm{\eta} - \bm{\theta}_j^{*})^\top\nabla_{\bm{\theta}_j}\widetilde{c}_j(\bm{\theta}_j^{*},\bm{\theta}_{-j}^{*}), \forall \bm{\eta} \in \bm{\Theta}, \forall j \in \mathcal{N}\setminus\{1\} \]
Hence, Eq.~\eqref{eq:first_order} and Eq.~\eqref{eq:first_order2} are equivalent.
Eq.~\eqref{eq:first_order2} can be further rewritten as 
\begin{equation}
\label{eq:minimum_principle}
(\bm{\eta} - \bm{\theta}_1^{*})^\top\nabla_{\bm{\theta}_1}\widetilde{c}_1(\bm{\theta}_1^{*},\bm{\theta}_{-1}^{*})|_{\bm{\theta}_1^{*}=...=\bm{\theta}_n^{*}} \geq 0, \forall \bm{\eta} \in \bm{\Theta}.
\end{equation}
We then let
\begin{equation}
\label{eq:F}
\begin{split}
F(\bm{\theta}_1^{*}) &= \nabla_{\bm{\theta}_1}\widetilde{c}_1(\bm{\theta}_1^{*},\bm{\theta}_{-1}^{*})|_{\bm{\theta}_1^{*}=...=\bm{\theta}_n^{*}}\\
&= 2\mathbf{X}^\top(\mathbf{X}\bm{\theta}_1^{*}-\mathbf{y})+\frac{2\beta(n+1)}{\lambda^2}||\mathbf{z}-\mathbf{y}||_2^2{\bm{\theta}_1^{*}}^\top\bm{\theta}_1^{*}\bm{\theta}_1^{*}.
\end{split}
\end{equation} 
Then, $F(\bm{\theta}_1^{*}) = \nabla_{\bm{\theta}_1}f(\bm{\theta}_1^{*})$ where $f(\cdot)$ is defined in Eq.~\eqref{eq:cvx}.
Hence, we have
\begin{equation}
\label{eq:vi}
(\bm{\eta} - \bm{\theta}_1^{*})^\top\nabla_{\bm{\theta}_1}f(\bm{\theta}_1^{*}) \geq 0, \forall \bm{\eta} \in \bm{\Theta},
\end{equation}
This means that $\bm{\theta}_1^{*}$ is the solution of the optimization problem in Eq.\eqref{eq:solution} which finally completes the proof.      
\end{proof}

A deeper look at Eq.~\eqref{eq:cvx} reveals that the Nash equilibrium can be obtained by each learner independently,  without knowing others' actions.
This means that the Nash equilibrium can be computed in a distributed manner while the convergence is still guaranteed.
Hence, our proposed approach is highly scalable, as increasing the number of learners does not impact the complexity of finding the Nash equilibrium.
We investigate the robustness of this equilibrium both using theoretical analysis and experiments in the remaining sections.

\section{Robustness Analysis}
\label{sec:robust}

We now draw a connection between the multi-learner equilibrium in the adversarial setting, derived above, and robustness, in the spirit of the analysis by \citet{xu2009robust}.
Specifically, we prove the equivalence between Eq.~\eqref{eq:solution} and a robust linear regression problem where data is maliciously corrupted by some disturbance $\boldsymbol{\triangle}$. Formally, a robust linear regression solves the following problem:
	\begin{equation}\label{eq:min_max}
		\min_{\bm{\theta} \in \bm{\Theta}} \max_{\boldsymbol{\triangle} \in \mathcal{U} } {|| \mathbf{y} - (\mathbf{X} + \boldsymbol{\triangle})\bm{\theta}   ||_2^2},
	\end{equation}
where the uncertainty set $\mathcal{U}=\{ \boldsymbol{\triangle} \in \mathbb{R}^{m \times d} \, | \, {\boldsymbol{\triangle}}^T{\boldsymbol{\triangle}}=\mathbf{G}:  |\mathbf{G}_{ij}| \le c|{\theta}_i {\theta}_j| \,\,\, \forall i,j \}$, with $c=\frac{\beta(n+1)}{2\lambda^2}{|| \mathbf{z}- \mathbf{y} ||_2^2}$. Note that $\bm{\theta}$ is a vector and ${\theta}_i$ is the $i$-th element of $\bm{\theta}$.

From a game-theoretic point of view, in training phase the defender is simulating an attacker. The attacker maximizes the training error by adding disturbance to $\mathbf{X}$. The magnitude of the disturbance is controlled by a parameter $c=\frac{\beta(n+1)}{2\lambda^2}{|| \mathbf{z}- \mathbf{y} ||_2^2}$. Consequently, the robustness of Eq.~\eqref{eq:min_max} is guaranteed if and only if the magnitude reflects the uncertainty interval. This sheds some light on how to choose $\lambda$, $\beta$ and $\mathbf{z}$ in practice. One strategy is to over-estimate the attacker's strength, which amounts to choosing small values of $\lambda$, large values of $\beta$ and exaggerated target $\mathbf{z}$. The intuition of this strategy is to enlarge the uncertainty set so as to cover potential adversarial behavior. In Experiments section we will show this strategy works well in practice. Another insight from Eq.~\eqref{eq:min_max} is that the fundamental reason \textit{MLSG} is robust is because it proactively takes adversarial behavior into account.

\begin{theorem}
\label{thm:robustness}
The optimal solution $\bm{\theta}^{\ast}$ of the problem in Eq.~\eqref{eq:solution} is an optimal solution to the robust optimization problem in Eq.~\eqref{eq:min_max}.
\end{theorem}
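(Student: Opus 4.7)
The plan is to evaluate the inner maximization in Eq.~\eqref{eq:min_max} in closed form, exhibit a worst-case $\boldsymbol{\triangle}$, and then compare the outer minimization with Eq.~\eqref{eq:solution}. First I would fix $\bm{\theta}$, set $\mathbf{r} = \mathbf{y}-\mathbf{X}\bm{\theta}$ and $\mathbf{v}=\boldsymbol{\triangle}\bm{\theta}$, and expand
\[
\|\mathbf{y}-(\mathbf{X}+\boldsymbol{\triangle})\bm{\theta}\|_2^2 = \|\mathbf{r}\|_2^2 - 2\mathbf{r}^\top\mathbf{v} + \bm{\theta}^\top\mathbf{G}\bm{\theta}.
\]
The entrywise constraint $|G_{ij}|\le c|\theta_i\theta_j|$ from $\mathcal{U}$ yields $\bm{\theta}^\top\mathbf{G}\bm{\theta} = \sum_{i,j}\theta_i G_{ij}\theta_j \le c\sum_{i,j}\theta_i^2\theta_j^2 = c(\bm{\theta}^\top\bm{\theta})^2$, so $\|\mathbf{v}\|_2 \le \sqrt{c}\|\bm{\theta}\|_2^2$.

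Next I would show that both of these bounds are tight simultaneously by constructing the rank-one perturbation $\boldsymbol{\triangle}^{\star} = \sqrt{c}\,\mathbf{u}\bm{\theta}^\top$ with $\mathbf{u} = -\mathbf{r}/\|\mathbf{r}\|_2$ (or an arbitrary unit vector when $\mathbf{r}=0$). A direct check gives $(\boldsymbol{\triangle}^{\star})^\top\boldsymbol{\triangle}^{\star} = c\,\bm{\theta}\bm{\theta}^\top$, so $|G_{ij}^{\star}| = c|\theta_i\theta_j|$ and $\boldsymbol{\triangle}^{\star}\in\mathcal{U}$; moreover $\mathbf{v}^{\star}=-\sqrt{c}\|\bm{\theta}\|_2^2\,\mathbf{r}/\|\mathbf{r}\|_2$ is anti-parallel to $\mathbf{r}$, so both the triangle inequality and the entrywise bound saturate, giving
\[
\max_{\boldsymbol{\triangle}\in\mathcal{U}}\|\mathbf{y}-(\mathbf{X}+\boldsymbol{\triangle})\bm{\theta}\|_2^2 = \bigl(\|\mathbf{r}(\bm{\theta})\|_2 + \sqrt{c}\|\bm{\theta}\|_2^2\bigr)^2.
\]

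Finally, since $x\mapsto x^2$ is monotone on $[0,\infty)$, the outer minimum of the robust problem coincides with the minimum of $g(\bm{\theta}) = \|\mathbf{r}(\bm{\theta})\|_2 + \sqrt{c}\|\bm{\theta}\|_2^2$, and what remains is to identify this minimizer with the minimizer $\bm{\theta}^{\star}$ of $f(\bm{\theta}) = \|\mathbf{r}(\bm{\theta})\|_2^2 + c\|\bm{\theta}\|_2^4$. This identification is the main obstacle, because a sum of a norm and a squared norm is not literally the same objective as a sum of two squared norms. I would attack it by comparing variational inequalities on $\bm{\Theta}$: writing the stationarity of $f$ at $\bm{\theta}^{\star}$ as $\langle 2\mathbf{X}^\top(\mathbf{X}\bm{\theta}^{\star}-\mathbf{y}) + 2c\|\bm{\theta}^{\star}\|_2^2\,\bm{\theta}^{\star},\,\bm{\eta}-\bm{\theta}^{\star}\rangle\ge 0$ for all $\bm{\eta}\in\bm{\Theta}$, and exhibiting this as the first-order condition for $g$ once the factor $\|\mathbf{r}(\bm{\theta}^{\star})\|_2$ produced by differentiating the $\|\cdot\|_2$ term is absorbed; once the alignment is verified, pairing $\bm{\theta}^{\star}$ with $\boldsymbol{\triangle}^{\star}(\bm{\theta}^{\star})$ exhibits a saddle point and completes the proof.
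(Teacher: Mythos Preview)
Your first two steps are clean and, in fact, sharper than the paper's: you obtain the closed-form inner value $(\|\mathbf{r}(\bm{\theta})\|_2+\sqrt{c}\,\|\bm{\theta}\|_2^{2})^{2}$, whereas the paper argues that the inner maximum equals $\|\mathbf{r}(\bm{\theta})\|_2^{2}+c\,\|\bm{\theta}\|_2^{4}=f(\bm{\theta})$ directly (upper bound via splitting the squared norm, lower bound via the same rank-one $\boldsymbol{\triangle}^{\star}$ but with $\mathbf{u}$ taken \emph{parallel} to $\mathbf{r}$), so that for the paper the outer minimization is literally Eq.~\eqref{eq:solution} and no further identification is needed. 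Your value carries the extra cross term $2\sqrt{c}\,\|\mathbf{r}\|_2\|\bm{\theta}\|_2^{2}$, and that discrepancy is precisely where your proposal stalls.

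The final ``identification'' step does not go through. With $\mathbf{r}=\mathbf{y}-\mathbf{X}\bm{\theta}$, the interior first-order condition for $g(\bm{\theta})=\|\mathbf{r}\|_2+\sqrt{c}\,\|\bm{\theta}\|_2^{2}$ is
\[
-\frac{\mathbf{X}^{\top}\mathbf{r}}{\|\mathbf{r}\|_2}+2\sqrt{c}\,\bm{\theta}=\mathbf{0},
\qquad\text{i.e.}\qquad
\mathbf{X}^{\top}\mathbf{r}=2\sqrt{c}\,\|\mathbf{r}\|_2\,\bm{\theta},
\]
whereas for $f(\bm{\theta})=\|\mathbf{r}\|_2^{2}+c\,\|\bm{\theta}\|_2^{4}$ it is $\mathbf{X}^{\top}\mathbf{r}=2c\,\|\bm{\theta}\|_2^{2}\,\bm{\theta}$. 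These coincide only in the knife-edge case $\|\mathbf{r}(\bm{\theta}^{\ast})\|_2=\sqrt{c}\,\|\bm{\theta}^{\ast}\|_2^{2}$, which nothing in the setup guarantees; the factor $\|\mathbf{r}(\bm{\theta}^{\ast})\|_2$ produced by differentiating the norm cannot simply be ``absorbed''. Already the scalar instance $\mathbf{X}=1$, $\mathbf{y}=1$ gives $\arg\min g=1/(2\sqrt{c})$ while $\arg\min f$ solves $2c\theta^{3}+\theta-1=0$, and these differ for generic $c$. So once you keep the cross term---which your choice $\mathbf{u}=-\mathbf{r}/\|\mathbf{r}\|_2$ genuinely produces---the two outer problems do not share a minimizer, and the argument cannot be closed along the line you propose. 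To match the paper's statement you would have to recover the inner value \emph{without} the cross term, as the paper does, rather than the tighter value you computed.
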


\begin{proof}
Fix $\bm{\theta}^{\ast}$, we show that 
\[\max_{\boldsymbol{\triangle} \in \mathcal{U}} {|| \mathbf{y} - (\mathbf{X} + \boldsymbol{\triangle})\bm{\theta}^{\ast}   ||}_2^2 = {|| \mathbf{y} - \mathbf{X}\bm{\theta}^{\ast} ||_2^2} + c{( {\bm{\theta}^{\ast}}^T \bm{\theta}^{\ast}   )}^2.\]
The left-hand side can be expanded as: 
		\begin{align*}
				& \max_{\boldsymbol{\triangle} \in \mathcal{U}} {|| \mathbf{y} - (\mathbf{X} + \boldsymbol{\triangle})\bm{\theta}^{\ast}   ||_2^2} \\
			 =  & \max_{\boldsymbol{\triangle} \in \mathcal{U}} {||  \mathbf{y} - \mathbf{X}\bm{\theta}^{\ast} - \boldsymbol{\triangle}\bm{\theta}^{\ast} ||_2^2} \\
			 \le &  \max_{\boldsymbol{\triangle} \in \mathcal{U}} {||\mathbf{y} - \mathbf{X}\bm{\theta}^{\ast} ||_2^2} +  \max_{\boldsymbol{\triangle} \in \mathcal{U}} { || \boldsymbol{\triangle} \bm{\theta}^{\ast} ||_2^2 } \\ 
			 = & \max_{\boldsymbol{\triangle} \in \mathcal{U}} {||\mathbf{y} - \mathbf{X}\bm{\theta}^{\ast} ||_2^2} + \max_{\boldsymbol{\triangle} \in \mathcal{U}} { {\bm{\theta}^{\ast}}^T {\boldsymbol{\triangle}}^T{\boldsymbol{\triangle}}  \bm{\theta}^{\ast}} \\
			 & \text{(substitute ${\boldsymbol{\triangle}}^T{\boldsymbol{\triangle}}=\mathbf{G}$)} \\
			 = &  {||\mathbf{y} - \mathbf{X}\bm{\theta}^{\ast} ||_2^2} + \max_{\mathbf{G} } {{\bm{\theta}^{\ast}}^T \mathbf{G} \bm{\theta}^{\ast}} \\
			 = &  {||\mathbf{y} - \mathbf{X}\bm{\theta}^{\ast} ||_2^2}  +  \max_{\mathbf{G} } {\sum_{i=1}^{d}{ {|{\theta}^{\ast}_i|}^2 \mathbf{G}_{ii}  }  } + {2\sum_{j=1}^{d}{   \sum_{i=1}^{j-1}{  {\theta}^{\ast}_i {\theta}^{\ast}_j \mathbf{G}_{ij}        }        }} \\
			 \le & {||\mathbf{y} - \mathbf{X}\bm{\theta}^{\ast} ||_2^2} +  c{\sum_{i=1}^{d}{ {|{\theta}^{\ast}_i|}^4  }  } + {2c\sum_{j=1}^{d}{   \sum_{i=1}^{j-1}{  ({\theta}^{\ast}_i {\theta}^{\ast}_j)^2      }        }} \\
			 = & {||\mathbf{y} - \mathbf{X}\bm{\theta}^{\ast} ||_2^2} + c\big( \sum_{i=1}^{d}{|{\theta}^{\ast}_i|^2} \big)^2 \\
			 = & {||\mathbf{y} - \mathbf{X}\bm{\theta}^{\ast} ||_2^2} + c{( {\bm{\theta}^{\ast}}^T \bm{\theta}^{\ast}   )}^2.
		\end{align*}

Now we define $\boldsymbol{\triangle}^{\ast}=[\sqrt{c}{\bm{\theta} }^{\ast}_1 \mathbf{u},\cdots,\sqrt{c}{\bm{\theta} }^{\ast}_n \mathbf{u}]$, where $\bm{\theta}^{\ast}_i$ is the $i$-th element of $\bm{\theta}^{\ast}$ and $\mathbf{u}$ is defined as:
  \begin{equation}
    \mathbf{u} \triangleq
    \begin{cases}
	       \frac{\mathbf{y} - \mathbf{X}\bm{\theta}^{\ast}}{|| \mathbf{y} - \mathbf{X}\bm{\theta}^{\ast} ||_2}, &  \textit{if}\ \mathbf{y} \ne \mathbf{X}\bm{\theta}^{\ast} \\
	      \textit{any vector with unit $L_2$ norm}, & \text{otherwise}
	    \end{cases}
  \end{equation}

Then we have:
		\begin{align}
			\begin{split}
			    & \max_{\boldsymbol{\triangle} \in \mathcal{U}} {\left\| \mathbf{y} - (\mathbf{X} + \boldsymbol{\triangle})\bm{\theta}^{\ast}   \right\| }_2^2 \\
			\ge &  {|| \mathbf{y} - (\mathbf{X} + \boldsymbol{\triangle}^{\ast})\bm{\theta}^{\ast}   ||}_2^2 \\
			=   & {|| \mathbf{y} - \mathbf{X}\bm{\theta}^{\ast} - \boldsymbol{\triangle}^{\ast}\bm{\theta}^{\ast}   ||}_2^2 \\
			= & {|| \mathbf{y} - \mathbf{X}\bm{\theta}^{\ast} - \sum_{i=1}^{d}{  \sqrt{c}|{\theta}^{\ast}_i|^2 \mathbf{u}    }  ||_2^2} \\
			& \text{($\mathbf{u}$ is in the same direction as $\mathbf{y} - \mathbf{X}\bm{\theta}^{\ast}$ )} \\
			= & {|| \mathbf{y} - \mathbf{X}\bm{\theta}^{\ast} ||_2^2} + {||  \sum_{i=1}^{d}{  \sqrt{c}|{\theta}^{\ast}_i|^2 \mathbf{u}    }  ||_2^2} \\
			= & {|| \mathbf{y} - \mathbf{X}\bm{\theta}^{\ast} ||_2^2} + c{( {\bm{\theta}^{\ast}}^T \bm{\theta}^{\ast}   )}^2
			\end{split}
		\end{align}
\end{proof} 


\section{Experiments}
\label{sec:experiments}

As previously discussed, a dataset is represented by $(\mathbf{X}, \mathbf{y})$, where $\mathbf{X}$ is the feature matrix and $\mathbf{y}$ are labels. 
We use $(\mathbf{x}_j, \mathbf{y}_j)$ to denote the $j$-th instance and its corresponding label. 
The dataset is equally divided into a training set $(\mathbf{X}_{\text{train}}, \mathbf{y}_{\text{train}})$ and a testing set $(\mathbf{X}_{\text{test}}, \mathbf{y}_{\text{test}})$. 
We conducted experiments on three datasets: Wine Quality (redwine), Boston Housing Market (boston), and PDF malware (PDF). 
The number of learners is set to 5. 

The Wine Quality dataset \cite{cortez2009modeling} contains 1599 instances and each instance has 11 features. 
Those features are physicochemical and sensory measurements for wine. 
The response variables are quality scores ranging from 0 to 10, where 10 represents for best quality and 0 for least quality. 
The Boston Housing Market dataset \cite{harrison1978hedonic} contains information collected by U.S Census Service in the area of Boston. 
This dataset has 506 instances and each instance has 13 features. 
Those features are, for example, per capita crime rate by town and average number of rooms per dwelling etc. 
The response variables are median home prices. 
The PDF malware dataset consists of 18658 PDF files collected from the internet.  
We employed an open-sourced tool \textit{mimicus}\footnote{https://github.com/srndic/mimicus} to extract 135 real-valued features from PDF files~\citep{laskov2014practical}. 
We then applied \textit{peepdf}\footnote{https://github.com/rohit-dua/peePDF} to score each PDF between 0 and 10, with a higher score indicating greater likelihood of being malicious.



Throughout, we abbreviate our proposed approach as \textit{MLSG}, and compare it to 
three other algorithms: ordinary least squares (\textit{OLS}) regression, as well as \textit{Lasso}, and Ridge regression (\textit{Ridge}). \textit{Lasso} and \textit{Ridge} are ordinary least square with $L_1$ and $L_2$ regularizations. 
In our evaluation, we simulate the attacker for different values of $\beta$ (the probability that a given instance is maliciously manipulated).
The specific attack targets $\mathbf{z}$ vary depending on the dataset; we discuss these below.
For our evaluation, we compute model parameters (for the equilibrium, in the case of \textit{MLSG}) on training data.
We then use test data to compute optimal attacks, characterized by Eq.~\eqref{eq:best_response}.
Let $\mathbf{X}_{\text{test}}'$ be the test feature matrix after adversarial manipulation, $\hat{\mathbf{y}}_{\text{test}}^A$ the associated predicted labels on manipulated test data, $\hat{\mathbf{y}}_{\text{test}}$ predicted labels on untainted test data, and $\mathbf{y}_{\text{test}}$ the ground truth labels for test data.
We use root expected mean square error (RMSE) as an evaluation metric, where the expectation is with respect to the probability $\beta$ of a particular instance being maliciously manipulated:
$\sqrt{\frac{\beta {(\hat{\mathbf{y}}_{\text{test}}^A - \mathbf{y}_{\text{test}})}^T{(\hat{\mathbf{y}}_{\text{test}}^A - \mathbf{y}_{\text{test}})}+(1-\beta) {(\hat{\mathbf{y}}_{\text{test}} - \mathbf{y}_{\text{test}})}^T{(\hat{\mathbf{y}}_{\text{test}} - \mathbf{y}_{\text{test}})}}{N}}$, where $N$ is the size of the test data.

\textbf{The redwine dataset}: Recall that the response variables in redwine dataset are quality scores ranging from 0 to 10. 
We simulated an attacker whose target is to increase the overall scores of testing data. 
In practice this could correspond to the scenario that wine sellers try to manipulate the evaluation of third-party organizations.  
We formally define the attacker's target as $\mathbf{z}=\mathbf{y} + \Delta$, where $\mathbf{y}$ is the ground-truth response variables and $\Delta$  is a real-valued vector representing the difference between the attacker's target and the ground-truth. Since the maximum score is 10, any element of $\mathbf{z}$ that is greater than 10 is clipped to 10. 
We define $\Delta$ to be homogeneous (all elements are the same); generalization to heterogeneous values is direct.
The mean and standard deviation of $\mathbf{y}$ are $\mu_r=5.64$ and $\sigma_r=0.81$. We let $\Delta=5\sigma_r \times \mathbf{1}$, where $\mathbf{1}$ is a vector with all elements equal to one. 
The intuition for this definition is to simulate the generating process of adversarial data. 
Specifically, by setting the attacker's target to an unrealistic value (i.e. in current case outside the $3-\sigma_r$ of $\mu_r$), the generated adversarial data $\mathbf{X}^{'}$ is supposed to be intrinsically different from $\mathbf{X}$. 
For ease of exposition we use the term \textit{defender} to refer to \textit{MLSG}.

Remember in Eq.\eqref{eq:3rd_bound} there are three hyper-parameters in the defender's loss function: $\lambda$, $\beta$, and $\mathbf{z}$. $\lambda$ is the regularization coefficient in the attacker's loss function Eq.\eqref{eq:attacker_true_cost}. It is negatively proportional to the attacker's strength. $\beta$ is the probability of a test data being malicious. In practice these three hyper-parameters are externally set by the attacker. In the first experiment we assume the defender knows the values of these three hyper-parameters, which corresponds to the \textit{best case}. 
The result is shown in Figure~\ref{fig:redwine_best_case}. Each bar is averaged over 50 runs, where at each run we randomly sampled training and test data. The regularization parameters of \textit{Lasso} and \textit{Ridge} were selected by cross-validation. Figure~\ref{fig:redwine_best_case} demonstrates that \emph{MLSG} approximate equilibrium solution is significantly more robust than conventional linear regression learning, with and without regularization.

\begin{figure}[h]
\centering
\begin{tabular}{c}
\includegraphics[width=2.2in]{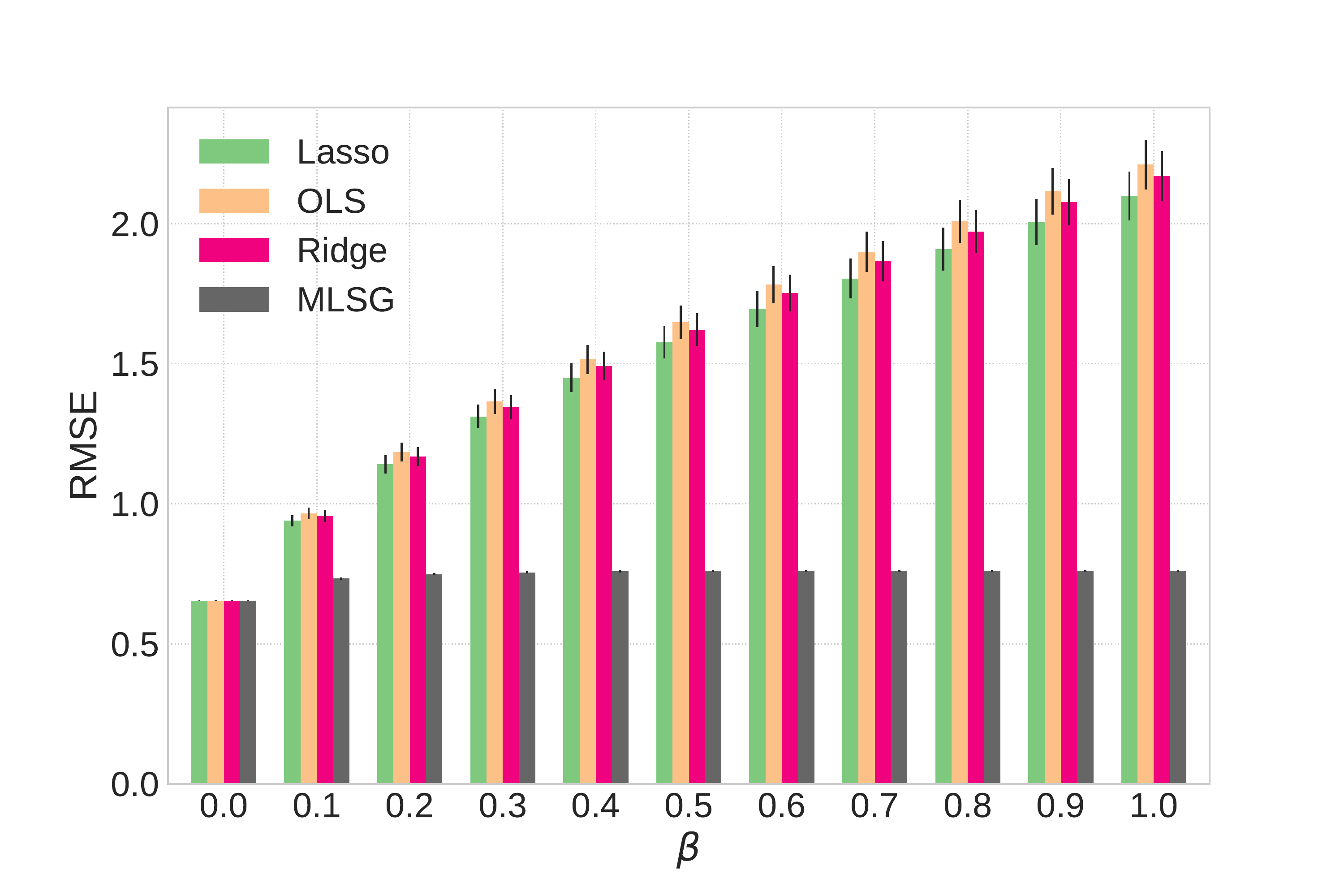}
\end{tabular}
\caption{RMSE of $\mathbf{y}^{'}$ and $\mathbf{y}$ on redwine dataset. The defender knows $\lambda$, $\beta$, and $\mathbf{z}$.}
\label{fig:redwine_best_case}
\end{figure}

In the second experiment we relaxed the assumption that the defender knows $\lambda$, $\beta$ and $\mathbf{z}$, and instead simulated the practical scenario that the defender obtains estimates for these (for example, from historical attack data), but the estimates have error.
We denote by $\hat{\lambda}=0.5$ and $\hat{\beta}=0.8$ the defender's estimates of the true $\lambda$ and $\beta$.\footnote{We tried alternative values of $\hat{\lambda}$ and $\hat{\beta}$, and the results are consistent. Due to space limitations we include them in supplemental materials.} 
Remember that $\beta$ is the probability of an instance being malicious and $\lambda$ is negatively proportional to the attacker's strength. So the estimation characterizes a \textit{pessimistic} defender that is expecting very strong attacks. We experimented two kinds of estimation about $\mathbf{z}$: 1) the defender overestimates $\mathbf{z}$: $\hat{\mathbf{z}} = t\mathbf{1}$, where $t$ is a random variable sampled from a uniform distribution over $[5\sigma_r, 10]$; and 2) the defender underestimates $\mathbf{z}$: $\hat{\mathbf{z}} = t\mathbf{1}$, where $t$ is sampled from $[0, 5\sigma_r]$. 
Due to space limitations we only present the results for the latter; the former can be found in the Appendix.
In Figure~\ref{fig:redwine_underestimate} the y-axis represents the actual values of $\lambda$, and the x-axis represents the actual values of $\beta$. The color bar on the right of each figure visualizes the average RMSE. Each cell is averaged over 50 runs. The result shows that even if there is a discrepancy between the defender's estimation and the actual adversarial behavior, \textit{MLSG} is consistently more robust than the other approaches. 

\begin{figure}[h]
	\begin{tabular}{cc}
		\includegraphics[width=1.5in]{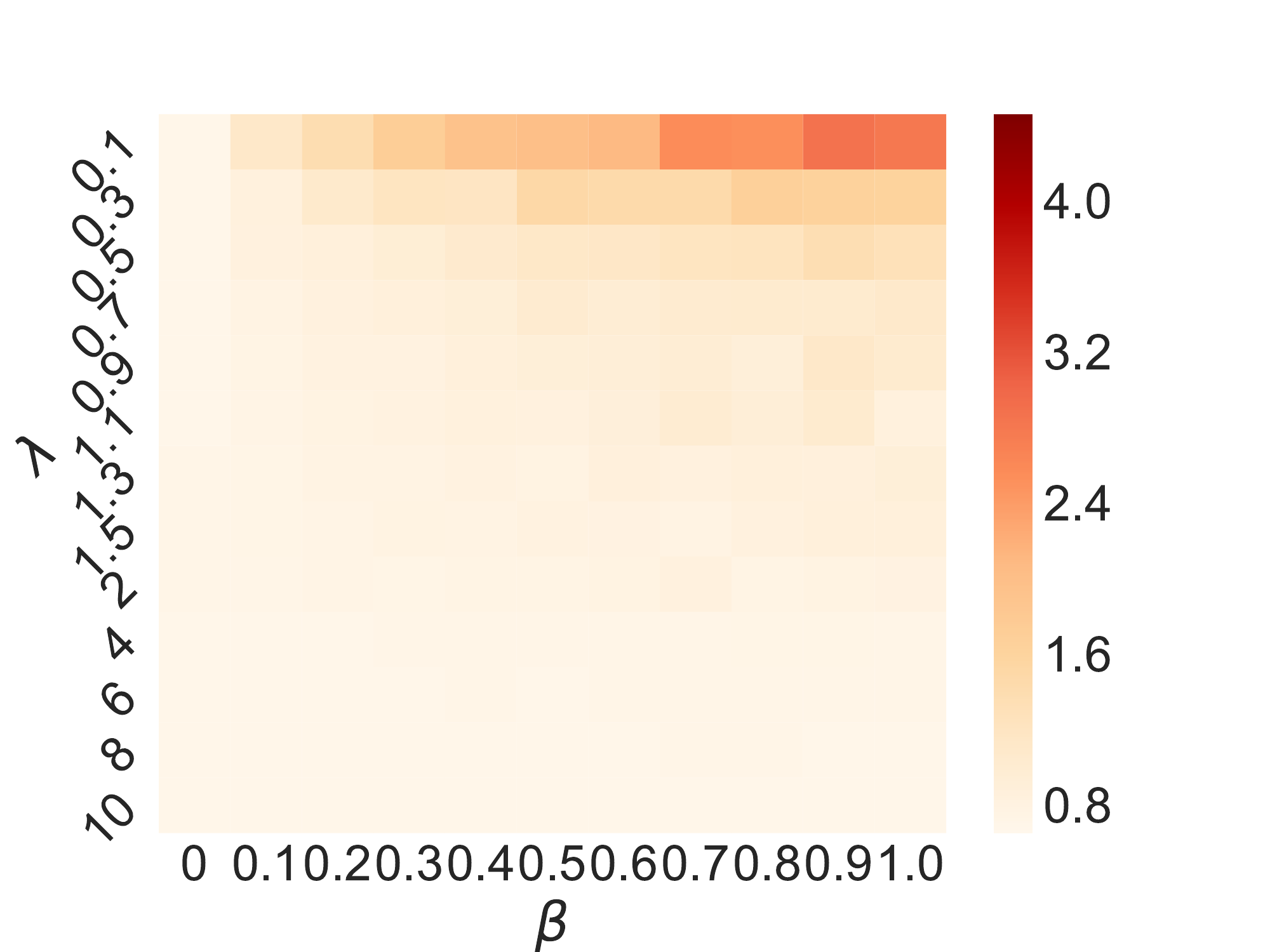} & \includegraphics[width=1.5in]{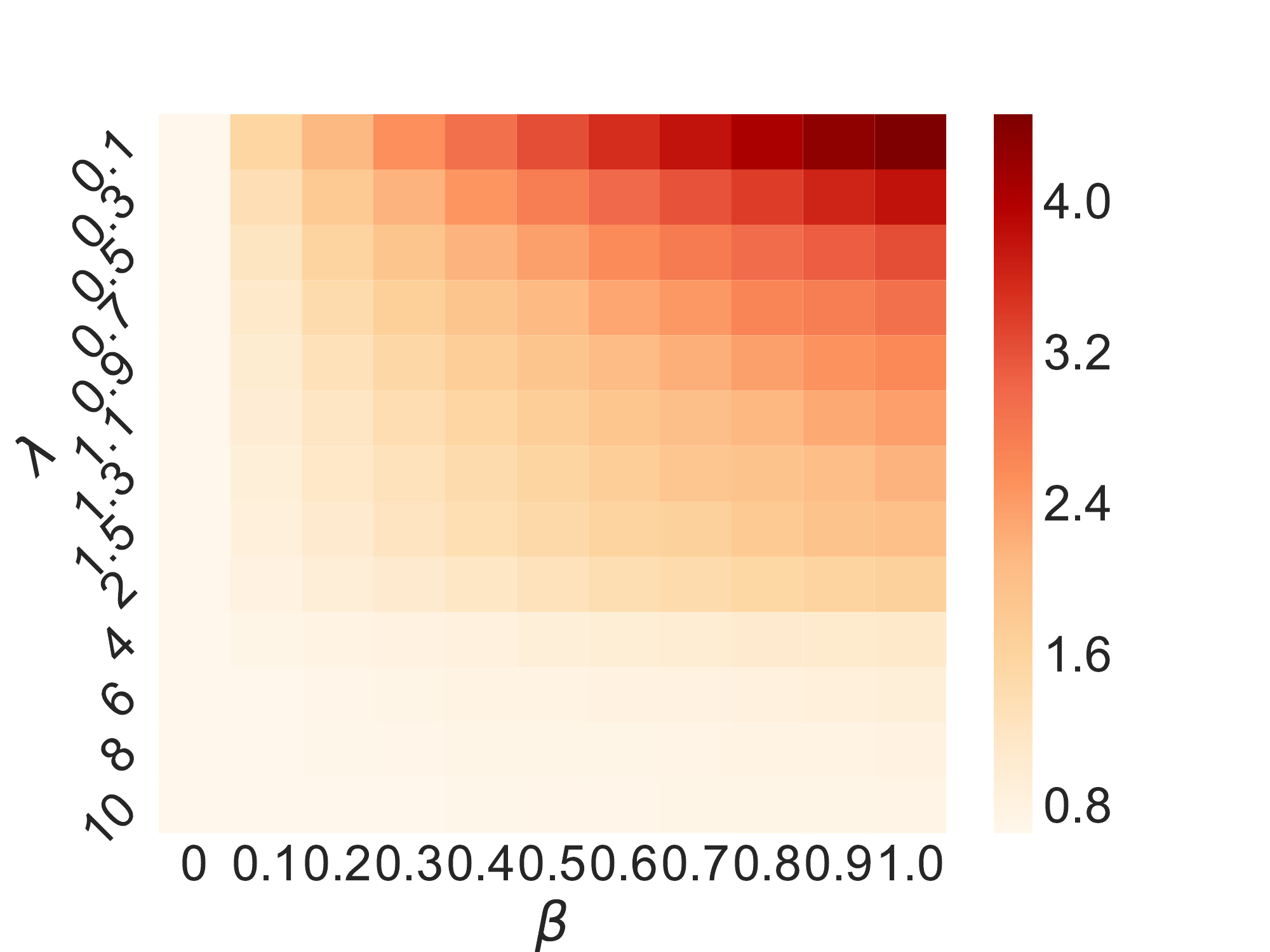}
	\end{tabular}
		\begin{tabular}{cc}
		\includegraphics[width=1.5in]{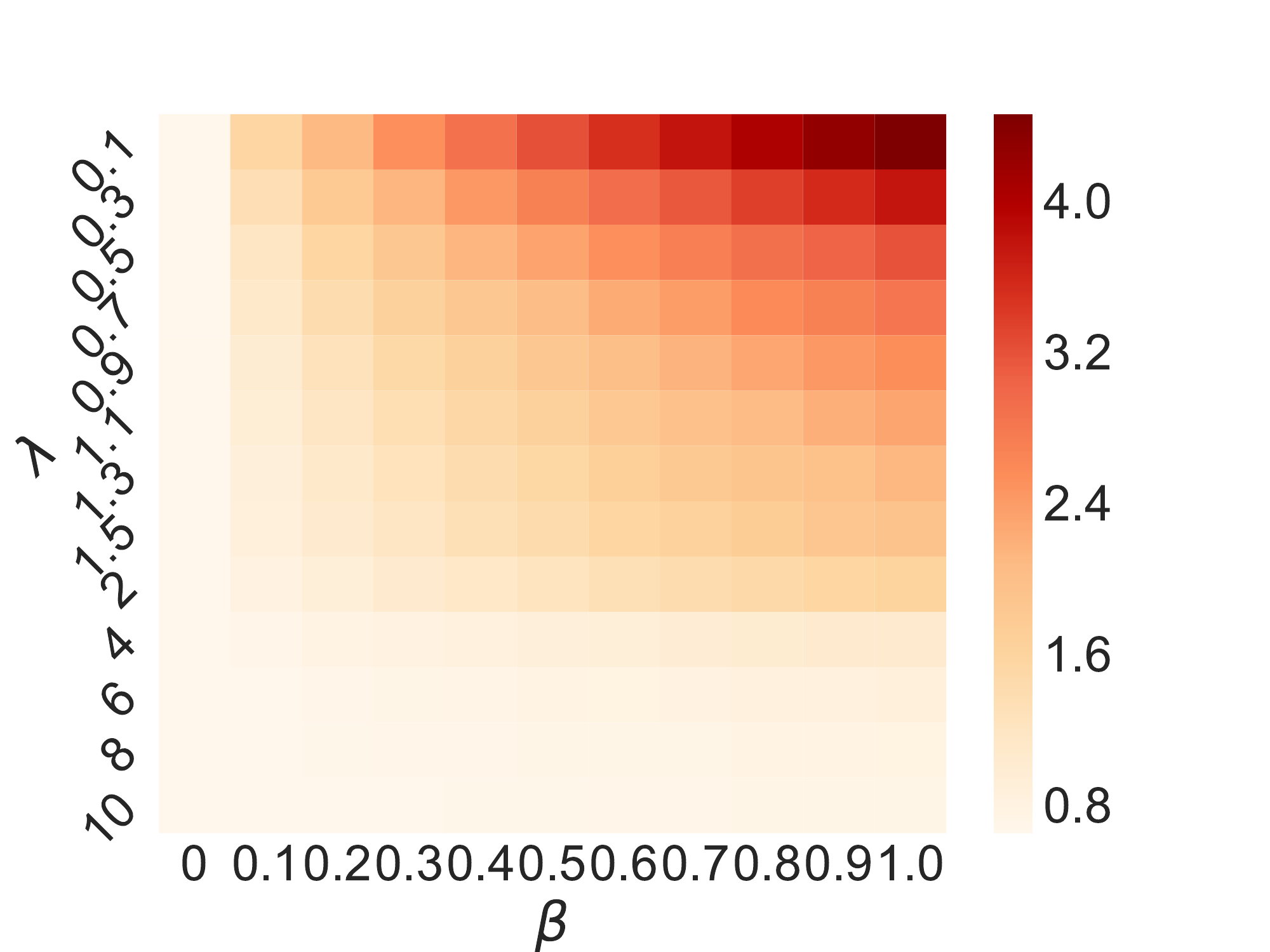} & \includegraphics[width=1.5in]{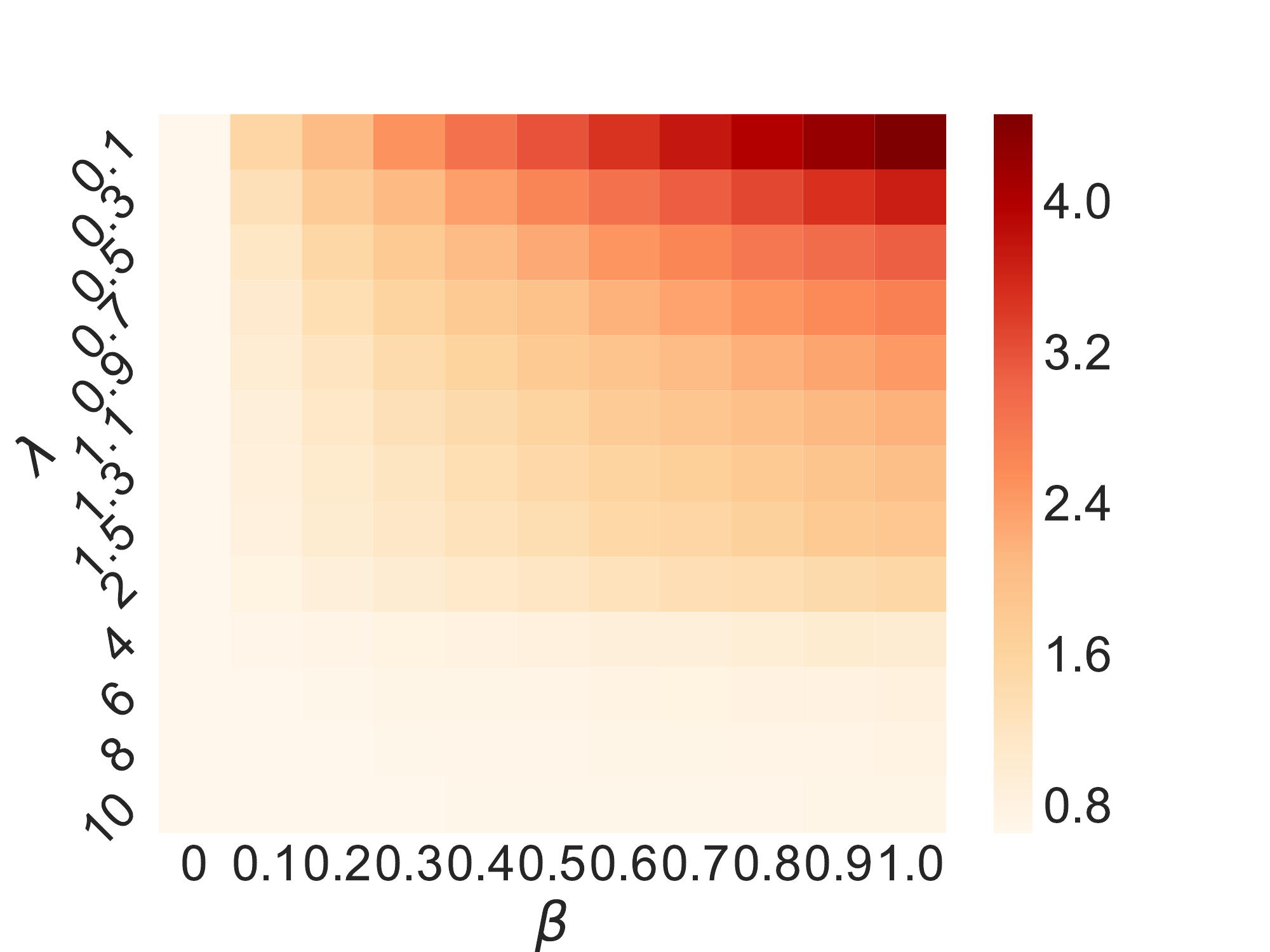}
	\end{tabular}
	\caption{The average RMSE across different values of actual $\lambda$ and $\beta$ on redwine dataset. From left to right: \textit{MLSG}, \textit{Lasso}, \textit{Ridge}, \textit{OLS}.}
	\label{fig:redwine_underestimate}
\end{figure}

 \textbf{The boston dataset}: The response variables in boston dataset are the median prices of houses in Boston area. We define the attacker's target as $\mathbf{z}=\mathbf{y} + \Delta$. The mean and standard deviation of $\mathbf{y}$  are $\mu_b=22.53$ and $\sigma_b=9.20$. We let $\Delta=2\sigma_b \times \mathbf{1}$. 
 We first present the experimental result for \textit{best case} in Figure~\ref{fig:boston_best_case}. The result further supports our claim that proactively considering adversarial behavior can improve robustness. 

 \begin{figure}[h]
 \centering
 \begin{tabular}{c}
 \includegraphics[width=2.6in]{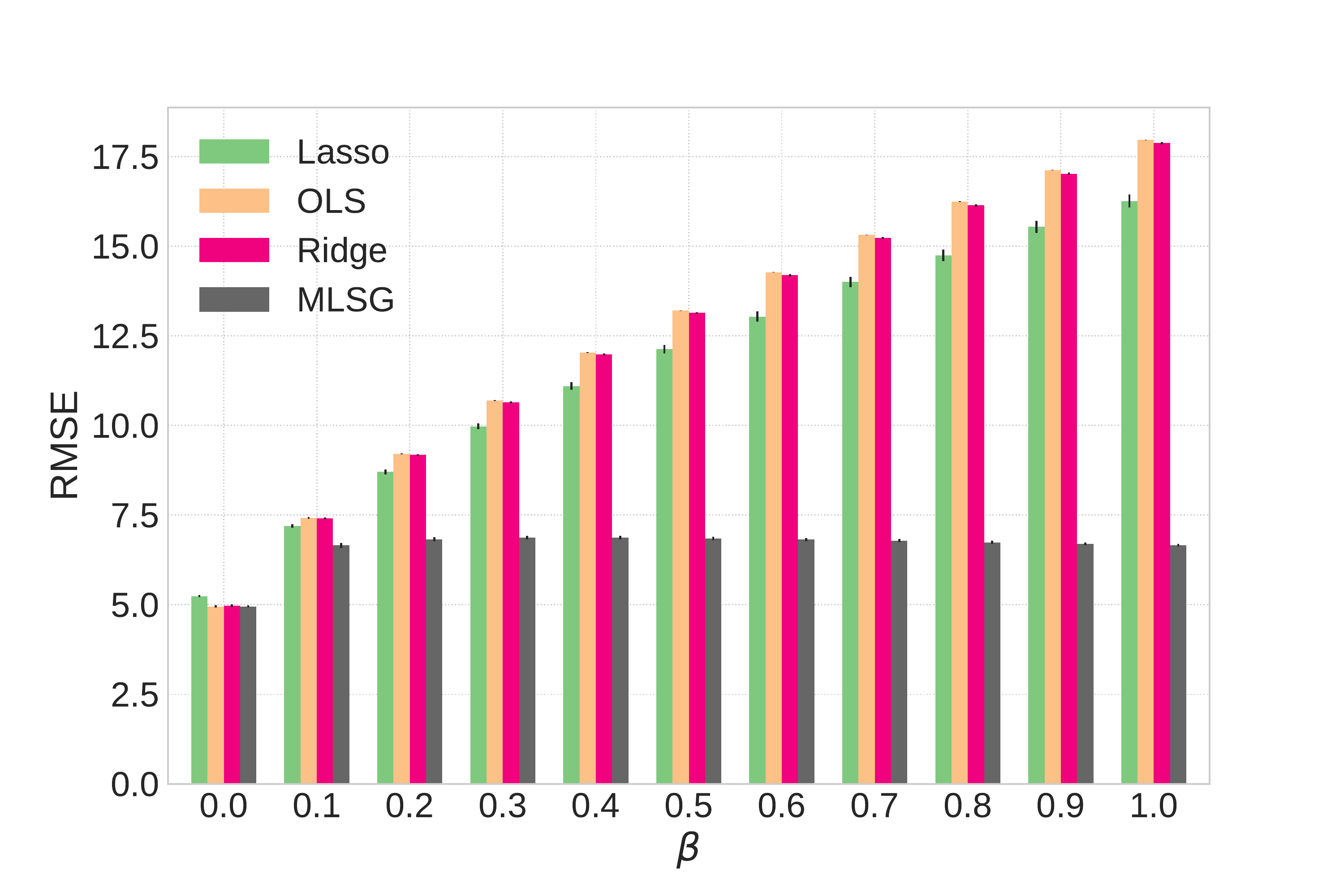}
 \end{tabular}
 \caption{RMSE of $\mathbf{y}^{'}$ and $\mathbf{y}$ on boston dataset. The defender knows $\lambda$, $\beta$, and $\mathbf{z}$.}
 \label{fig:boston_best_case}
 \end{figure}

We then relaxed the assumption that the defender knows $\lambda$, $\beta$ and $\mathbf{z}$ and set the defender's estimates of the true $\lambda$ and $\beta$ as $\hat{\lambda}=0.1$ and $\hat{\beta}=0.3$. Similar to the experimental setting used on redwine dataset we experimented with overestimation and underestimation of $\mathbf{z}$. 
We show the results for overestimation in Figure~\ref{fig:boston_overestimate},  where $\hat{\mathbf{z}} = t\mathbf{1}$ and $t$ is sampled from $[2\sigma_b, 3\sigma_b]$.  
From Figure~\ref{fig:boston_overestimate} we have several interesting observations.
First, \textit{MLSG} is more robust or equivalently robust compared with the three baselines, except when $\beta=0$.
This is to be expected, as $\beta = 0$ corresponds to non-adversarial data.
Second, we observe that for large $\beta$, \textit{Lasso} (middle left) and \textit{Ridge} (middle right) are less robust than \textit{OLS} (non-regularized linear regression) (rightmost).
This is surprising, as \citet{yang2013unified} showed that Lasso-like algorithms (including \textit{Lasso} and \textit{Ridge}) are robust in the sense that their original optimization problems can be converted to \textit{min-max} optimization problems, where the inner maximization takes malicious uncertainty about data into account. 
However, such robustness is only guaranteed if the regularization constant reflects the uncertainty interval; what our results suggest is when the interval bounds violated (i.e., regularization parameter is smaller than necessary), regularized models can actually degrade quickly.

 \begin{figure}[h]
 	\begin{tabular}{cc}
 		\includegraphics[width=1.6in]{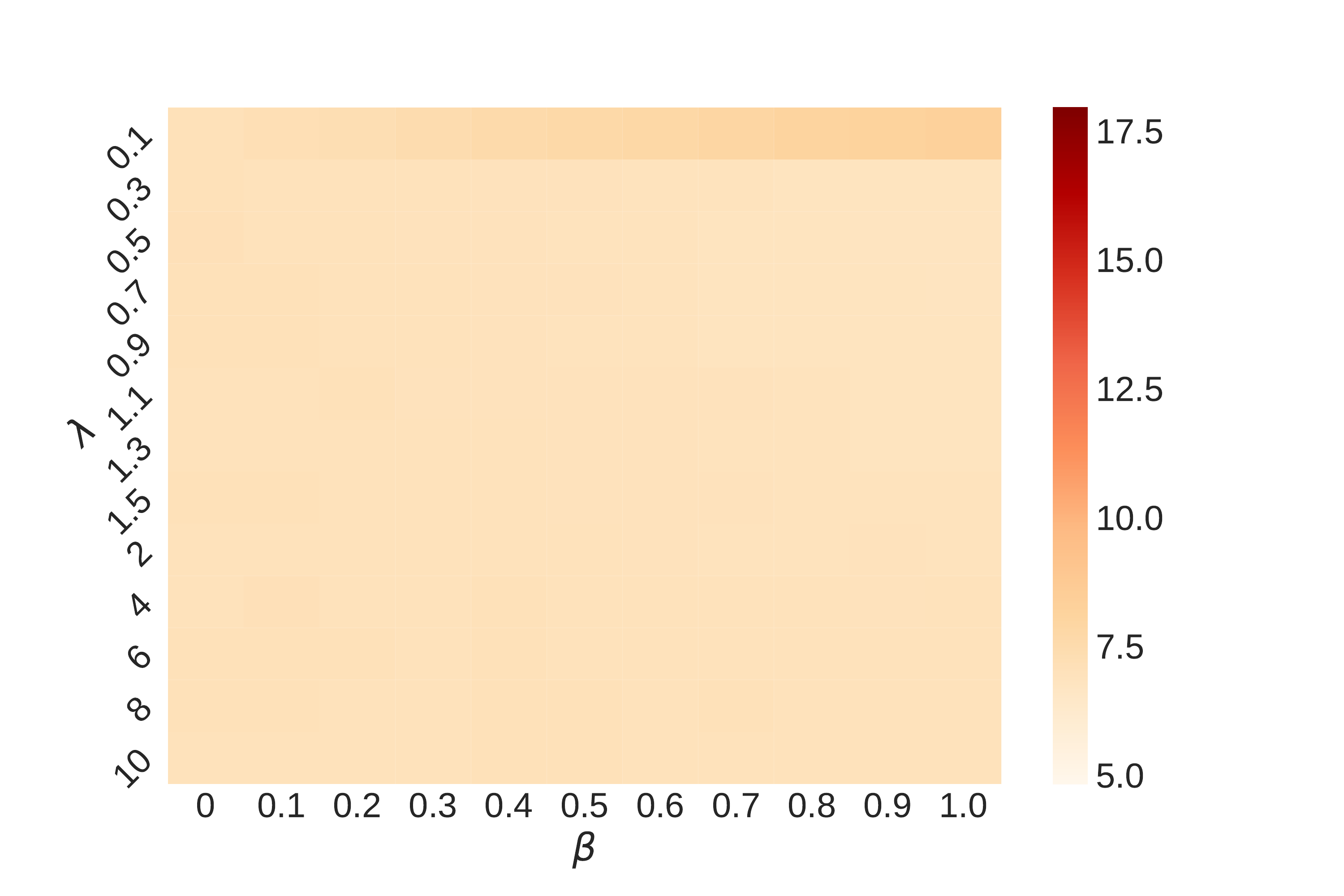} & \includegraphics[width=1.6in]{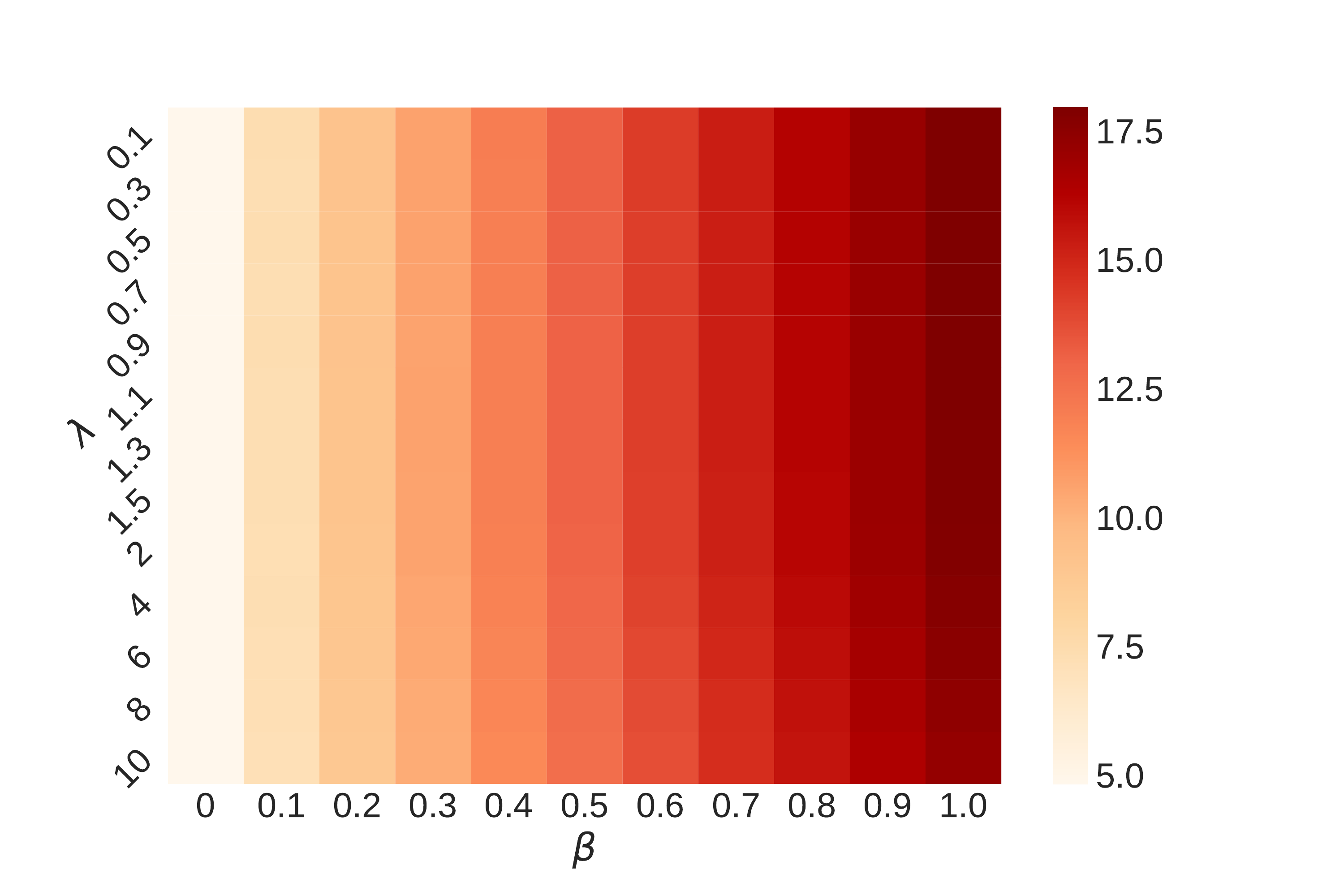}
 	\end{tabular}
 		\begin{tabular}{cc}
 		\includegraphics[width=1.6in]{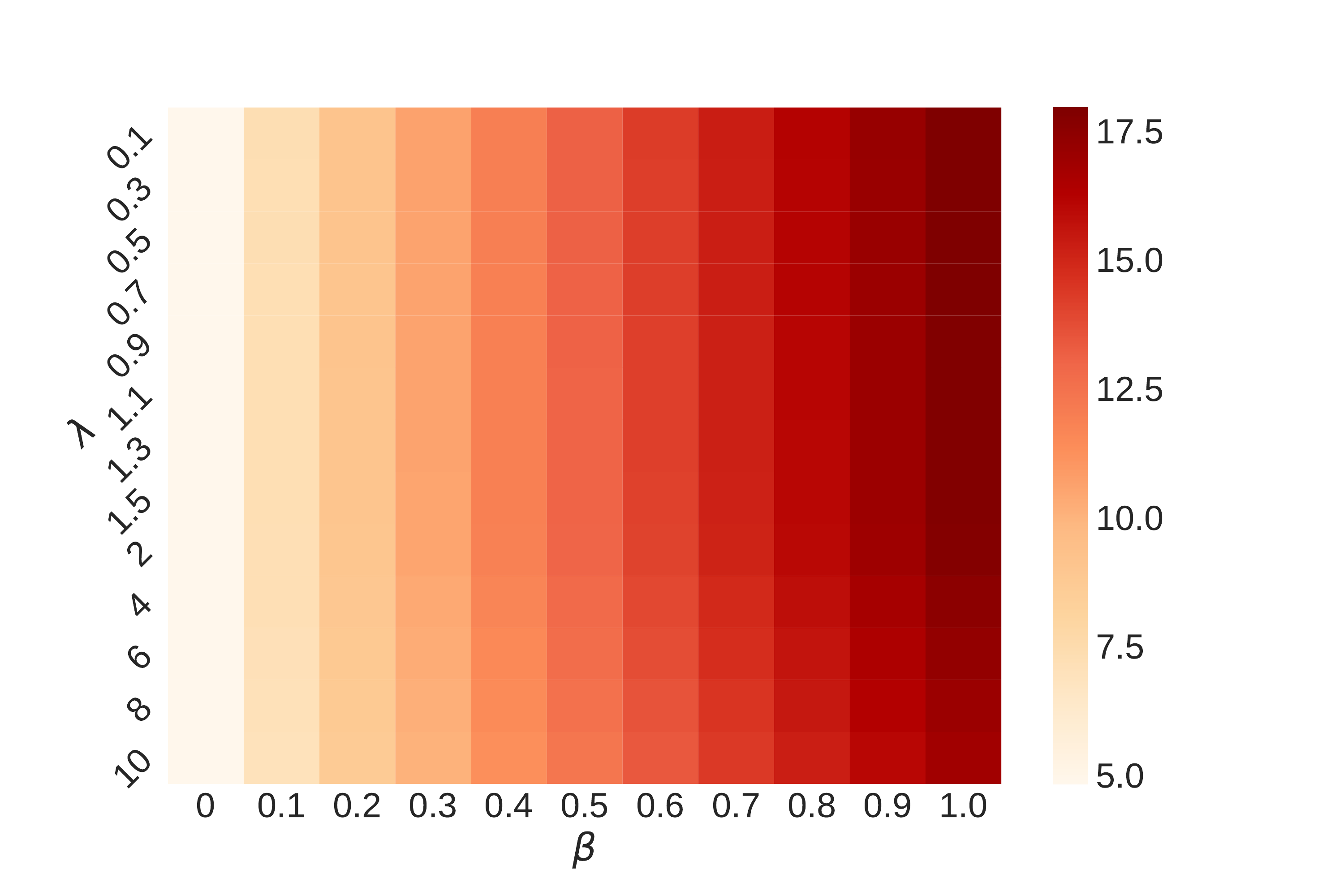} & \includegraphics[width=1.6in]{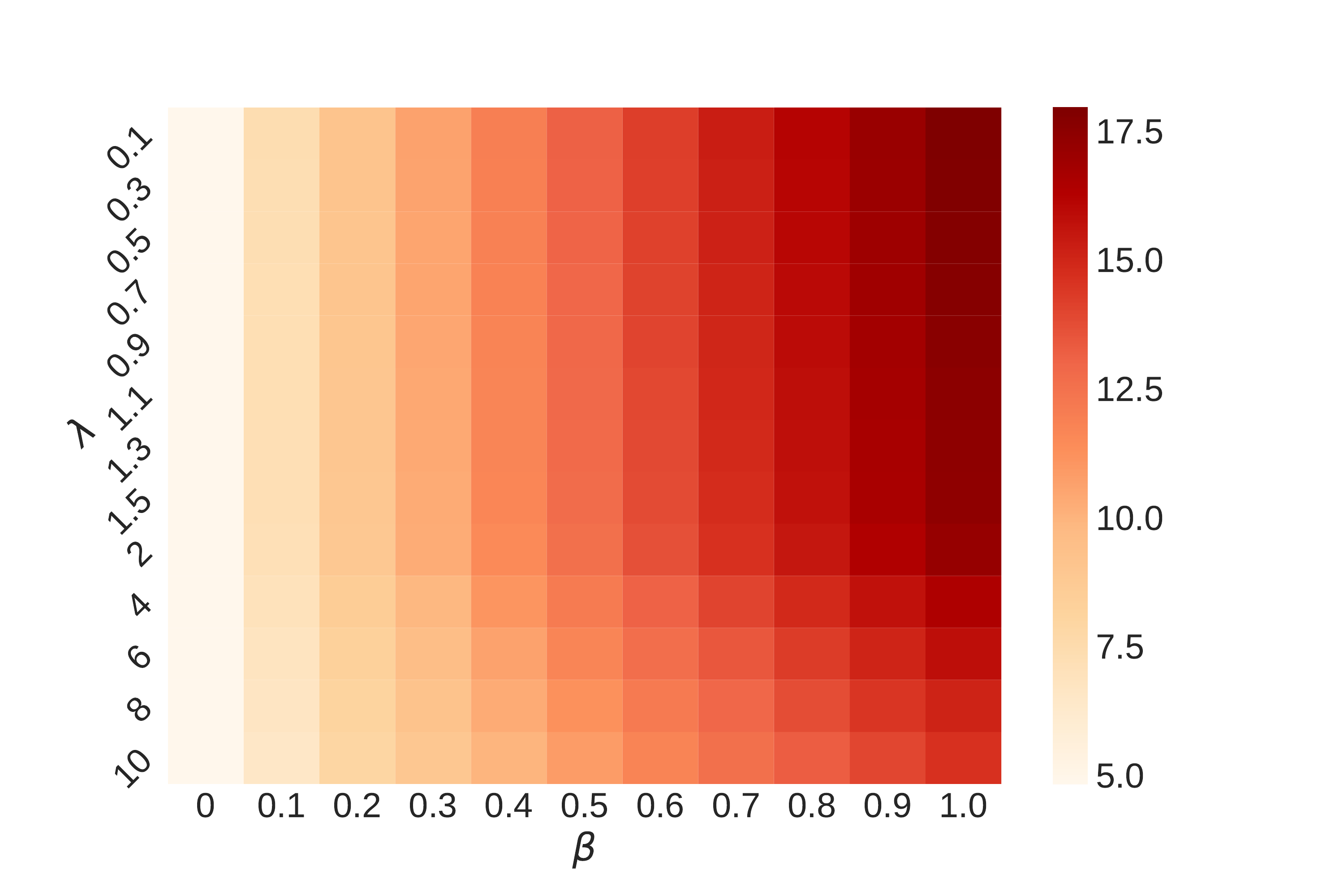}
 	\end{tabular}
 	\caption{The average RMSE across different values of actual $\lambda$ and $\beta$ on boston dataset. From left to right: \textit{MLSG}, \textit{Lasso}, \textit{Ridge}, \textit{OLS}.}
 	\label{fig:boston_overestimate}
 \end{figure}

\textbf{The PDF dataset}: The response variables of this dataset are malicious scores ranging between 0 and 10. The mean and standard deviation of $\mathbf{y}$ are $\mu_p=5.56$ and $\sigma_p=2.66$. Instead of letting the $\Delta$ be non-negative as in previous two datasets, the attacker's target is to descrease the scores of malicious PDFs. Consequently, we define $\Delta=-2\sigma_p \times \mathbf{1}_{\mathcal{M}}$, where $\mathcal{M}$ is the set of indices of malicious PDF and 
$\mathbf{1}_{\mathcal{M}}$ is a vector with only those elements indexed by $\mathcal{M}$ being one and others being zero. 
Our experiments were conducted on a subset (3000 malicious PDF and 3000 benign PDF) randomly sampled from the original dataset. We evenly divided the subset into training and testing sets. 
We applied PCA to reduce dimensionality of the data and selected the top-10 principal components as features.  The result for \textit{best case} is displayed in Figure~\ref{fig:PDF_best_case}. Notice that when $\beta=0$, \textit{MLSG} is less robust than \textit{Lasso}, as we would expect. 

Similarly as before we relaxed the assumption that the defender knows $\lambda$, $\beta$ and $\mathbf{z}$ and let the defender's estimation of the true $\lambda$ and $\beta$ be $\hat{\lambda}=1.5$ and $\hat{\beta}=0.5$. We also experimented with both overestimation and underestimation of $\mathbf{z}$. The defender's estimation is $\hat{\mathbf{z}}=-t \mathbf{1}_{\mathcal{M}}$. For overestimation setting $t$ is sampled from $[2\sigma_p, 3\sigma_p]$, and for underestimation setting it is sampled from $[\sigma_p, 2\sigma_p]$. 
The result for underestimated $\hat{\mathbf{z}}$ is showed in Figure~\ref{fig:PDF_underestimate}. Notice that in the leftmost plot of Figure~\ref{fig:PDF_underestimate} the area inside the blue rectangle corresponds to those cases where $\hat{\lambda}$ and $\hat{\beta}$ are overestimated and they are more robust than the remaining underestimated cases. Similar patterns can be observed in Figure~\ref{fig:redwine_underestimate}.
This further supports our claim that it is advantageous to overestimate adversarial behavior.

\begin{figure}[h]
\centering
\begin{tabular}{c}
\includegraphics[width=2.2in]{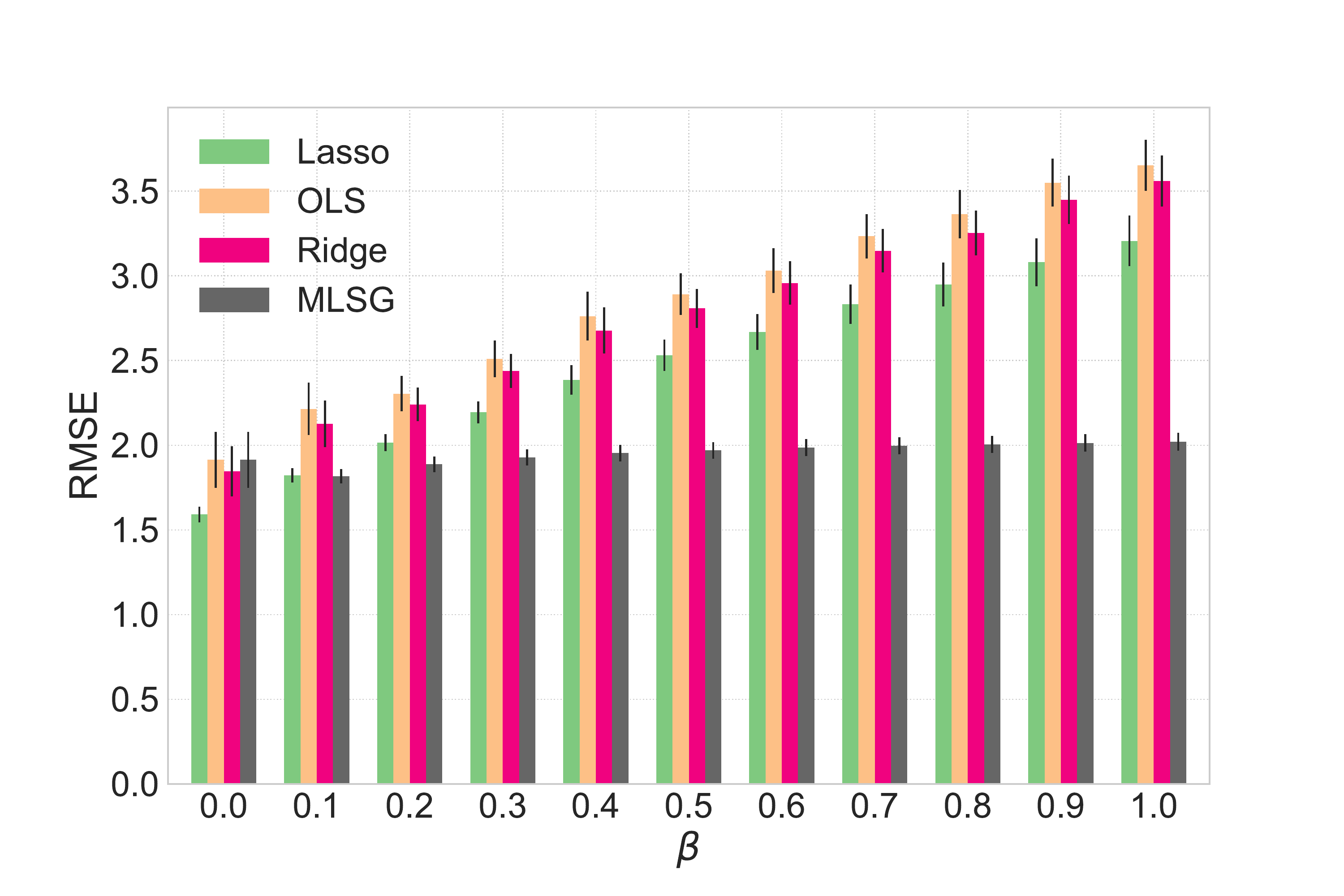}
\end{tabular}
\caption{RMSE of $\mathbf{y}^{'}$ and $\mathbf{y}$ on PDF dataset. The defender knows $\lambda$, $\beta$, and $\mathbf{z}$.}
\label{fig:PDF_best_case}
\end{figure}

\begin{figure}[h]
	\begin{tabular}{cc}
		\includegraphics[width=1.6in]{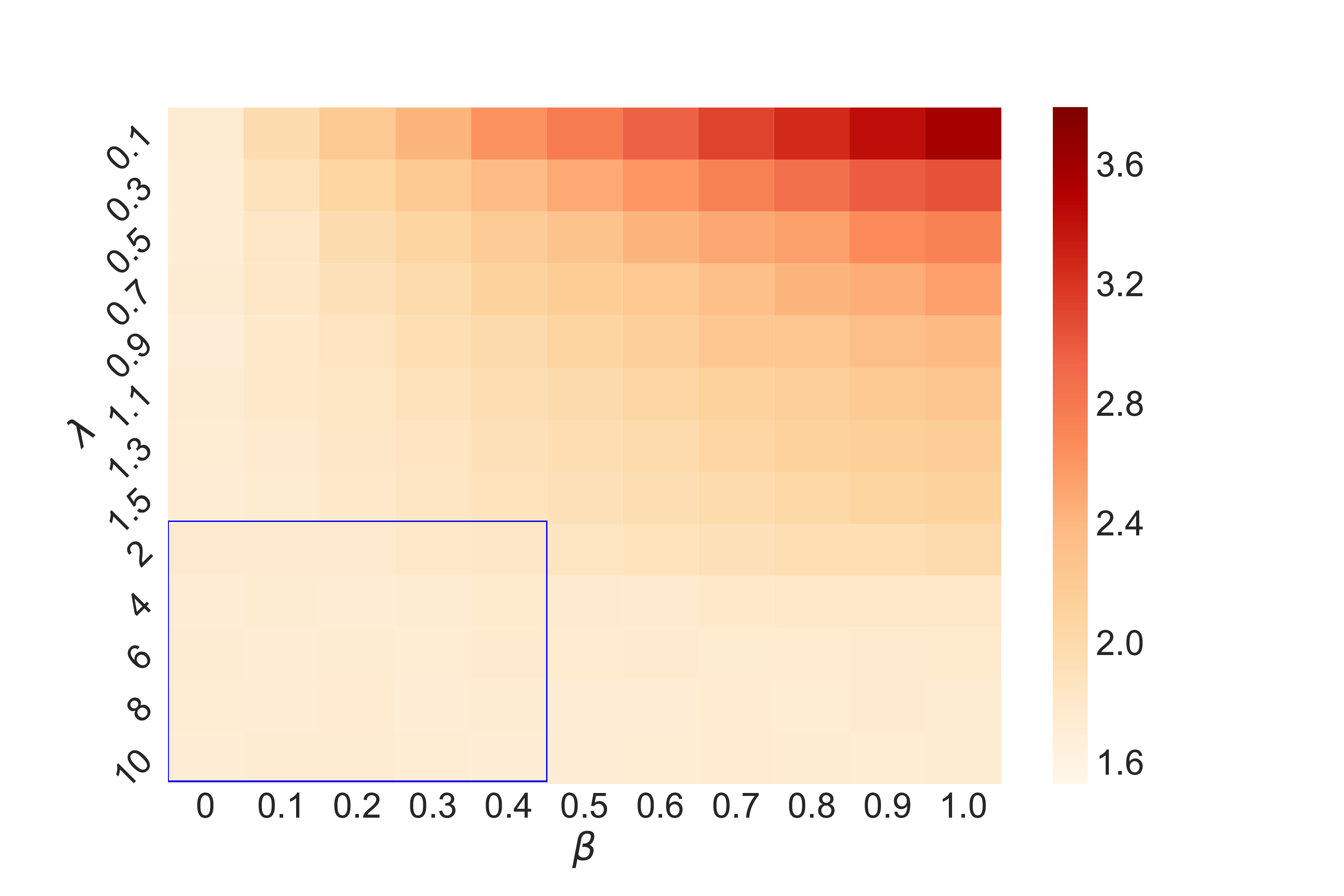} & \includegraphics[width=1.6in]{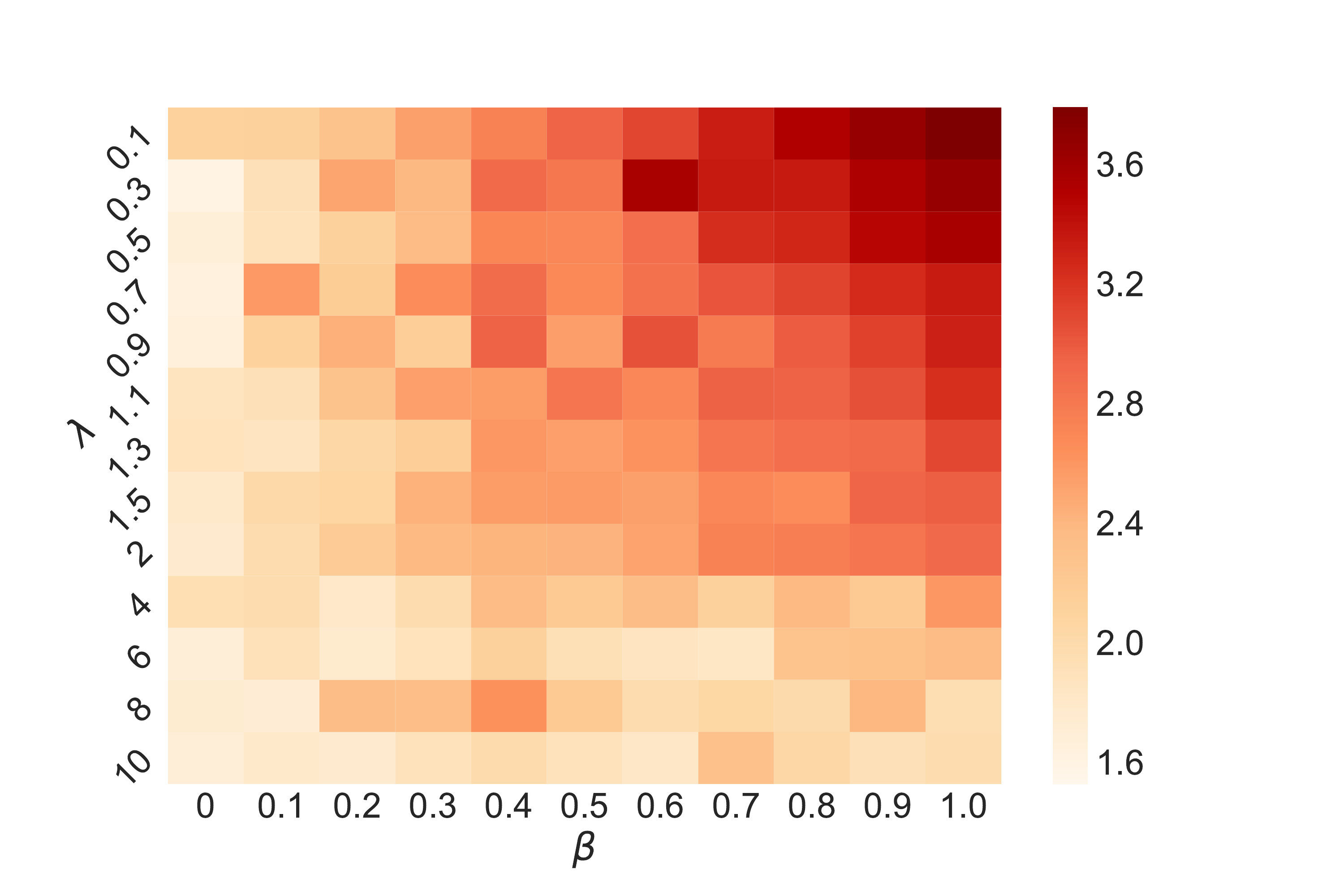}
	\end{tabular}
		\begin{tabular}{cc}
		\includegraphics[width=1.6in]{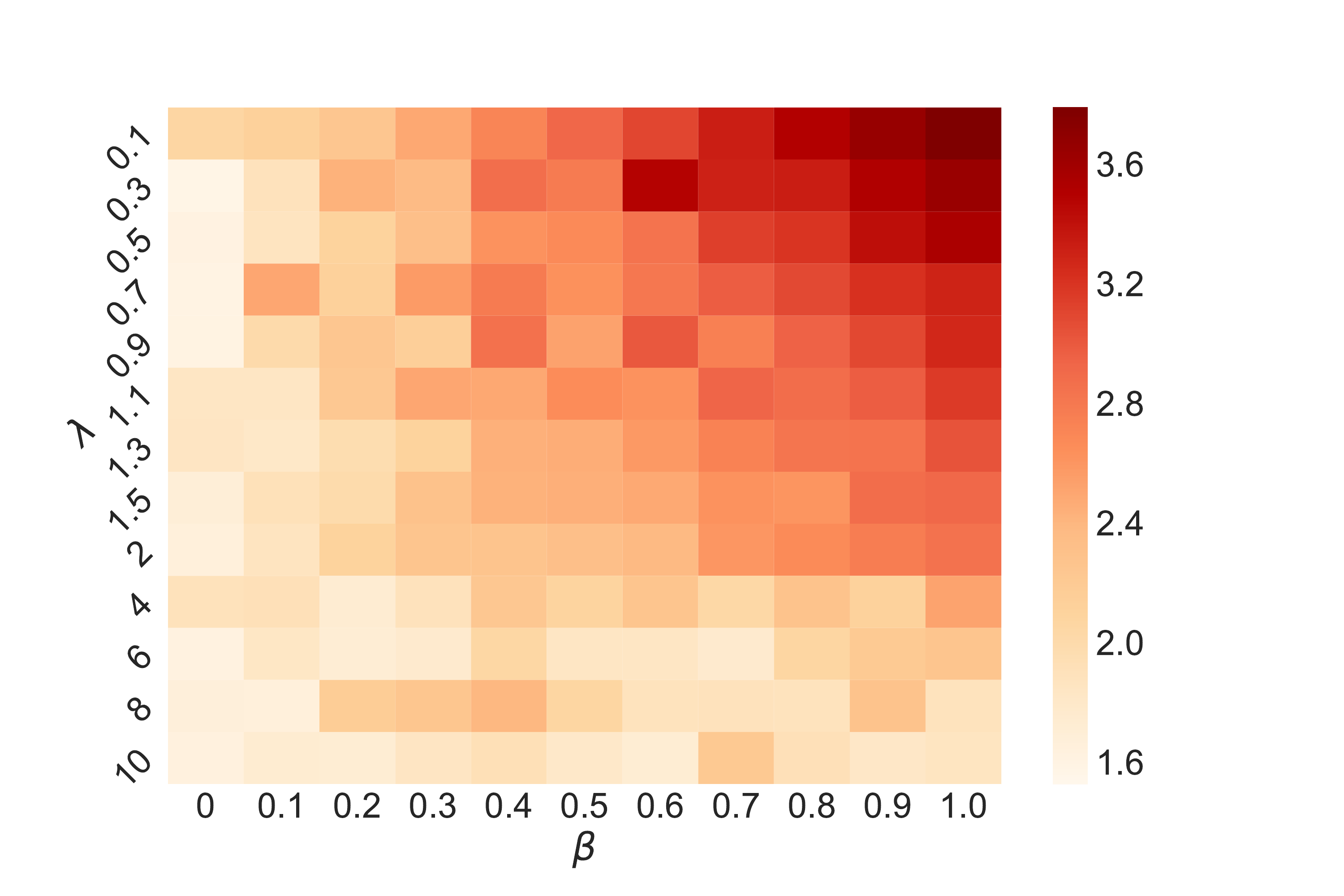} & \includegraphics[width=1.6in]{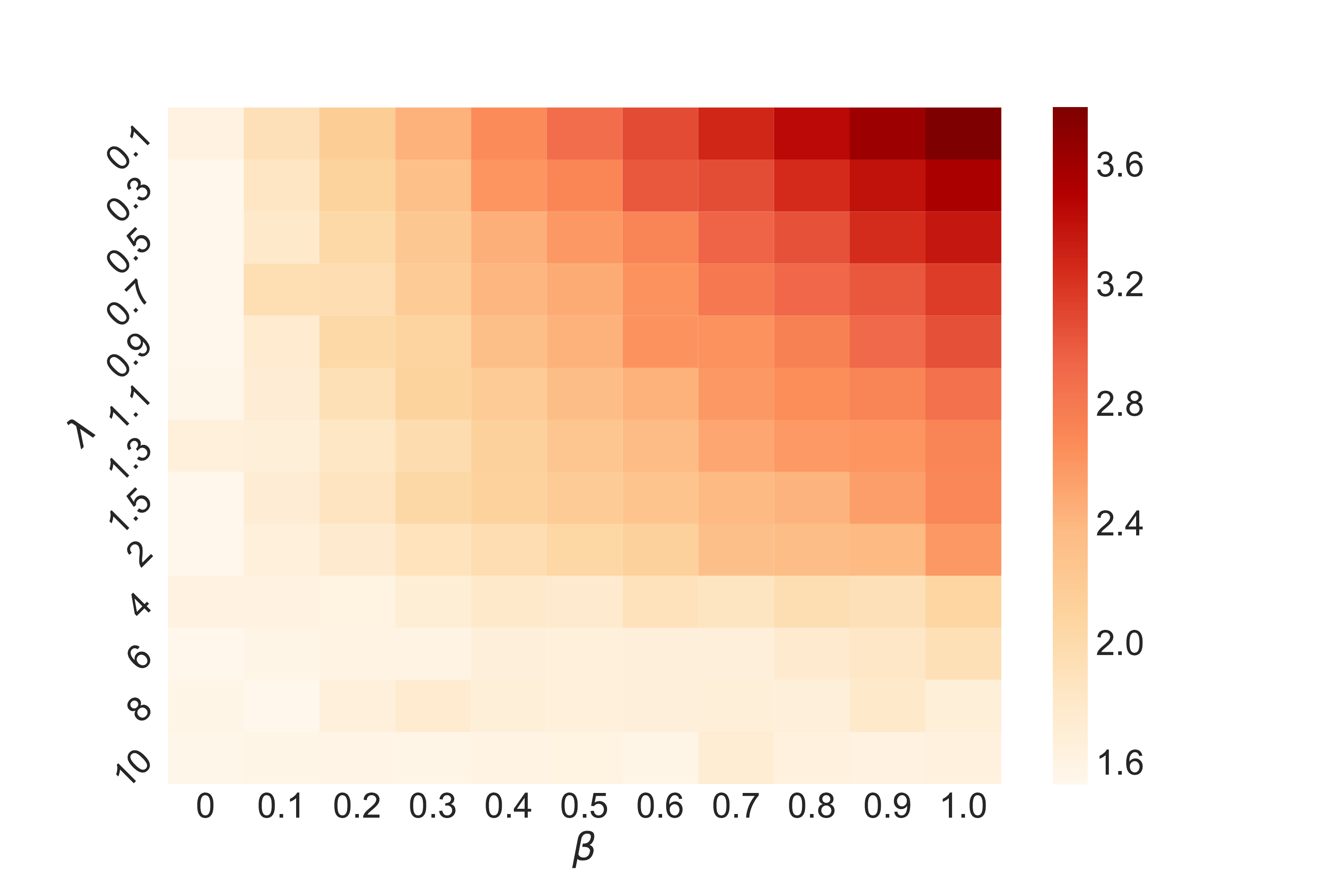}
	\end{tabular}
	\caption{The average RMSE across different values of actual $\lambda$ and $\beta$ on PDF dataset. From left to right: \textit{MLSG}, \textit{Lasso}, \textit{Ridge}, \textit{OLS}.}
	\label{fig:PDF_underestimate}
\end{figure}

\section{Conclusion}
\label{sec:conclusion}

We study the problem of linear regression in adversarial settings
involving multiple learners learning from the same or similar data.
In our model, learners first simultaneously decide on their models
(i.e., learn), and an attacker then modifies test instances to cause
predictions to err towards the attacker's target.
We first derive an upper bound on the cost functions of all learners,
and the resulting approximate game.
We then show that this game has a unique symmetric equilibrium, and
present an approach for computing this equilibrium by solving a convex
optimization problem.
Finally, we show that the equilibrium is robust, both theoretically,
and through an extensive experimental evaluation.

\section*{Acknowledgements}
We would like to thank Shiying Li in the Department of Mathematics at Vanderbilt University for her valuable and constructive suggestions for the proof of Theorem 4.
This work was partially supported by the National  Science Foundation (CNS-1640624, IIS-1526860, IIS-1649972), Office of  Naval Research (N00014-15-1-2621),  Army  Research  Office (W911NF-16-1-0069), and National Institutes of Health (R01HG006844-05).

\bibliography{ICML2018}
\bibliographystyle{icml2018}

\section*{Appendix}
\subsection{Supplementary results for the redwine dataset}

\begin{figure}[H]
	\begin{tabular}{cc}
		\includegraphics[width=1.5in]{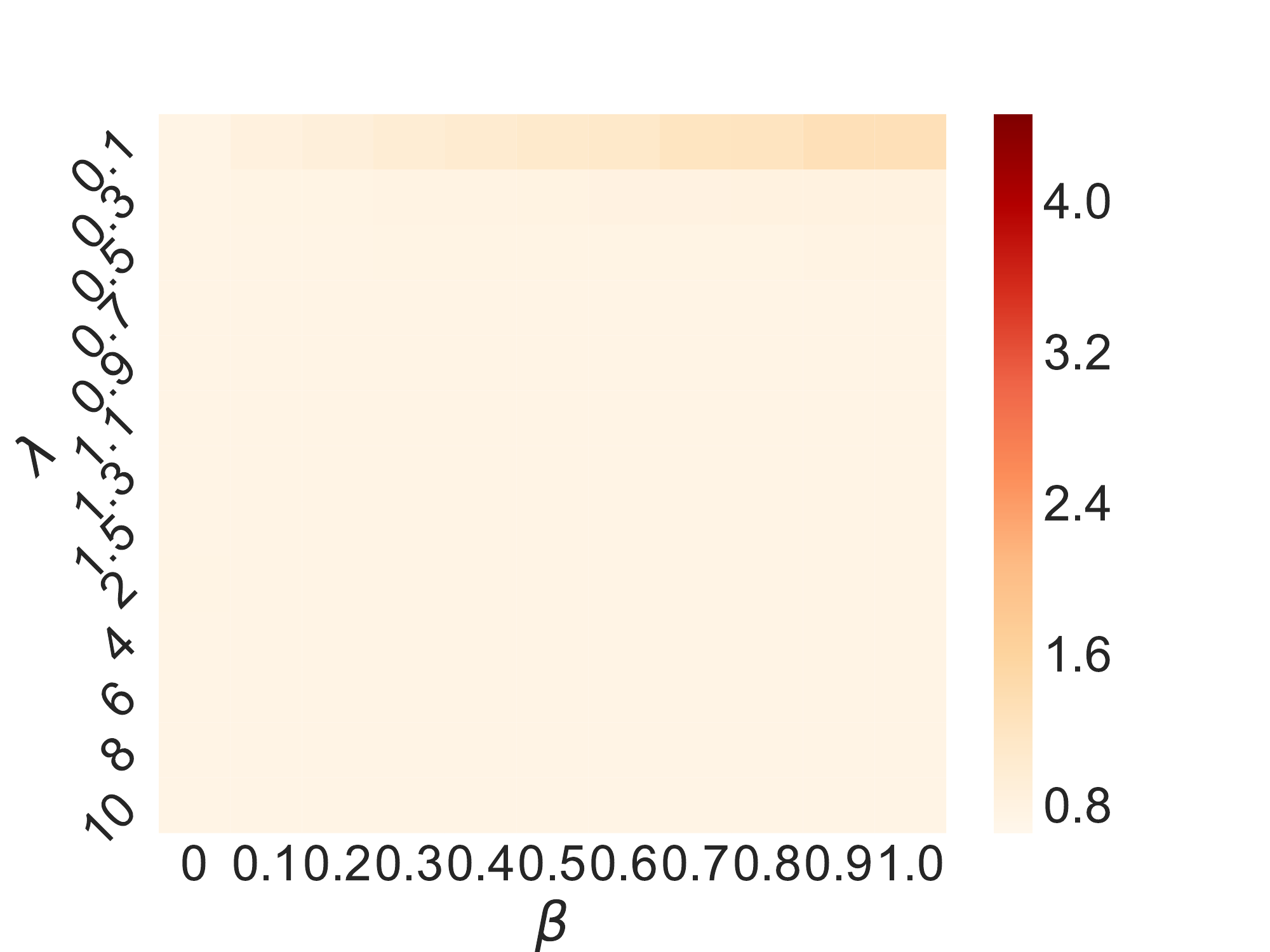} & \includegraphics[width=1.5in]{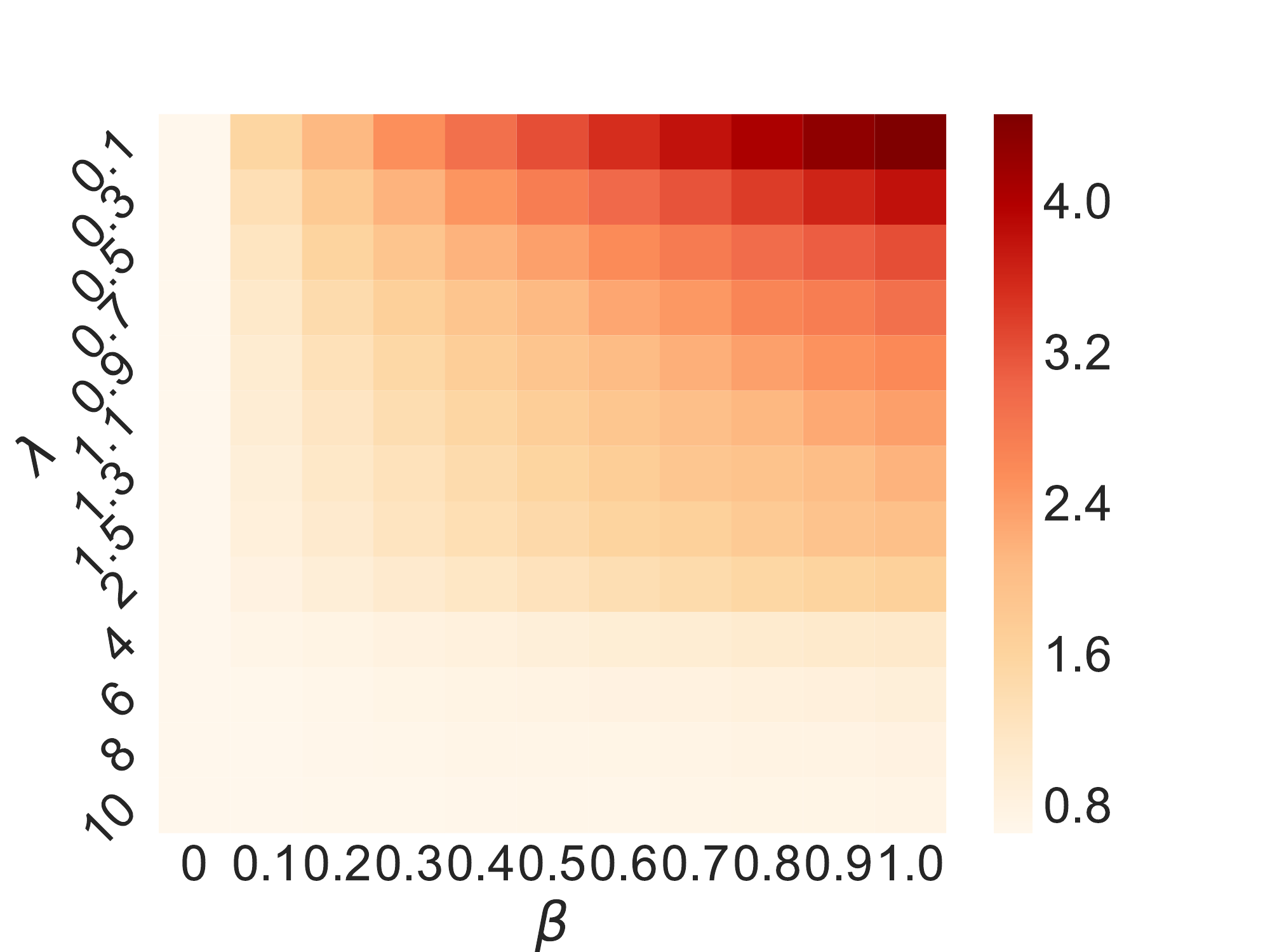}
	\end{tabular}
		\begin{tabular}{cc}
		\includegraphics[width=1.5in]{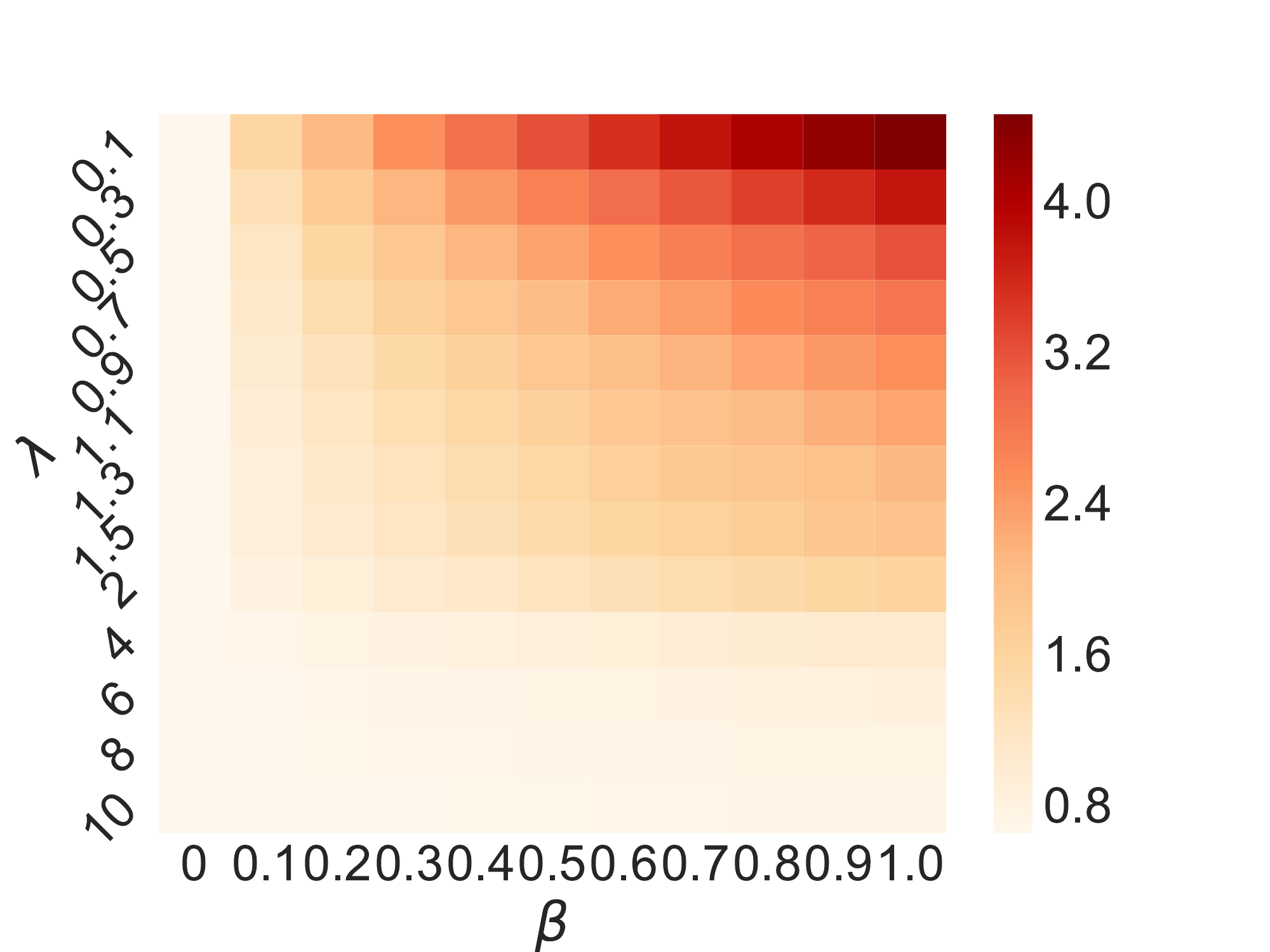} & \includegraphics[width=1.5in]{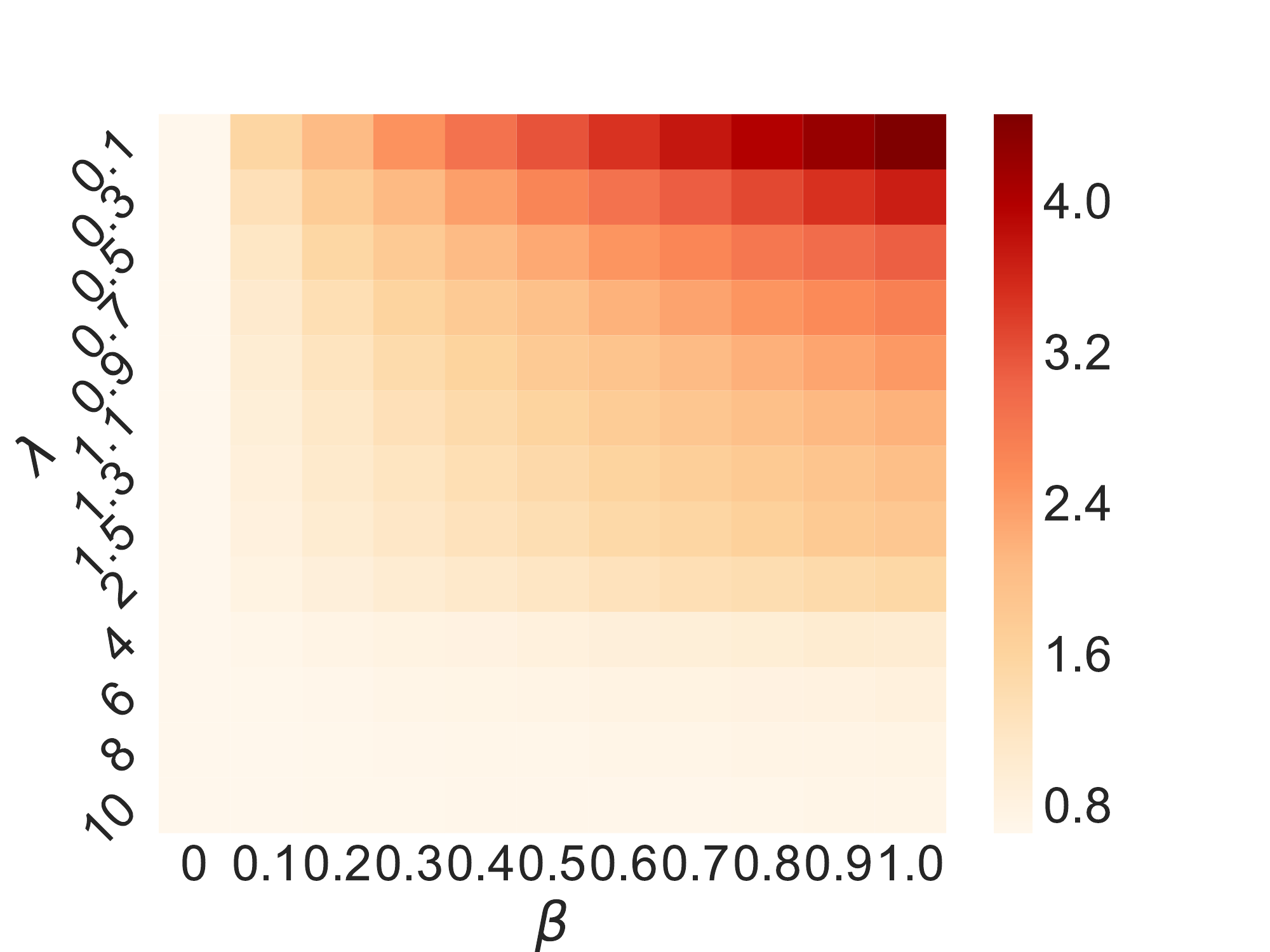}
	\end{tabular}
	\caption{Overestimated $\mathbf{z}$, $\hat{\lambda}=0.5$, $\hat{\beta}=0.8$.The average RMSE across different values of actual $\lambda$ and $\beta$ on redwine dataset. From left to right: \textit{MLSG}, \textit{Lasso}, \textit{Ridge}, \textit{OLS}. }
	\label{fig:redwine_overestimate}
\end{figure}

\begin{figure}[H]
	\begin{tabular}{cc}
		\includegraphics[width=1.5in]{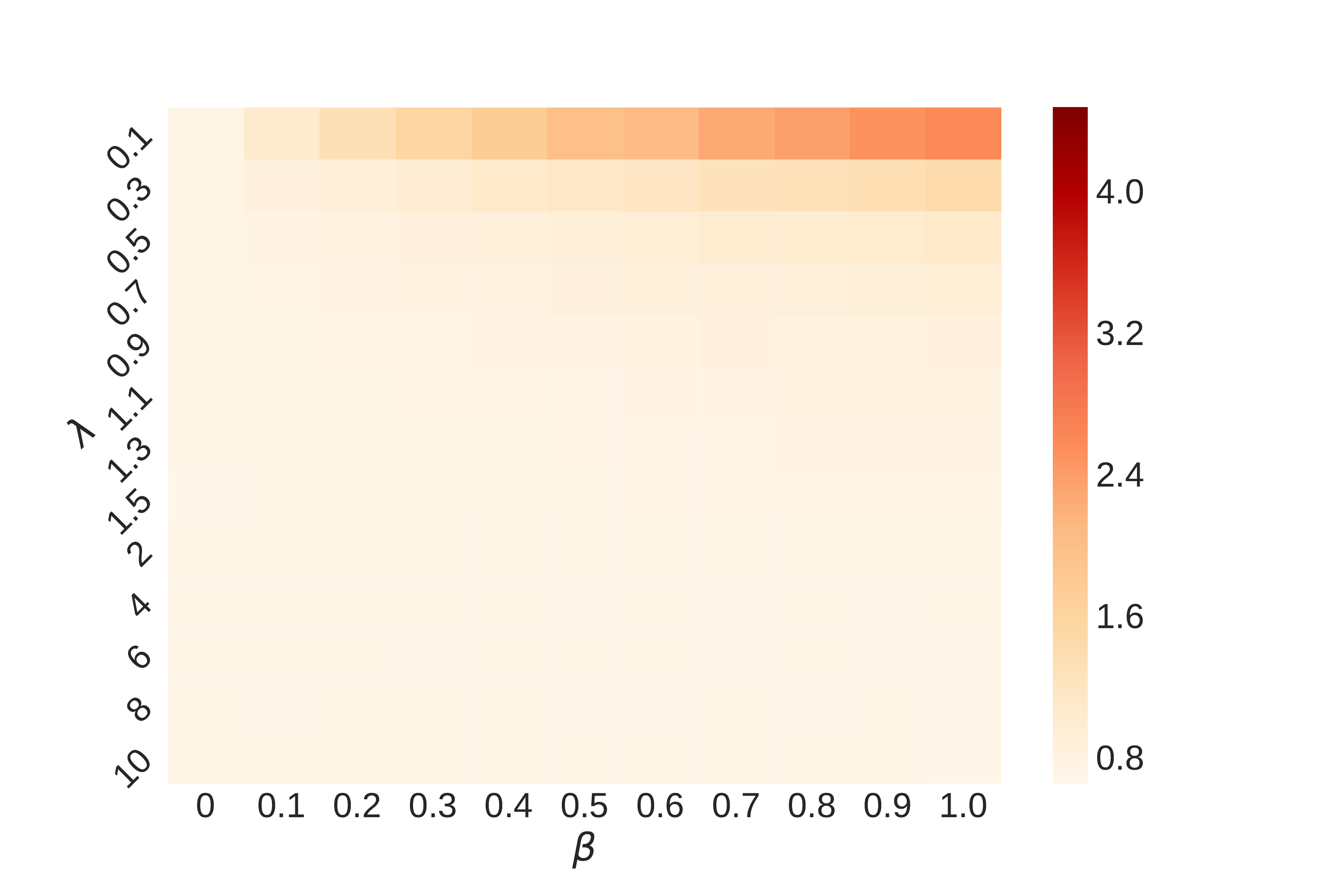} & \includegraphics[width=1.5in]{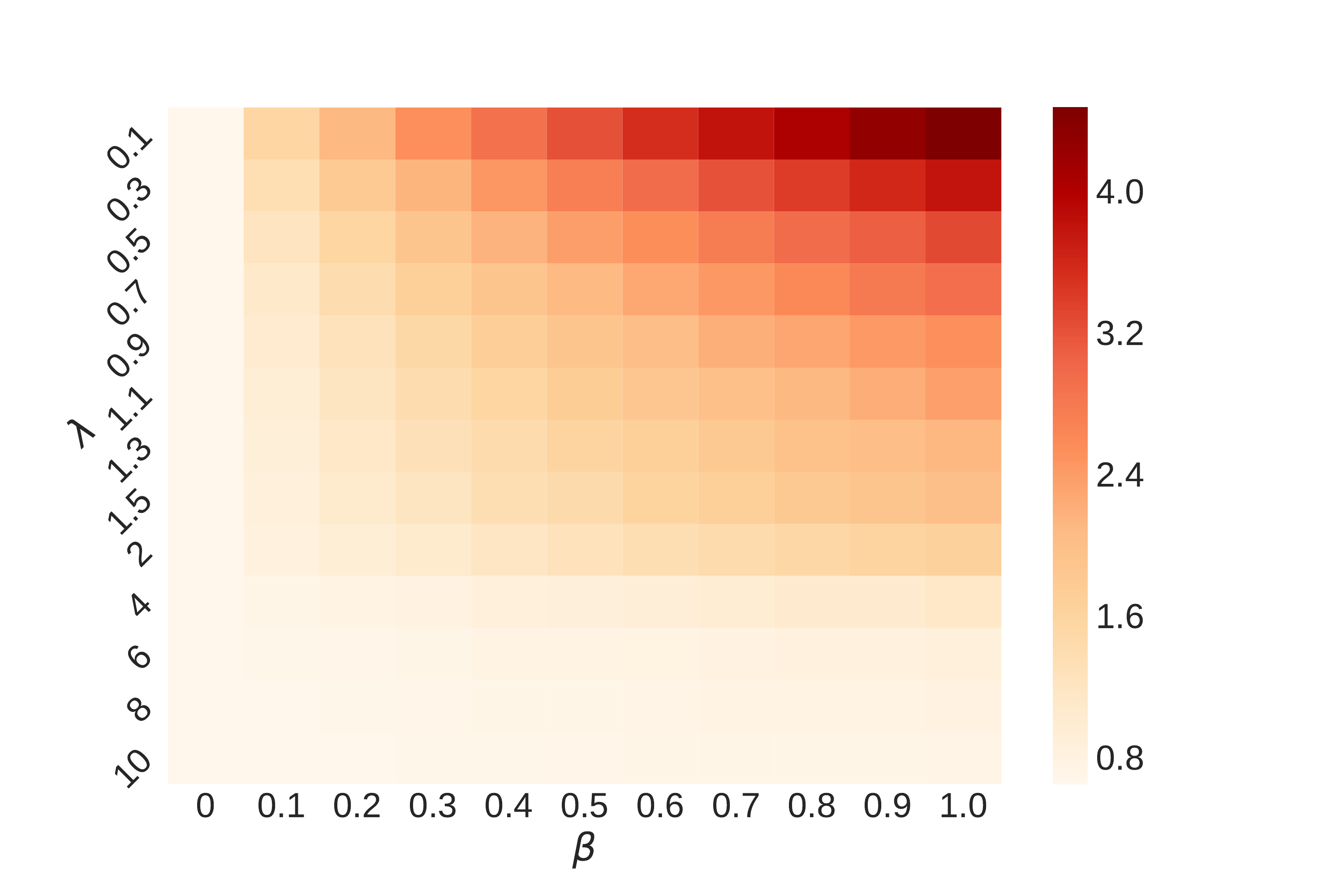}
	\end{tabular}
		\begin{tabular}{cc}
		\includegraphics[width=1.5in]{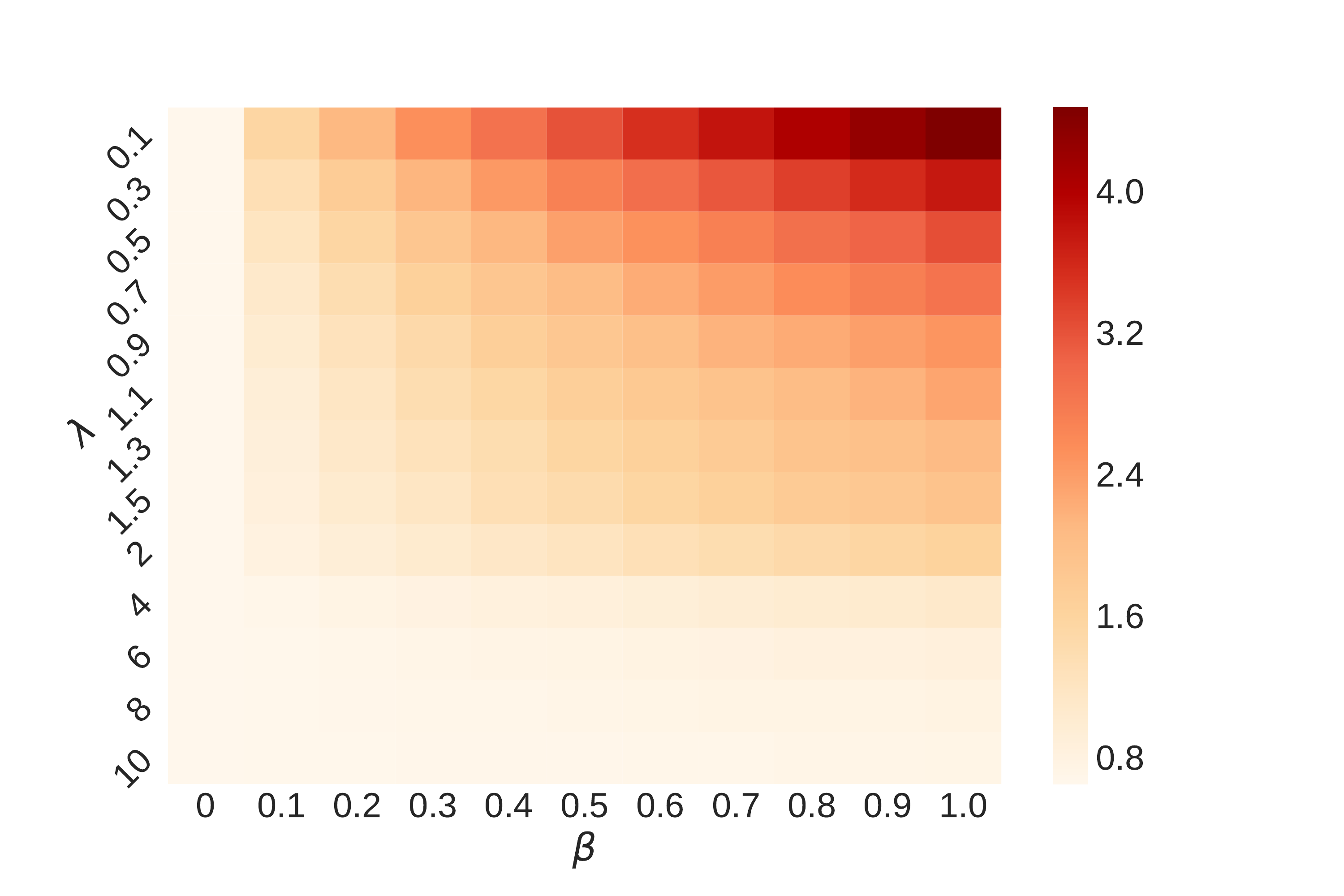} & \includegraphics[width=1.5in]{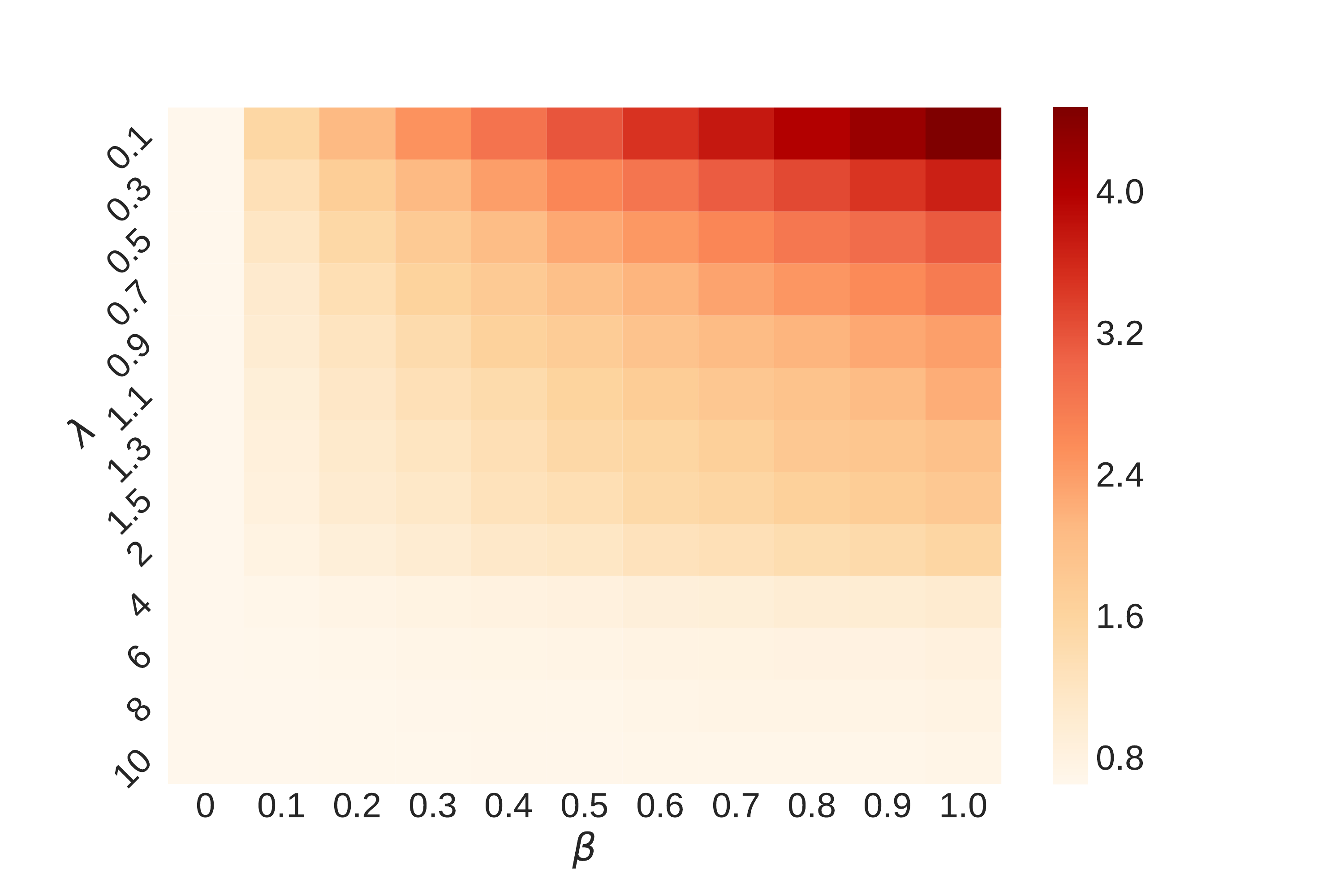}
	\end{tabular}
	\caption{Overestimated $\mathbf{z}$, $\hat{\lambda}=1.5$, $\hat{\beta}=0.8$. The average RMSE across different values of actual $\lambda$ and $\beta$ on redwine dataset. From left to right: \textit{MLSG}, \textit{Lasso}, \textit{Ridge}, \textit{OLS}. }
	\label{fig:redwine_overestimate}
\end{figure}

\begin{figure}[H]
	\begin{tabular}{cc}
		\includegraphics[width=1.5in]{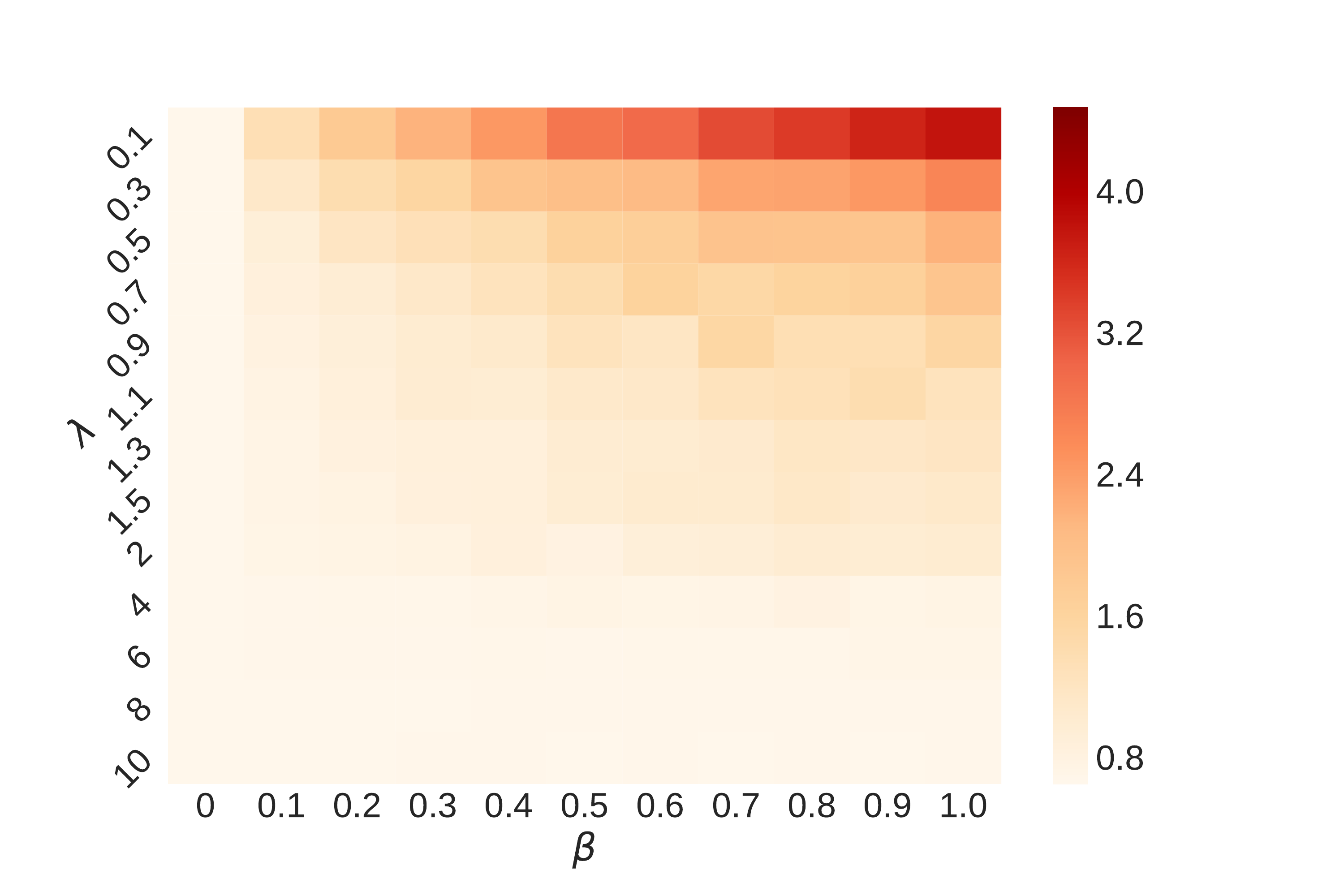} & \includegraphics[width=1.5in]{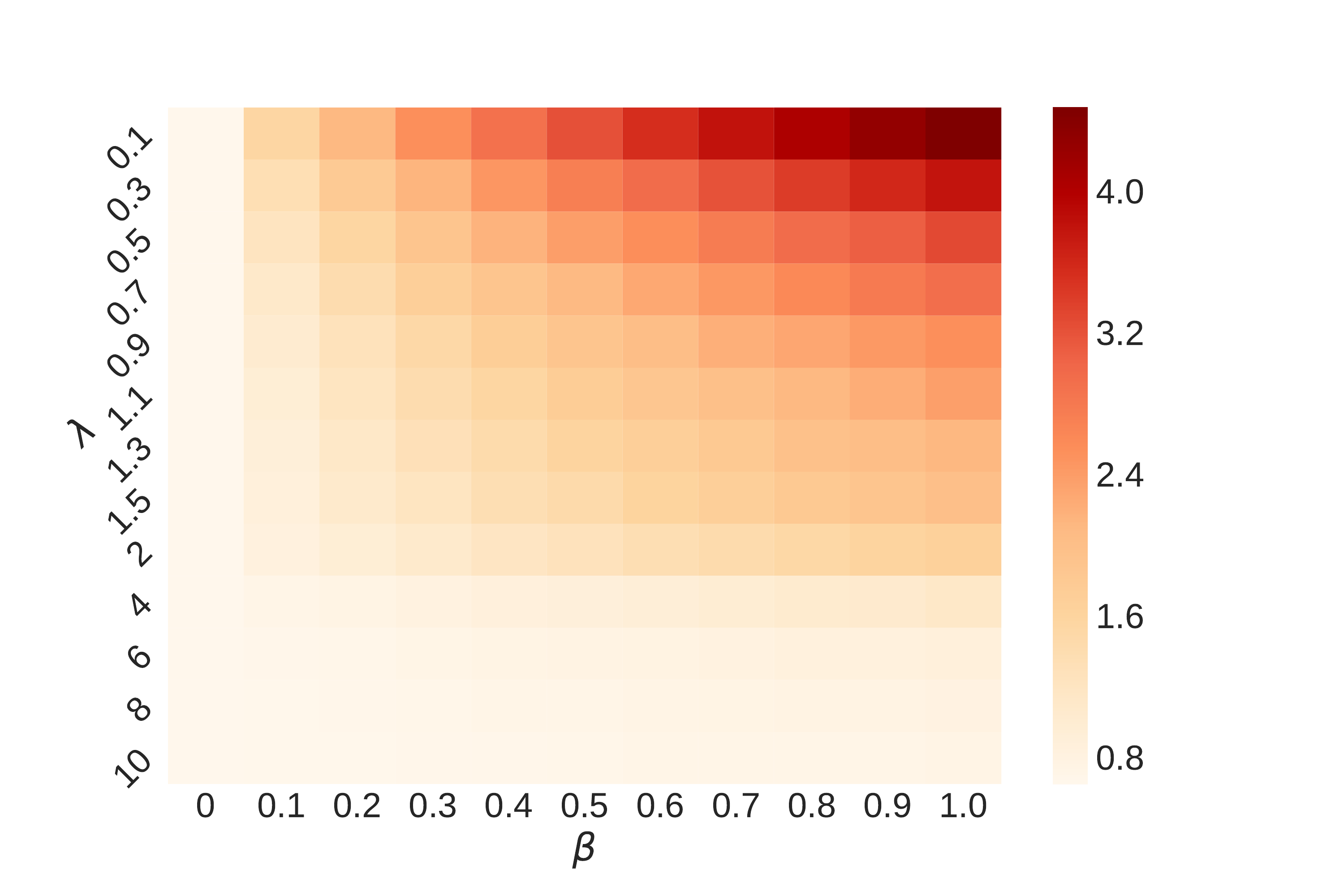}
	\end{tabular}
		\begin{tabular}{cc}
		\includegraphics[width=1.5in]{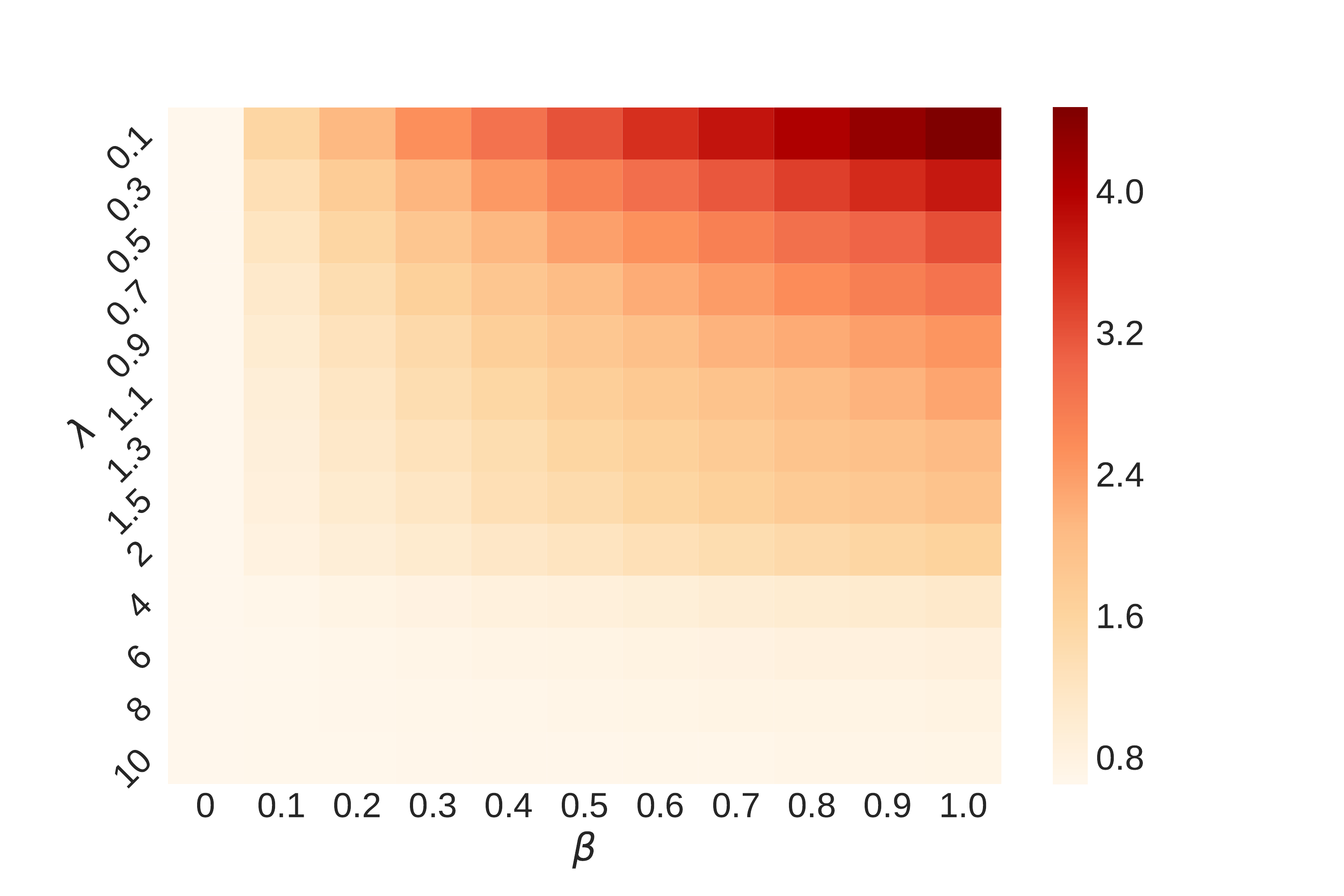} & \includegraphics[width=1.5in]{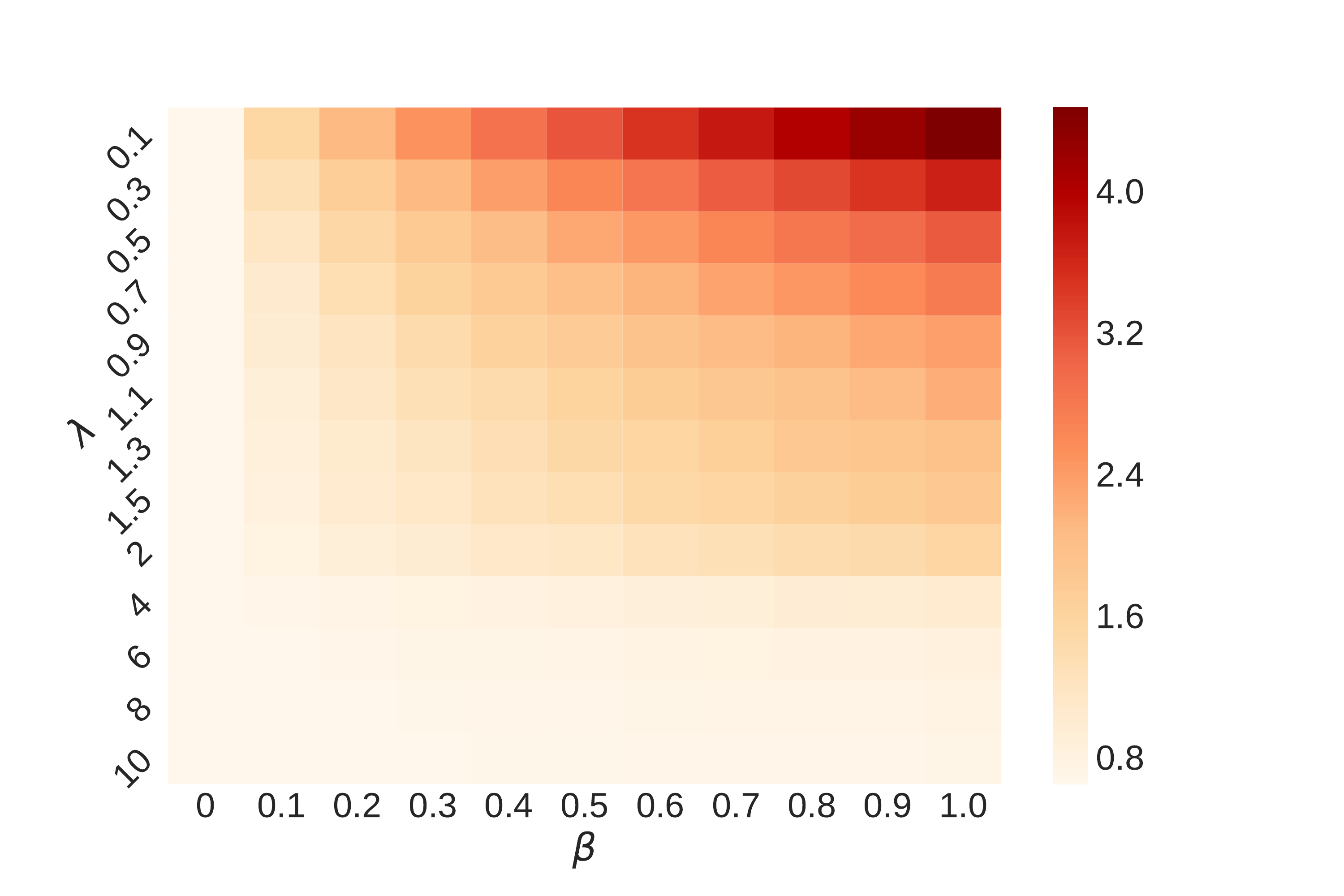}
	\end{tabular}
	\caption{Underestimated $\mathbf{z}$, $\hat{\lambda}=1.5$, $\hat{\beta}=0.8$. The average RMSE across different values of actual $\lambda$ and $\beta$ on redwine dataset. From left to right: \textit{MLSG}, \textit{Lasso}, \textit{Ridge}, \textit{OLS}. }
	\label{fig:redwine_underestimate}
\end{figure}

\subsection{Supplementary results for the boston dataset}
\begin{figure}[H]
	\begin{tabular}{cc}
		\includegraphics[width=1.6in]{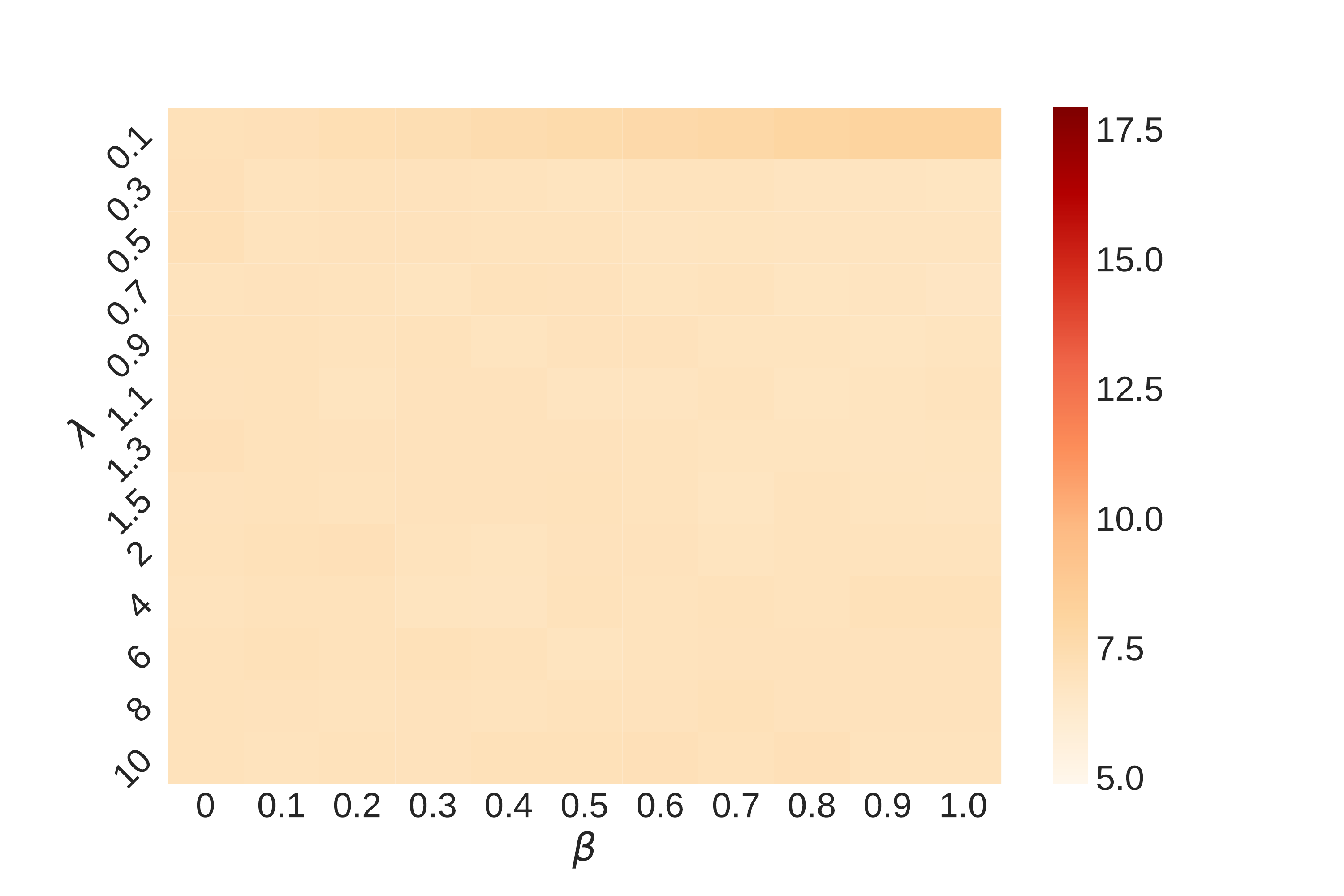} & \includegraphics[width=1.6in]{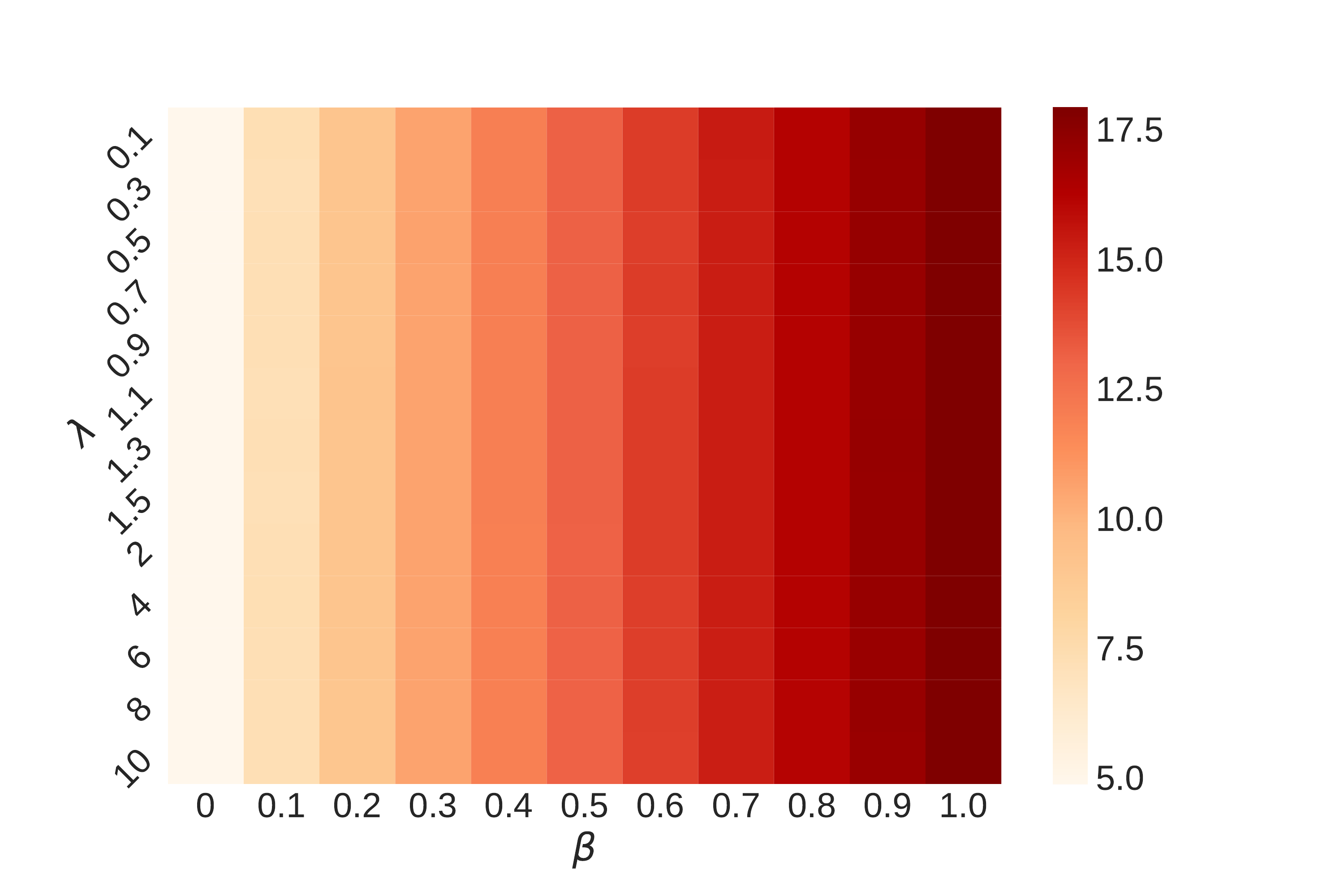}
	\end{tabular}
		\begin{tabular}{cc}
		\includegraphics[width=1.6in]{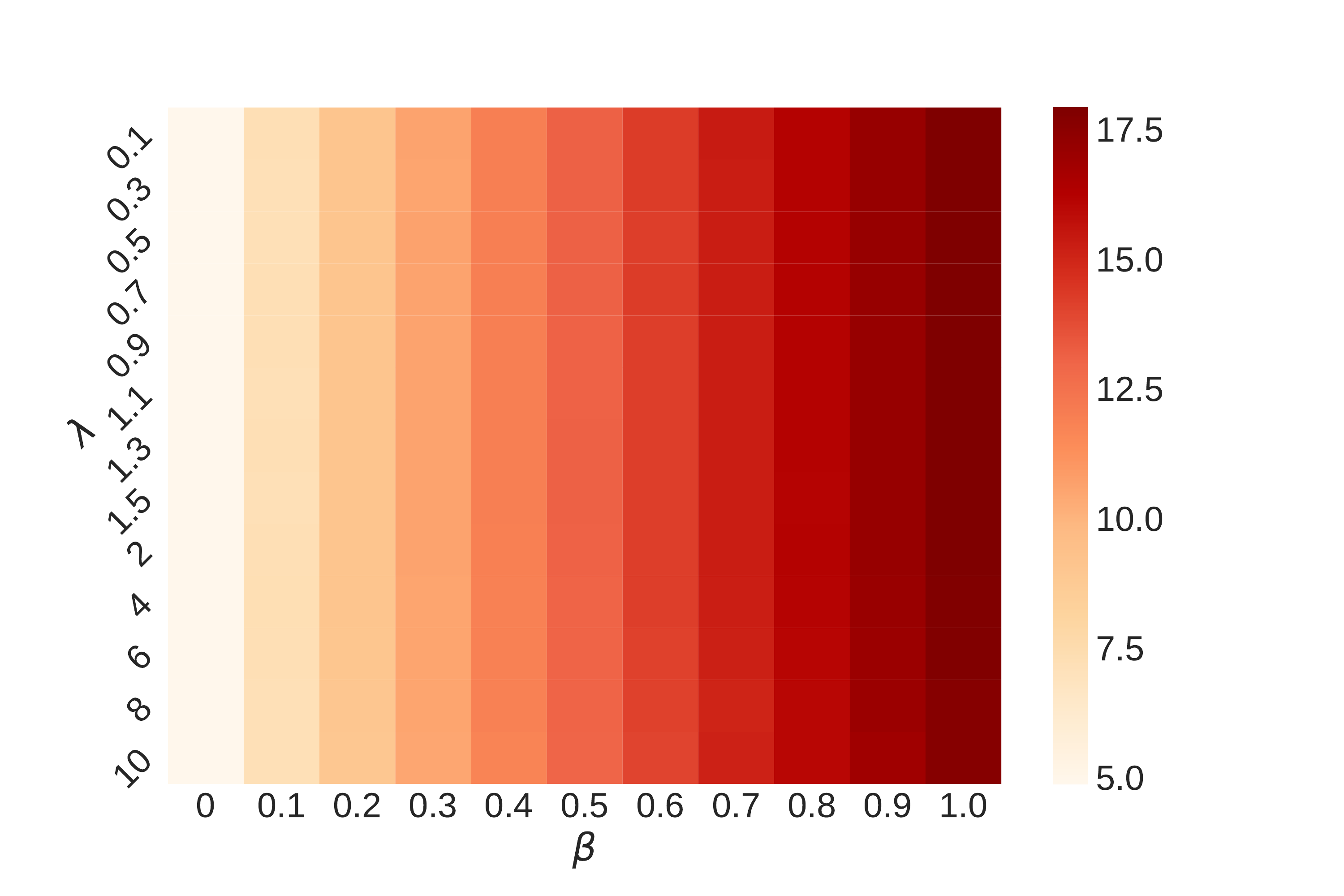} & \includegraphics[width=1.6in]{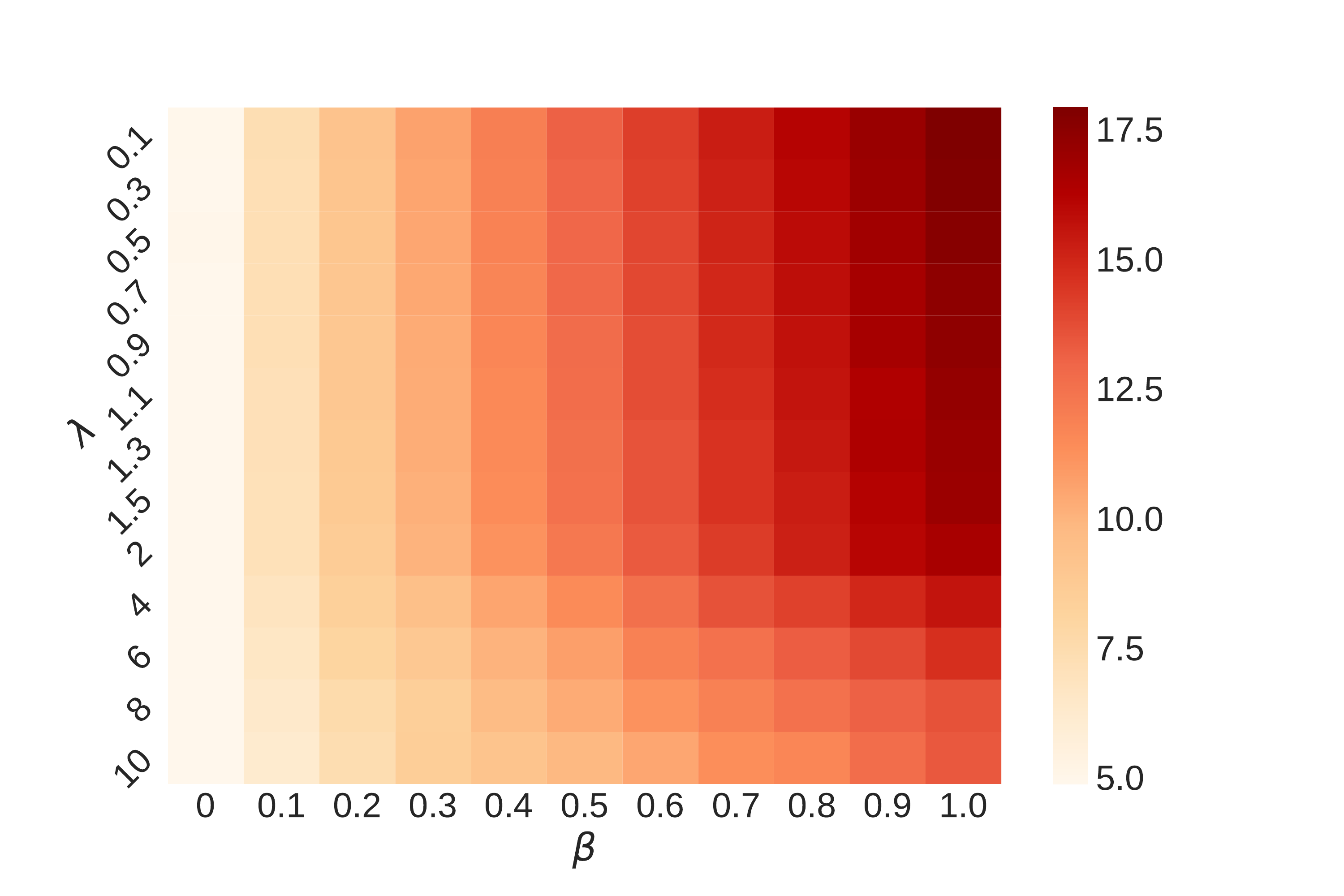}
	\end{tabular}
	\caption{Underestimated $\mathbf{z}$, $\hat{\lambda}=0.3$, $\hat{\beta}=0.8$. The average RMSE across different values of actual $\lambda$ and $\beta$ on boston dataset.  From left to right: \textit{MLSG}, \textit{Lasso}, \textit{Ridge}, \textit{OLS}.}
	\label{fig:boston_underestimate}
\end{figure}

\subsection{Supplementary results for the PDF dataset}

\begin{figure}[H]
	\begin{tabular}{cc}
		\includegraphics[width=1.6in]{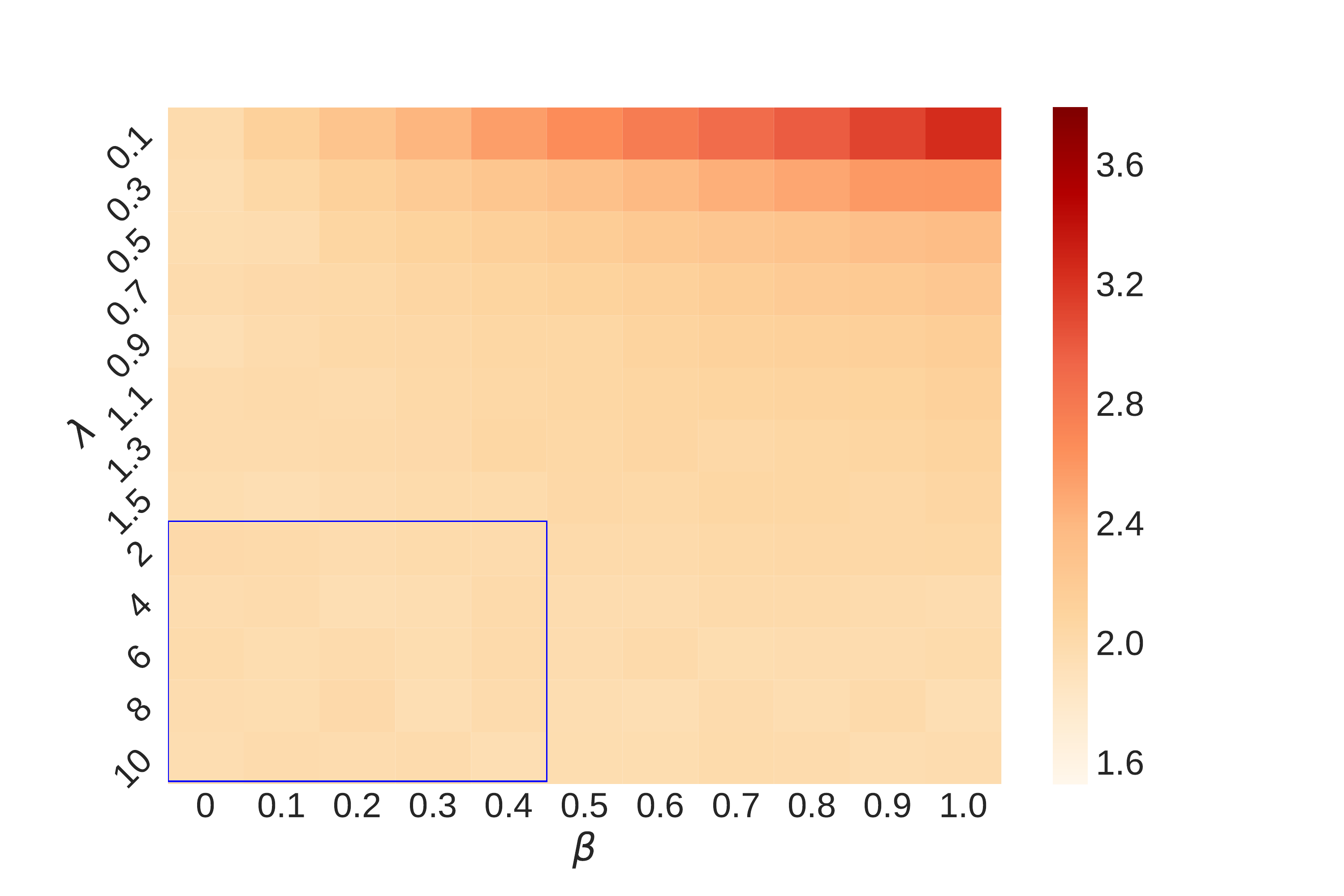} & \includegraphics[width=1.6in]{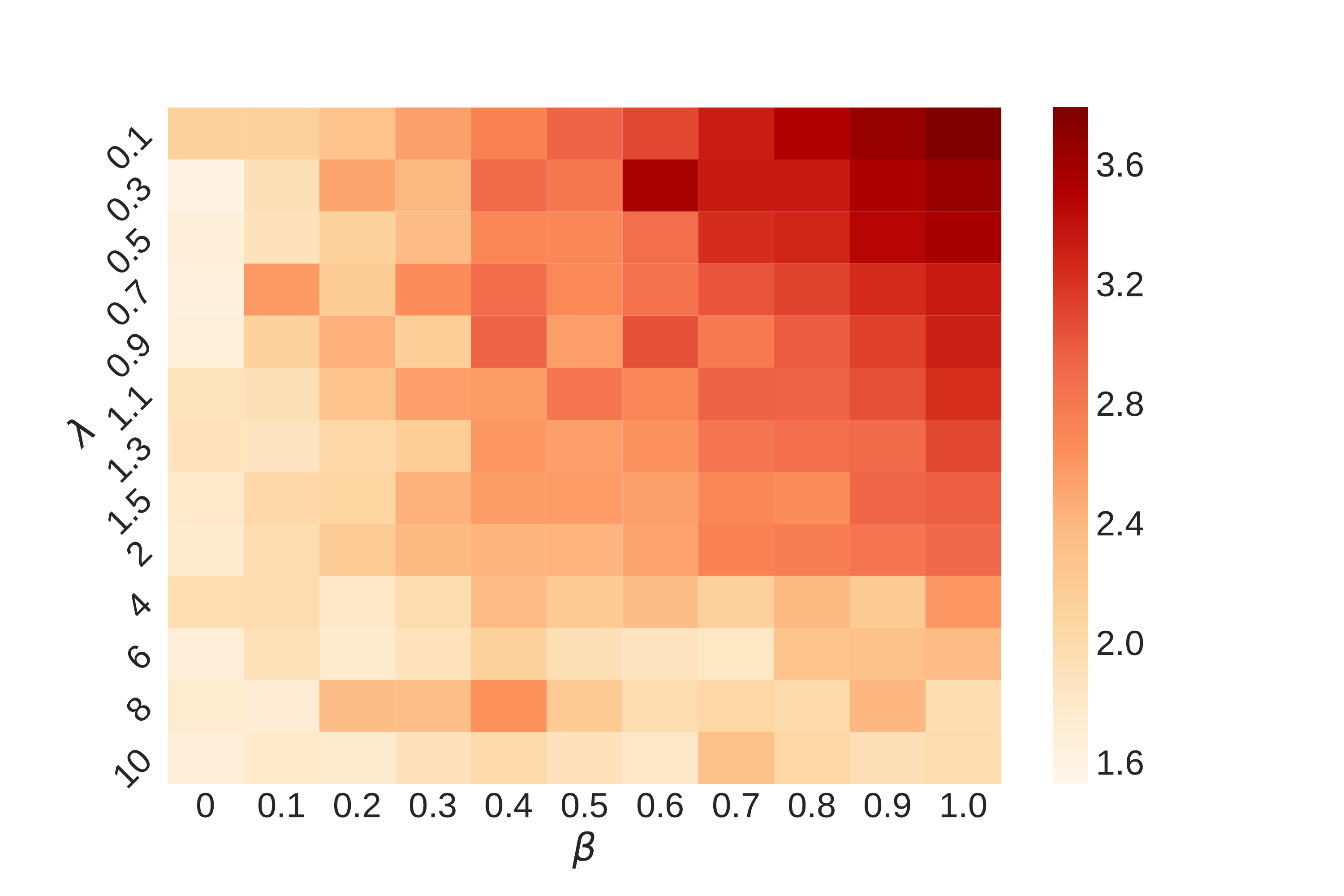}
	\end{tabular}
		\begin{tabular}{cc}
		\includegraphics[width=1.6in]{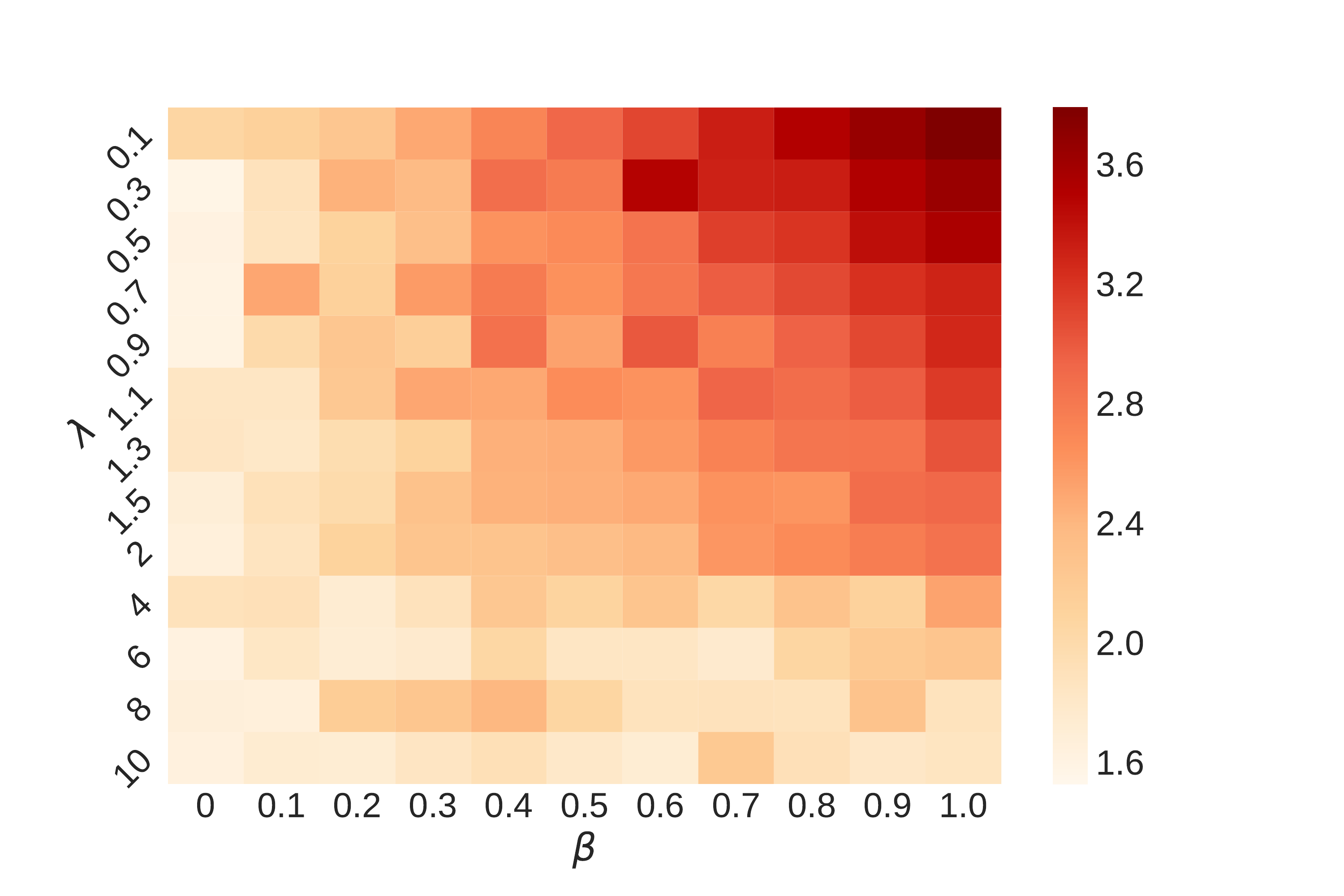} & \includegraphics[width=1.6in]{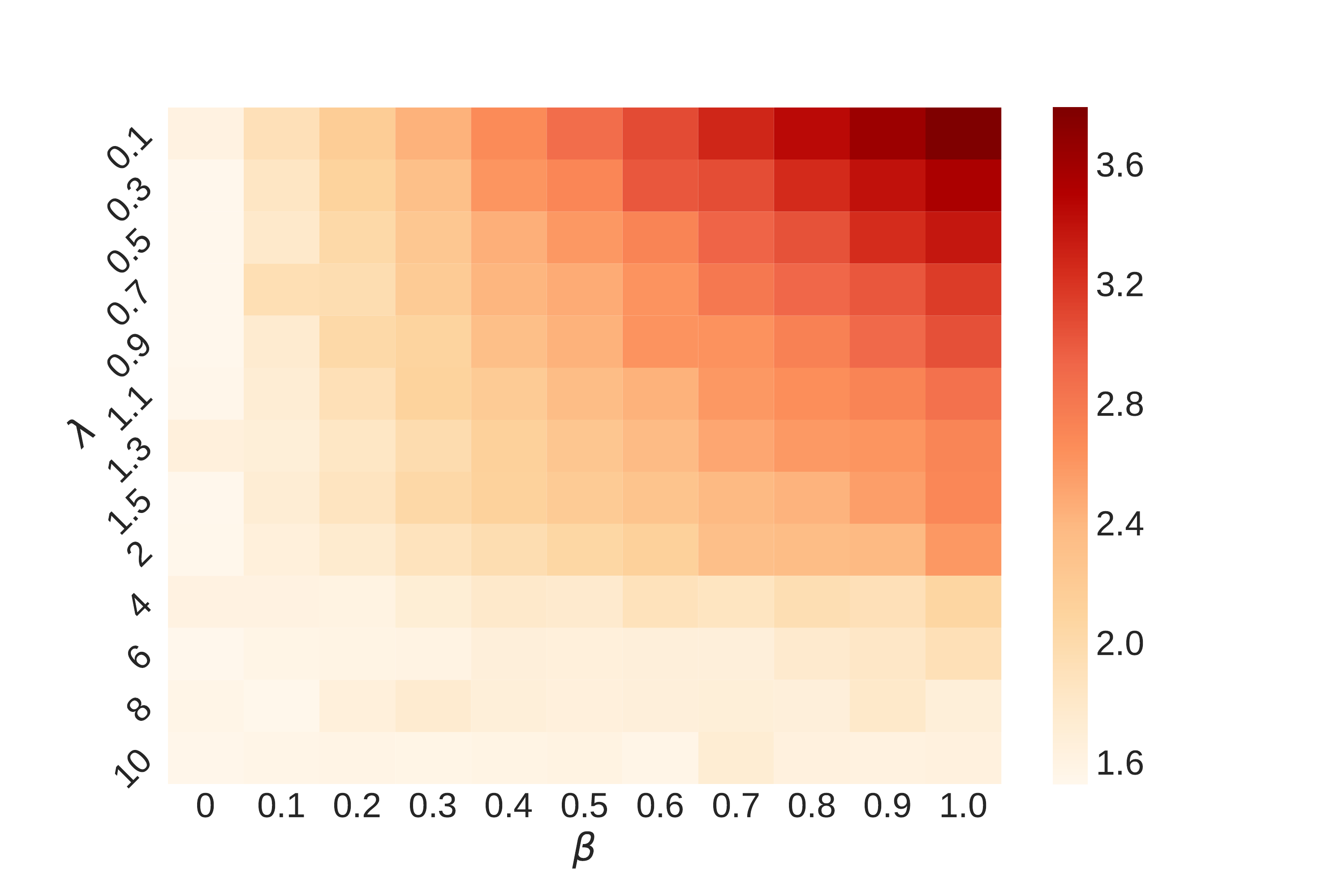}
	\end{tabular}
	\caption{Overestimated $\mathbf{z}$, $\hat{\lambda}=1.5$, $\hat{\beta}=0.5$. The average RMSE across different values of actual $\lambda$ and $\beta$ on PDF dataset.  From left to right: \textit{MLSG}, \textit{Lasso}, \textit{Ridge}, \textit{OLS}.}
	\label{fig:PDF_overestimate}
\end{figure}

\end{document}